\documentclass[]{fairmeta}
\usepackage[ruled,vlined,linesnumbered]{algorithm2e}
\DontPrintSemicolon
\usepackage{amssymb}
\usepackage{amsmath}
\usepackage{amsfonts}
\usepackage{mathtools}
\usepackage{amsthm}
\usepackage{enumitem}
\usepackage[usenames,dvipsnames]{xcolor}
\usepackage{hyperref}
\usepackage{thmtools}
\usepackage{fancyhdr}
\usepackage{color}
\usepackage{xspace}
\usepackage{graphicx}
\usepackage{mathrsfs}
\usepackage{titlesec}
\usepackage{titletoc}
\usepackage{booktabs}
\usepackage{subcaption}
\usepackage{listings}
\usepackage{longtable}
\usepackage{makecell}
\usepackage[most]{tcolorbox}

\definecolor{sectioncolor}{HTML}{19519a} 
\definecolor{light-gray}{HTML}{F5F5F5}

\usepackage{natbib}

  \DeclareMathOperator{\sign}{sign}

  \DeclareMathOperator{\trace}{tr}

  \newcommand{\transt}{\mkern-1.5mu\mathsf{T}}
  \newcommand{\trans}{^{\transt}}  

\graphicspath{{./pics/}}

\newcommand{\va}{\boldsymbol{a}}
\newcommand{\vb}{\boldsymbol{b}}

\newcommand{\vh}{\boldsymbol{h}}

\newcommand{\vk}{\boldsymbol{k}}

\newcommand{\vq}{\boldsymbol{q}}

\newcommand{\vs}{\boldsymbol{s}}

\newcommand{\vu}{\boldsymbol{u}}
\newcommand{\vv}{\boldsymbol{v}}

\newcommand{\vx}{\boldsymbol{x}}
\newcommand{\vy}{\boldsymbol{y}}
\newcommand{\vz}{\boldsymbol{z}}

\newcommand{\valpha}{\boldsymbol{\alpha}}

\newcommand{\vnu}{\boldsymbol{\nu}}

\newcommand{\mA}{\mathbf{A}}

\newcommand{\mE}{\mathbf{E}}

\newcommand{\mI}{\mathbf{I}}

\newcommand{\mP}{\mathbf{P}}

\newcommand{\mW}{\mathbf{W}}

\newcommand{\R}{\mathbb R}

\newcounter{ccnt}
\NewDocumentCommand{\nc}{m}{
  \refstepcounter{ccnt}\ensuremath{c_{\theccnt}}\IfValueT{#1}{\label{#1}}%
}

\newcounter{bigccnt}
\NewDocumentCommand{\nC}{m}{%
  \refstepcounter{bigccnt}\ensuremath{C_{\thebigccnt}}\IfValueT{#1}{\label{#1}}%
}

\newcommand\MTkillspecial[1]{
  \bgroup
  \catcode`\&=9
  \let\\\relax%
  \scantokens{#1}%
  \egroup
}
\newcommand{\DeclareCustomDelim}[3]{
  \DeclarePairedDelimiter{#1}{#2}{#3}
  \reDeclarePairedDelimiterInnerWrapper{#1}{star}{
    \mathopen{##1\vphantom{\MTkillspecial{##2}}\kern-\nulldelimiterspace\right.}
  ##2
  \mathclose{\left.\kern-\nulldelimiterspace\vphantom{\MTkillspecial{##2}}##3}
  }
}

\DeclareCustomDelim{\prn}{\lparen}{\rparen}
\DeclareCustomDelim{\crl}{\{}{\}}
\DeclareCustomDelim{\brk}{[}{]}
\DeclareCustomDelim{\norm}{\|}{\|}
\DeclareCustomDelim{\abs}{|}{|}
\DeclareCustomDelim{\angl}{\langle}{\rangle}

\newcommand{\Problet}{\ensuremath\mathbb{P}}
\newcommand{\Expectlet}{\ensuremath\mathbb{E}}

\newcommand{\Varlet}{\mathbb{V}}

\newcommand*\makeAlph[1]{\symbol{\numexpr96+#1}}
\NewDocumentCommand{\labrel}{m o }{%
  \IfNoValueTF{#2}{\makeAlph{#1}}{\stackrel{\mathrm{(\makeAlph{#1})}}{#2}}%
}
\NewDocumentCommand{\eqrefpt}{m m}{(\ref{#1}\makeAlph{#2})}

\let\Prob\undefined
\DeclarePairedDelimiterXPP\Prob[1]{\Problet}\{\}{}{
  
  #1}
\DeclarePairedDelimiterXPP\Expect[1]{\Expectlet}[]{}{
  
  #1}
\DeclarePairedDelimiterXPP\Var[1]{\Varlet}[]{}{
  
  #1}
\DeclarePairedDelimiterXPP\Esub[2]{\Expectlet_{#1}}[]{}{
  
  #2}
\DeclarePairedDelimiterXPP\Varsub[2]{\Varlet_{#1}}[]{}{
  
  #2}

\newcommand{\range}[2]{[#1:#2]}

\newboolean{isInternal}
\setboolean{isInternal}{false}
\ifthenelse{\boolean{isInternal}}{
  \newcommand{\internalComment}[1]{\textbf{\color{red}[}#1\textbf{\color{red}]}}
}{
  \newcommand{\internalComment}[1]{}
}

\NewDocumentCommand{\py}{+v}{}

\newsavebox{\pycodebox}      

\NewDocumentEnvironment{pycode}{}{%
  \VerbatimEnvironment%
  \begin{lrbox}{\pycodebox}%
    \begin{minipage}{\linewidth}%
      \begin{Verbatim}%
      }{
      \end{Verbatim}%
    \end{minipage}%
  \end{lrbox}%
}

\NewDocumentEnvironment{pycontext}{} {
  \VerbatimEnvironment
  \begin{lrbox}{\pycodebox}%
    \begin{minipage}{\linewidth}%
      \begin{Verbatim}
      }{
      \end{Verbatim}
    \end{minipage}
  \end{lrbox}
}

\newtcolorbox{roundnote}[1][]{
  enhanced,
  breakable,
  colback=metabg,
  colframe=sectioncolor,
  boxrule=0.5pt,
  arc=3mm,
  left=6pt,right=6pt,top=6pt,bottom=6pt,
  before upper=\setlength{\parskip}{\medskipamount},
  fonttitle=\bfseries,
  colbacktitle=sectioncolor!15,
  coltitle=black,
  #1
}

\declaretheoremstyle[
headfont=\bfseries,
spaceabove=\topsep,
spacebelow=\topsep,
bodyfont=\slshape,
]{plain}

\newcounter{theoremcnt}[section]

\declaretheorem[
style=plain,
sibling=theoremcnt,
name=Definition,
Refname={Definition,Definitions},
]{definition}

\declaretheorem[
style=plain,
sibling=theoremcnt,
name=Assumption,
Refname={Assumption,Assumptions},
]{assumption}

\declaretheorem[
style=plain,
name=Desideratum,
Refname={Desideratum,Desiderata},
]{desideratum}

\declaretheorem[
style=plain,
sibling=theoremcnt,
name=Lemma,
Refname={Lemma,Lemmas},
]{lemma}

\declaretheorem[
style=remark,
sibling=theoremcnt,
name=Remark,
Refname={Remark,Remarks},
]{remark}

\declaretheorem[
style=remark,
numbered=no,
name=Remark,
Refname={Remark,Remarks},
]{remark*}

\declaretheoremstyle[
headfont=\itshape,
sibling=subsection,
spaceabove=\medskipamount,
spacebelow=\medskipamount,
]{remark}

\definecolor{mplblue}{HTML}{1f77b4}
\definecolor{mplorange}{HTML}{ff7f0e}
\definecolor{mplgreen}{HTML}{2ca02c}
\definecolor{mplred}{HTML}{d62728}
\definecolor{mplpurple}{HTML}{9467bd}
\definecolor{mplbrown}{HTML}{8c564b}
\definecolor{mplpink}{HTML}{e377c2}
\definecolor{mplgray}{HTML}{7f7f7f}
\definecolor{mplolive}{HTML}{bcbd22}
\definecolor{mplcyan}{HTML}{17becf}

\newcommand{\dmodel}{d_{\text{model}}}
\newcommand{\din}{d_\text{in}}
\newcommand{\dout}{d_\text{out}}
\newcommand{\depth}{\ensuremath n_{\text{layers}}}
\newcommand{\depthcor}{\ensuremath m_{\text{depth}}}
\newcommand{\depthalph}{\ensuremath \alpha_{\text{depth}}}
\newcommand{\width}{\ensuremath d_{\text{model}}}
\newcommand{\widthcor}{\ensuremath m_{\text{width}}}
\newcommand{\datacor}{\ensuremath m_{\text{data}}}
\newcommand{\dataalph}{\ensuremath \alpha_{\text{data}}}
\newcommand{\nheads}{\ensuremath n_\text{heads}}

\newcommand{\mup}{$\mu$P\xspace}

\allowdisplaybreaks[4]

\title{Learning Rate Transfer in Normalized Transformers}

\author[1,2,*]{Boris Shigida}
\author[2]{Boris Hanin}
\author[1]{Andrey Gromov}

\affiliation[1]{Meta Superintelligence Labs}
\affiliation[2]{Princeton University}

\contribution[*]{work done during an internship at Meta}

\abstract{
The Normalized Transformer, or nGPT \citep{loshchilov2025ngpt}  achieves impressive training speedups and does not require weight decay or learning rate warmup. However, despite having hyperparameters that explicitly scale with model size, we observe that nGPT does not exhibit learning rate transfer across model dimension and token horizon. To rectify this, we combine numerical experiments with a principled use of alignment exponents \citep{everett2024scalingexponentsacross} to revisit and modify the $\mu$P approach to hyperparameter transfer \citep{pmlr-v139-yang21c}. The result is a novel nGPT parameterization we call $\nu$GPT. Through extensive empirical validation, we find $\nu$GPT exhibits learning rate transfer across width, depth, and token horizon.
}

\date{\today}
\correspondence{\email{bs1624@princeton.edu}}

\begin{document}

\maketitle

\section{Introduction}\label{sec:intro}

Neural network performance is sensitive to a range of optimization hyperparameters (HPs) including initialization scheme, learning rate, batch size, weight decay, and compute budget.
Since direct search for good HPs at large scale is impractical
it is useful in practice to extrapolate from HP sweeps in small models
trained on limited token horizons to performant HPs at much larger scale, using HP transfer techniques.
Our goal in this article is to extend and refine such techniques to the important setting of Normalized Transformers or nGPT
\citep{loshchilov2025ngpt}.
These models constrain weight and activation norms by extensive use of normalization (see \cref{sec:ngpt-def}).
Along with trainable scale parameters that ensure no loss in expressivity,
this removes the need for weight decay and learning rate warmup while achieving impressive performance and training speedups.
Using a mix of principled heuristics and empirical scaling law fits, we propose a novel parameterization we call $\nu$GPT (\cref{sec:mungpt} and \cref{tab:our-param}),
for transferring HPs across model depth, width, and token horizon.

Specifically:
\begin{itemize}
\item Inspired by the theoretical framework in \cite{everett2024scalingexponentsacross},
  we check empirically that weight-activation alignments in nGPT do not satisfy the  hypotheses that underlie typical $\mu$P-type parameterizations.
  Instead, starting with an empirically supported mid alignment assumption (in which weights and activations are partially but not completely aligned),
  we derive and validate (\cref{fig:width-fixed-steps}) a novel prescription for transferring learning rates when scaling model width.
  We find that our parametrization transfers over width somewhat better than $\mu$P (\cref{fig:ours-vs-mup-width}).

\item Adopting a heuristic from \cite{bordelon2024infinite,dey2025dont} for HP transfer across depth,
  we show that $\nu$GPT allows for transfer over depth and high stability of deep models (\cref{fig:depth-sweeps}).

\item Finally, we find that the optimal learning rate in nGPT scales across token horizon like $\#\text{tokens}^{-1/3}$, consistent with measurements in \cite{bjorck2025scaling} for un-normalized Transformers.

\item As a result, our experiments show that $\nu$GPT obtains HP transfer across depth, width, and token horizon, with no loss in performance compared to the original (well-tuned) nGPT baseline (\cref{fig:width-sweep-ngpt-ours-with-token-corr}).
\end{itemize}

\begin{figure}[t]
  \centering

  \begin{tabular}{c c}
    \begin{subfigure}[h]{0.48\textwidth}
      \includegraphics[width=\linewidth]{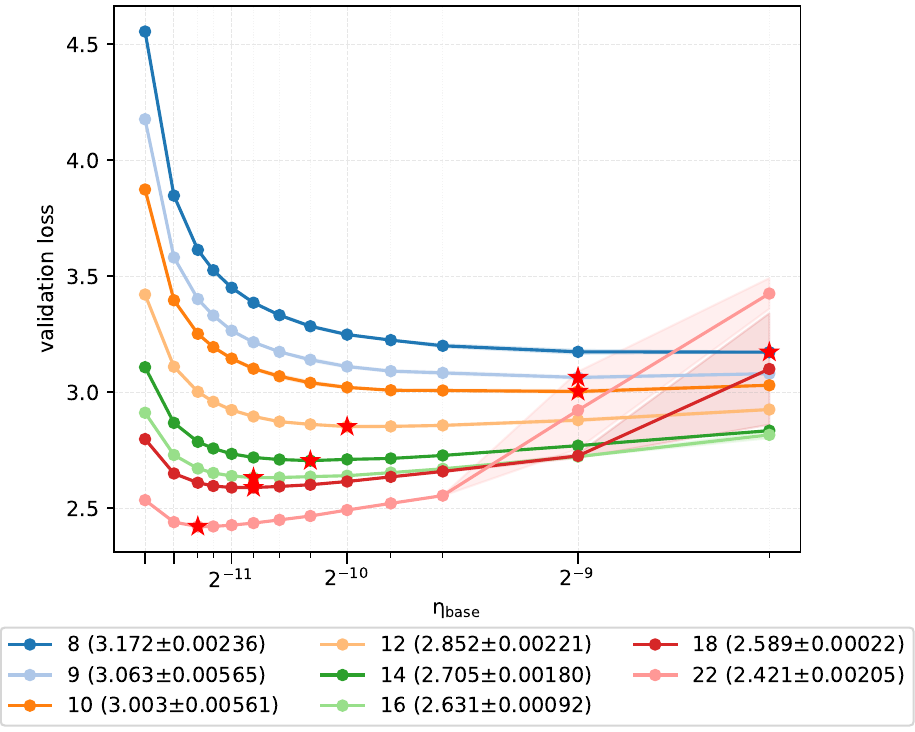}
      \caption{nGPT (baseline)\label{fig:width-sweep-ngpt-ours-tpp-baseline}}
    \end{subfigure}
    &
      \begin{subfigure}[h]{0.48\textwidth}
        \includegraphics[width=\linewidth]{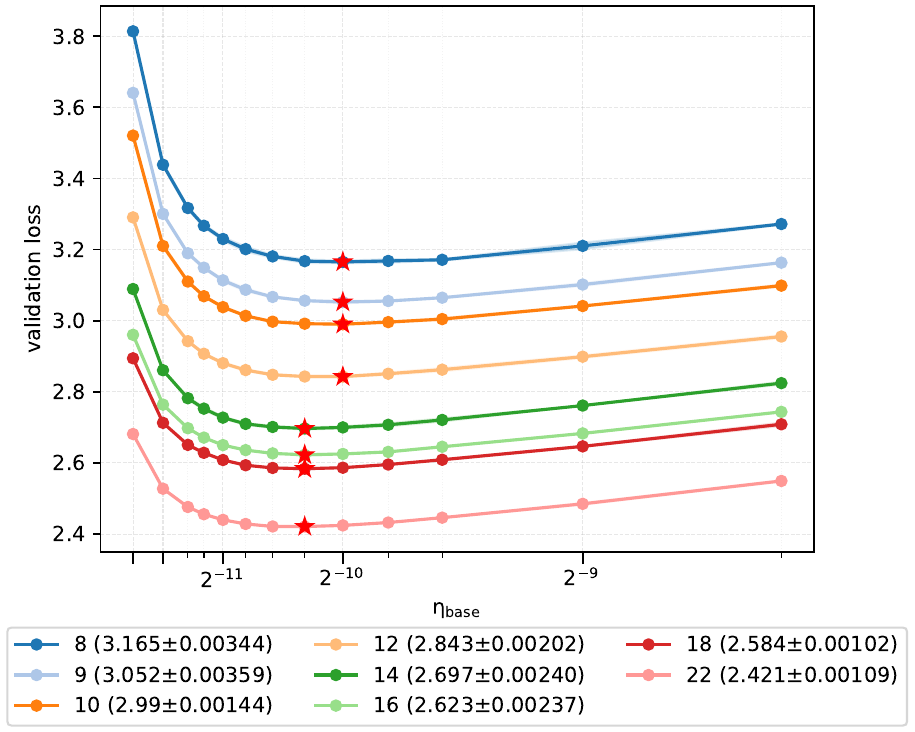}
        \caption{$\nu$GPT (ours)\internalComment{with $\eta_{\text{output}}$ multiplied by $2^{-1}$ and mid-alignment.}\label{fig:width-sweep-ngpt-ours-tpp-mup-tuned-midalign}}
      \end{subfigure}
  \end{tabular}

  \caption{Different model sizes at \textbf{fixed aspect ratio} $\depth = \nheads$ (in the legend, with the best loss in parentheses) and ``compute-optimal'' 20 tokens per parameter.
    The baseline nGPT does not show learning rate transfer.
    Our parametrization $\nu$GPT does, and performs no worse or slightly better.
    The points are averaged over three initialization seeds and no validation loss EMA is used.
  }
  \label{fig:width-sweep-ngpt-ours-with-token-corr}
\end{figure}

\begin{figure}[t]
  \centering
  \begin{tabular}{cc}
    \begin{subfigure}[h]{0.48\textwidth}
      \includegraphics[width=\linewidth]{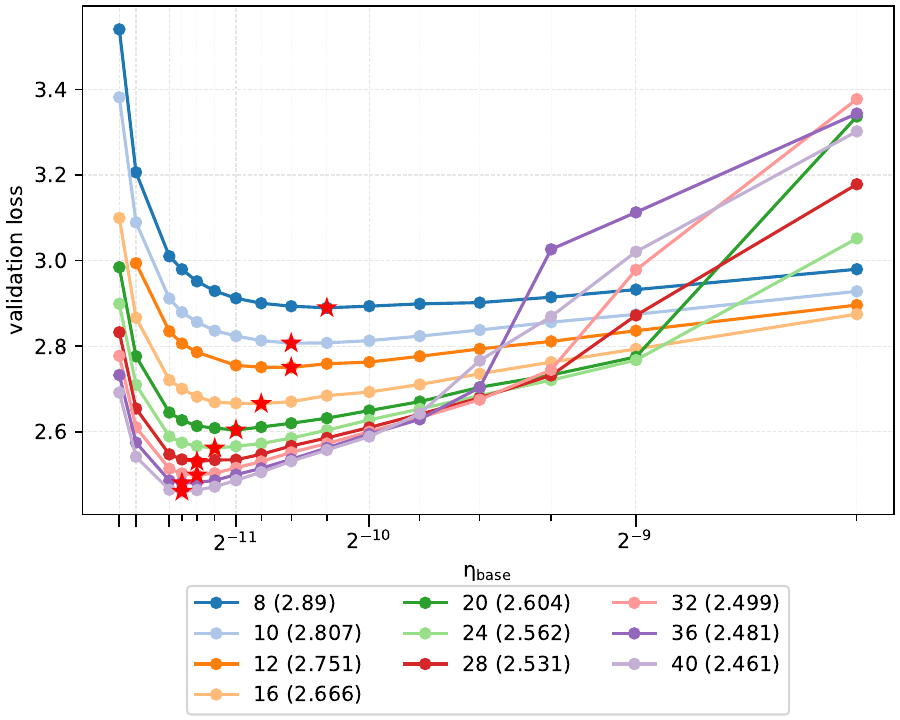}
      \caption{nGPT (baseline)}
      \label{fig:width-fixed-steps-baseline}
    \end{subfigure}
    &
    \begin{subfigure}[h]{0.48\textwidth} \includegraphics[width=\linewidth]{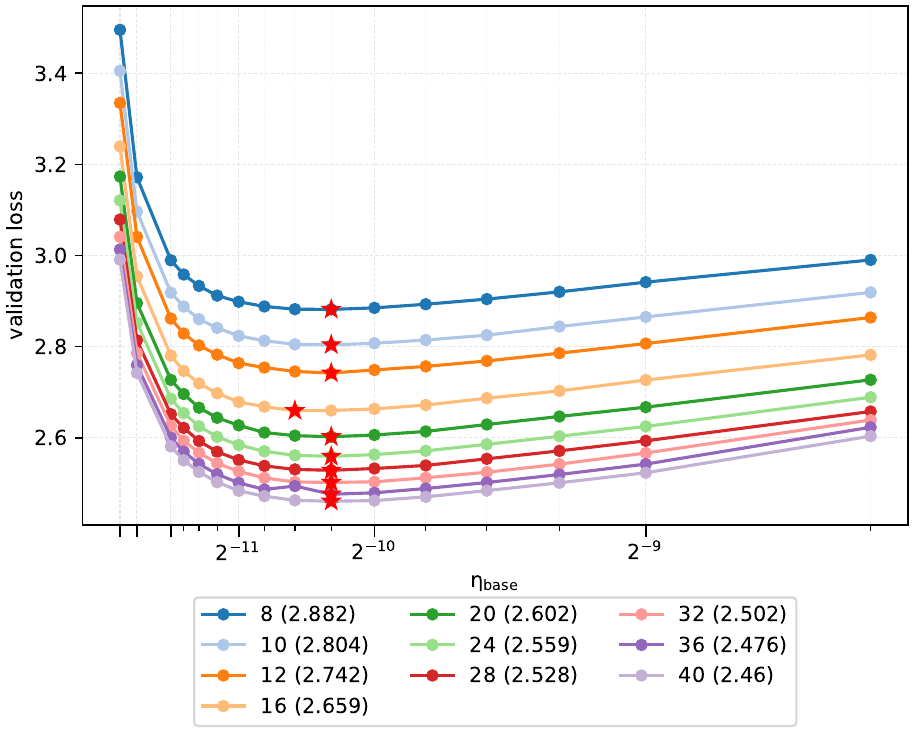}
      \caption{$\nu$GPT (ours)\internalComment{(mid align) with $\eta_{\text{output}}$ multiplied by $2^{-1}$}}
      \label{fig:width-fixed-steps-mupsamescalelogits-tuned-midalign}
    \end{subfigure}
  \end{tabular}

  \caption{
    \textbf{Width} sweeps with $\nheads$ increasing (in the legend, with the best loss in parentheses), head dimension and $\depth = 12$ fixed, 80\,000 iterations (about 21\,B tokens).
    Baseline nGPT does not show transfer, our parametrization $\nu$GPT shows essentially perfect transfer with no loss of performance.
  }
  \label{fig:width-fixed-steps}
\end{figure}

\subsection{Related work}\label{sec:related}

Early approaches to scaling the width (hidden dimension) of neural networks identified ``lazy'' infinite-width limits in which
the network becomes close to its linearization around initialization \citep{NEURIPS2018_5a4be1fa,NEURIPS2019_dbc4d84b,NEURIPS2019_ae614c55},
and in particular there is no feature learning.
It was then discovered that non-lazy limits exist
\citep{mei2018mean,NEURIPS2018_196f5641,sirignano2020mfplawlarge,nguyen2023rigorous}.
The Tensor Programs framework
\citep{yang2019widefeedforwardrecurrent,yang2020tensorprogramsiineural,yang2021tensorprogramsiib,yang2021tensorprogramsiiineural} allowed \citet{pmlr-v139-yang21c}
to classify large-width limits using a natural class they call ``abc-Parametrizations''
and describe a parametrization (stable, non-lazy) with weights updated as much as possible without blowup,
which they called the Maximal Update Parametrization (\mup).
This limit was studied with dynamical mean field theory in \citet{NEURIPS2022_d027a5c9}.
A simple theoretical perspective deriving \mup using spectral norms is provided in
\citet{yang2024spectralconditionfeature}.
It was empirically observed that hyperparameters such as the learning rate and initialization variance transfer
across width when using \mup
\citep{NEURIPS2021_8df7c2e3,lingle2025empiricalstudymu,vlassis2025a};
rigorous theoretical understanding of this phenomenon
is a subject of recent work \citep{hayou2025prooflearningrate}.
\Citet{everett2024scalingexponentsacross} conducted a large-scale empirical evaluation of learning rate transfer over width,
and questioned matrix-vector alignment assumptions in \mup leading to another non-equivalent parametrization that is not only non-trivial
but is not worse than
\mup in practice.
Relatedly,
\citet{kosson2026weight}
argue that such alignment assumptions of \mup are incorrect during training,
and correct alignment can be predicted theoretically with high precision if weight decay is used.
\Citet{blake2024up} combine \mup with Unit Scaling \citep{blake2023unitscalingout} ensuring that activations, weights and gradients have scale one at initialization.

Large width and depth limits for residual networks were identified using the Tensor Programs framework in \citet{yang2024tensor},
using dynamical mean field theory in \citet{bordelon2024depthwise}, with the latter later extended to Transformers in \citet{NEURIPS2024_3eff068e}.
In that literature, residual networks have the form $\vh^{\ell + 1} = \vh^{\ell} + n_{\text{layers}}^{-\depthalph} \mathcal{F}_{\ell}(\vh^{\ell})$, where $\mathcal{F}_{\ell}$ is the $\ell$th block,
and where $\depthalph$ is the exponent that controls the magnitude of the contribution from the new block $n_{\text{layers}}^{-\depthalph} \mathcal{F}_{\ell}(\vh^{\ell})$ relative to the previous
hidden state $\vh^{\ell}$; it can be shown that absent techniques like Post-LN, $\depthalph$ has to be between
$0.5$ and $1$ to avoid either blowup or triviality of the forward pass at large depths.
When using parametrizations that admit large width and depth limits, hyperparameters like the learning rate and initialization variance
may or may not transfer over depth (at small widths as well); there is disagreement about this issue in the literature.
\Citet{yang2024tensor} argue that $\depthalph = 1 / 2$ is optimal because it admits ``feature diversity'',
but is defective if residual blocks themselves have depth more than $1$ (where there is weight matrix multiplication inside of $\mathcal{F}_{\ell}$), the reason being that whenever $\depthalph \in [1 / 2, 1)$, the network becomes linearized in the weights of each layer.
In particular,
they do not identify transfer over depth for such realistic block structures.
\Citet{dey2025dont} observe transfer over depth for $\depthalph = 1$ but not $1 / 2$, and argue that it is because
$\depthalph = 1$ is the only value where the network is \textit{not} linearized in the weights of each layer.
\Citet{mlodozeniec2025completedhyperparametertransfer} observe transfer for both $\depthalph \in \crl{1 / 2, 1}$ and study which corrections should be made when changing token horizons and batch size. In short, transfer over depth for $\depthalph \neq 1$ is not a settled issue, whereas transfer for $\depthalph = 1$ appears uncontested (although it may lack feature diversity).
Concurrent work \citet{ren2026rethinking} applies $\mu$P-like derivations to a different variant of normalized models optimized with Muon (for most parameters), in particular noticing that Depth-$\mu$P is required in their setting.

There are alternative approaches to scaling the model size while finding near-optimal learning rates, such as viewing the network as consisting of modules and
normalizing the updates per-module \citep{large2024scalable}.

\subsection{Organization}

The rest of the paper is organized as follows. In order to keep our presentation self-contained, we devote a short \cref{sec:ngpt-def}
to defining nGPT and all its hyperparameters.
The next \cref{sec:mungpt} describes our reparametrization and the motivations behind it,
with more detailed proofs and derivations moved to \cref{sec:proofs-derivations}.
Ablations and illustrations of transfer across width, depth, and iteration count are provided in \cref{sec:experiments}.

\section{Normalized Transformers: definitions}\label{sec:ngpt-def}

The Normalized Transformer, or nGPT \citep{loshchilov2025ngpt}
maps an input sequence of one-hot vectors
$\vx_n \in \mathbb{R}^V$ (with $n \in \range{1}{\mathrm{SeqLen}}$)
of dimension $V$ (vocabulary size)
to a corresponding output sequence of vectors
$\vz_n \in \mathbb{R}^V$ (with $n \in \range{1}{\mathrm{SeqLen}}$)
through a series of normalized residual layers.
More precisely, like most Transformers, each input $\vx_n$ is passed through a trainable
linear embedding layer
\begin{equation*}
\vh_n^1 = \mE_{\text{input}} \vx_n \in \R^{d_{\text{model}}} ,\qquad \mE_{\text{input}}\in \mathbb{R}^{d_{\text{model}} \times V}.
\end{equation*}
The hidden state is then transformed through $n_{\text{layers}}$ residual blocks that alternate cross-token causal self-attention
\begin{align}
  \vh_{A, n}^{\ell + 1} &= \text{Norm}\prn[\big]{\text{Attention}_n(\crl{\vh_m^{\ell}}_{m = 1}^{\text{SeqLen}})}, \label{eq:SGBoZV}\\
  \vh_n^{\ell + 0.5} &= \text{Norm}\prn[\bigg]{\vh_n^{\ell} + \frac{\alpha_{\text{$A$,init}}}{\alpha_{\text{$A$,scale}}} \valpha_A^{\ell + 1} \odot (\vh_{A, n}^{\ell + 1} - \vh_n^{\ell})}, \label{eq:xyayiq}
\end{align}
and token-wise MLPs
\begin{align}
      \vh_{M, n}^{\ell + 1} &= \text{Norm}\prn[\big]{\text{MLP}(\vh_n^{\ell + 0.5})}, \label{eq:gDzJUa}\\
  \vh_n^{\ell + 1} &= \text{Norm}\prn[\bigg]{\vh_n^{\ell + 0.5} + \frac{\alpha_{\text{$M$,init}}}{\alpha_{\text{$M$,scale}}} \valpha_M^{\ell + 1} \odot (\vh_{M, n}^{\ell + 1} - \vh_n^{\ell + 0.5})} \label{eq:yxfkfR},
\end{align}
where we denote
\begin{equation*}
\text{Norm}(\vy) := \frac{\vy}{\norm{\vy}}
\end{equation*}
and $\odot$ is the component-wise product. The final residual state $\vh_n^{n_{\text{layers} + 1}}$ passes through a linear unembedding
\begin{equation}\label{eq:oDPOKs}
\vz_n = \frac{s_{\text{$z$,init}}}{s_{\text{$z$,scale}}} \vs_z\odot \hat{\vz}_n \in \R^{V},\qquad  \hat{\vz}_n = \mE_{\text{output}} \vh_n^{n_{\text{layers} + 1}},\qquad \mE_{\text{output}} \in \R^{V\times d_{\text{model}}}.
\end{equation}
We will suppress the dependence of $\vh_{A, n}$, $\vh_{M, n}$, $\vh_n$, $\valpha_A$, $\valpha_M$ on the block number to reduce notational clutter.
The columns of $\mE_{\text{input}}$ and rows $\mE_{\text{output}}$ are normalized before\footnote{In \citet{loshchilov2025ngpt}, they are normalized \textit{after} each step. This difference has no practical relevance.} starting each training step.

There are several key distinguishing features of nGPT when compared with a standard pre-LN Transformer:
\begin{itemize}
\item Standard $\text{RMSNorm}(\cdot)$ or $\mathrm{LayerNorm}(\cdot)$ are replaced with $\text{Norm}(\cdot)$, which $\ell_2$-normalizes the vector and has no trainable parameters.
\item The normalization is performed at the output of Attention and MLP blocks rather than input, which is sometimes called residual-post-norm \citep{Liu2021,olmo20252olmo2}.
\item The standard residual connection is replaced in \eqref{eq:xyayiq} and \eqref{eq:yxfkfR} by a ``linear interpolation'' function (LERP), in which trainable vectors  $\valpha_A$, $\valpha_M$ allow for a learned componentwise rescaling. The components of $\valpha_A$, $\valpha_M$  are constrained to be nonnegative. At initialization, each component of $\valpha_A$ is equal to a fixed global hyperparameter $\alpha_{\text{$A$,scale}}$, so that the value of $\frac{\alpha_{\text{$A$,init}}}{\alpha_{\text{$A$,scale}}} \valpha_A$ at initialization is equal (in each component) to a fixed global hyperparameter $\alpha_{\text{$A$,init}}$. The ratio $\frac{\alpha_{\text{$A$,init}}}{\alpha_{\text{$A$,scale}}}$ modifies the learning rate of $\valpha_A$. Rescalers $\valpha_M$ and $\vs_z$ are treated similarly.
\end{itemize}

In addition to the explicit normalization layers in the residual stream we normalize also the keys and queries inside the self-attention block. Specifically, the $n$th token output of an attention head is given by the standard expression
\begin{align*}
  \text{Head}_n(\crl{\vh_m}_{m = 1}^{\text{SeqLen}}) 
  &= \sum_{m = 1}^n \frac{\exp(\sqrt{d_{\text{key}}} \vq^{\prime\transt}_n \vk_m')}{\sum_{\tilde{m} = 1}^n \exp(\sqrt{d_{\text{key}}} \vq^{\prime \transt}_n \vk'_{\tilde{m}})} \vv_m,
\end{align*}
where $\vq'_n$ and $\vk'_m$ are normalized query and key vectors calculated using trainable matrices $\mW_q, \mW_k \in \mathbb{R}^{d_{\text{model}} \times d_{\text{key}}}$
\begin{equation}\label{eq:xgSNnZ}
\mathbb{R}^{d_{\text{key}}} \ni \vq_n = \text{QRot}_n(\mW_q\trans \vh_n), \quad \vq'_n = \frac{\vq_n}{\norm{\vq_n}} \odot \frac{s_{\text{$qk$,init}}}{s_{\text{$qk$,scale}}} \vs_{q k},
\end{equation}
and
\begin{equation}\label{eq:XBxnKv}
\mathbb{R}^{d_{\text{key}}} \ni \vk_m = \text{KRot}_m(\mW_k\trans \vh_m), \quad \vk'_m = \frac{\vk_m}{\norm{\vk_m}} \odot \frac{s_{\text{$qk$,init}}}{s_{\text{$qk$,scale}}} \vs_{q k}.
\end{equation}
Here $\text{QRot}_n(\cdot)$ and $\text{KRot}_m(\cdot)$ denote rotary positional embedding \citep{su2023roformerenhancedtransformer} maps,
whereas the value vectors are calculated using the trainable matrix $\mW_v \in \mathbb{R}^{d_{\text{model}} \times d_{\text{key}}}$
\begin{equation}\label{eq:MNAzYI}
\mathbb{R}^{d_{\text{key}}} \ni \vv_m = \mW_v\trans \vh_m.
\end{equation}
The meaning of $\vs_{qk}$, $s_{\text{$qk$,init}}$, and $s_{\text{$qk$,scale}}$ is analogous to $\valpha_{A}$, $\alpha_{\text{$A$,scale}}$, $\alpha_{\text{$A$,init}}$ discussed above.
In practice, there are $n_{\text{heads}}$ different attention heads with separate $(\mW_q, \mW_k, \mW_v, \vs_{qk})$. The outputs of all heads (each in $\mathbb{R}^{d_{\text{key}}}$) are concatenated (yielding a vector in $\mathbb{R}^{n_{\text{heads}} d_{\text{key}}}$),
and then a trainable linear transformation $\mW_O \in \mathbb{R}^{d_{\text{model}} \times n_{\text{heads}} d_{\text{key}}}$ transfers the resulting vector back to $\mathbb{R}^{d_{\text{model}}}$:
\begin{equation}\label{eq:JSpBcR}
  \text{Attention}_n(\crl{\vh_m}_{m = 1}^{\text{SeqLen}})
  = \mW_O
  \underbrace{\begin{bmatrix}
    \text{Head}_n^{(1)}(\crl{\vh_m}_{m = 1}^{\text{SeqLen}})\\
    \vdots\\
    \text{Head}_n^{(n_{\text{heads}})}(\crl{\vh_m}_{m = 1}^{\text{SeqLen}})
  \end{bmatrix}}_{\in \mathbb{R}^{n_{\text{heads}} d_{\text{key}}}}.
\end{equation}
Columns of $\mW_q^{(j)}, \mW_k^{(j)}, \mW_v^{(j)}$ for each head $j \in \range{1}{n_{\text{heads}}}$ and of $\mW_O$ are all normalized before starting each training step.

Finally, given a vector $\vh_n \in \R^{d_{\text{model}}}$ the MLP block computes
\begin{align*}
    \text{MLP}(\vh_n) = \mW_{o \text{MLP}} \prn[\big]{\text{SiLU}(\vnu) \odot \vu},
\end{align*}
where $\text{SiLU}(\vnu) = \vnu \odot \sigma(\vnu)$ with $\sigma(\cdot)$ the standard sigmoid (applied componentwise) and
\begin{align*}
    \vu =\mW_u \vh_n \odot \frac{s_{\text{$u$,init}}}{s_{\text{$u$,scale}}} \vs_{u}\in  \mathbb{R}^{d_{\text{MLP}}}, \quad \vnu = \mW_{\nu} \vh_n \odot \frac{s_{\text{$\nu$,init}}}{s_{\text{$\nu$,scale}}} d_{\text{model}}^{1 / 2} \vs_{\nu}\in  \mathbb{R}^{d_{\text{MLP}}}.
\end{align*}
The additional $d_{\text{model}}^{1 / 2}$ factor (explained in detail in Appendix A.1 of \citet{loshchilov2025ngpt}) is introduced to benefit better from the shape of SiLU (since the hidden state has a smaller scale than regular Transformers with $\text{RMSNorm}(\cdot)$).
The rows of $\mW_{\nu} \in \mathbb{R}^{d_{\text{MLP}} \times d_{\text{model}}}$, $\mW_{u} \in \mathbb{R}^{d_{\text{MLP}} \times d_{\text{model}}}$ and columns of $\mW_{o\text{MLP}} \in \mathbb{R}^{d_{\text{model}} \times d_{\text{MLP}}}$ are normalized before starting each training step.
The meaning of $(s_{\text{$u$,init}}, s_{\text{$u$,scale}}, \vs_u)$,
$(s_{\text{$\nu$,init}}, s_{\text{$\nu$,scale}}, \vs_\nu)$
is analogous to $(\alpha_{\text{$A$,scale}},\alpha_{\text{$A$,init}}, \valpha_{A})$ discussed above.

\section{Reparametrization for transfer over width, depth and token horizon}\label{sec:mungpt}

In this section we summarize our proposed $\nu$GPT parameterization for learning rate  transfer across width, depth, and token horizon.
We begin in \cref{sec:summary-of-changes} with a high-level summary of our parameterization.
We then provide in \cref{sec:width-depth-corr} a detailed discussion of the heuristics that lead to the depth and width transfer prescriptions in $\nu$GPT. Finally, we discuss in \cref{sec:token-horizon-corr} transfer across token horizon.

\subsection{Summary of the changes}\label{sec:summary-of-changes}

\begin{table}[t]
  \centering
  \begin{NiceTabular}{c c c c c c c}
    \toprule
    \multirow{2}{*}{\textbf{Hyperparameter}}
    & \multirow{2}{*}{\textbf{nGPT}}
    & \multicolumn{2}{c}{\textbf{$\boldsymbol{\mu}$P extensions}}
    & \multirow{2}{*}{\textbf{$\boldsymbol{\nu}$GPT (ours)}}
    \\
    \cmidrule(lr){3-4}
    & & \textbf{``Depth-$\boldsymbol{\mu}$P''} & \textbf{``CompleteP''} & \\
    \midrule
    $\eta_{\text{base}}$
    & $\eta_{\text{global}}$
    & $\eta_{\text{global}}$
    & $\eta_{\text{global}}$
    & $\eta_{\text{global}} {\color{mplblue} \datacor^{- 1 / 3}}$
    \\
    \midrule
    $\sigma^2_{\text{input}}$
    & arbitrary
    & arbitrary
    & arbitrary
    & arbitrary
    \\
    $\eta_{\text{input}}$
    & $\eta_{\text{base}}$
    & $\eta_{\text{base}} {\color{mplorange} \widthcor^{- 1 / 2}}$
    & $\eta_{\text{base}} {\color{mplorange} \widthcor^{- 1 / 2}}$
    & $\eta_{\text{base}} {\color{mplorange} \widthcor^{- 1 / 2}}$
    \\
    \midrule
    $\sigma^2_{\text{hidden}}$
    & arbitrary
    & arbitrary
    & arbitrary
    & arbitrary
    \\
    $\eta_{\text{hidden}}$
    & $\eta_{\text{base}}$
    & $\eta_{\text{base}} {\color{mplorange} \widthcor^{-1}} {\color{mplgreen} \depthcor^{-1 / 2}}$
    & $\eta_{\text{base}} {\color{mplorange} \widthcor^{-1}}$
    & $\eta_{\text{base}} {\color{mplorange} \widthcor^{-3 / 4}}$
    \\
    \midrule
    $\alpha_{\text{$A$,init}}$
    & $0.05$
    & $0.05 \, {\color{mplgreen} \depthcor^{-1 / 2}}$
    & $0.05 \, {\color{mplgreen} \depthcor^{-1}}$
    & $0.05 \, {\color{mplgreen} \depthcor^{-1}}$
    \\
    $\alpha_{\text{$A$,scale}}$
    & $\width^{-1 / 2}$
    & ${\color{mplpurple} 0.03}$
    & ${\color{mplpurple} 0.03}$
    & ${\color{mplpurple} 0.03}$
    \\
    $\alpha_{\text{$M$,init}}$
    & $0.05$
    & $0.05 \, {\color{mplgreen} \depthcor^{-1 / 2}}$
    & $0.05 \, {\color{mplgreen} \depthcor^{-1}}$
    & $0.05 \, {\color{mplgreen} \depthcor^{-1}}$
    \\
    $\alpha_{\text{$M$,scale}}$
    & $\width^{-1 / 2}$
    & ${\color{mplpurple} 0.03}$
    & ${\color{mplpurple} 0.03}$
    & ${\color{mplpurple} 0.03}$
    \\
    \midrule
    $s_{\text{$qk$,init}}$
    & 1
    & 1
    & 1
    & 1
    \\
    $s_{\text{$qk$,scale}}$
    & $\width^{-1 / 2}$
    & ${\color{mplpurple} 0.03}$
    & ${\color{mplpurple} 0.03}$
    & ${\color{mplpurple} 0.03}$
    \\
    \midrule
    $s_{\text{$u$,init}} = s_{\text{$\nu$,init}}$
    & 1
    & 1
    & 1
    & 1
    \\
    $s_{\text{$u$,scale}} = s_{\text{$\nu$,scale}}$
    & 1
    & 1
    & 1
    & 1
    \\
    \midrule
    $s_{\text{$z$,init}}$
    & 1
    & 1
    & 1
    & $\widthcor^{1 / 2}$
    \\
    $s_{\text{$z$,scale}}$
    & $\width^{-1 / 2}$
    & ${\color{mplpurple} 0.03}$
    & ${\color{mplpurple} 0.03}$
    & ${\color{mplpurple} 0.03}$
    \\
    \midrule
    $\sigma^2_{\text{output}}$
    & arbitrary
    & arbitrary
    & arbitrary
    & arbitrary
    \\
    $\eta_{\text{output}}$
    & $\eta_{\text{base}}$
    & $\eta_{\text{base}} {\color{mplorange} \widthcor^{- 1 / 2}}$
    & $\eta_{\text{base}} {\color{mplorange} \widthcor^{- 1 / 2}}$
    & $\eta_{\text{base}} {\color{mplorange} \widthcor^{- 3 / 4}}$
    \\
    \bottomrule
  \end{NiceTabular}
  \caption{
    Our re-parametrization of nGPT (defined in \cref{sec:ngpt-def}).
    \Cref{sec:summary-of-changes} contains the text version of this table and defines
    \textcolor{mplblue}{data $\datacor$}, \textcolor{mplorange}{width $\widthcor$} and \textcolor{mplgreen}{depth $\depthcor$} ratios.
    Other notations: $\eta_{\text{base}}$ is the learning rate used by the optimizer,
    $\eta_{\text{input}}$ is the learning rate of $\mE_{\text{input}}$ weights ($\sigma^2_{\text{input}}$ their initialization variance),
    $\eta_{\text{output}}$ of $\mE_{\text{output}}$ weights ($\sigma^2_{\text{output}}$ their initialization variance),
    $\eta_{\text{hidden}}$ of all linear layers in Transformer blocks ($\sigma^2_{\text{hidden}}$ their initialization variance).
    \label{tab:our-param}
  }
\end{table}

Let us fix some \textbf{base} number of iterations, depth (number of layers) and width (model dimension). These will be constant throughout, and we define
\begin{equation*}
\datacor := \frac{\text{iter. count}}{\text{base iter. count}}, \quad \widthcor := \frac{\text{width}}{\text{base width}}, \quad \depthcor := \frac{\text{depth}}{\text{base depth}}
\end{equation*}
for the data, width, and depth multipliers of the target model that we wish to train. Our $\nu$GPT parameterization (\cref{tab:our-param}) asks the following:

\begin{roundnote}
\begin{enumerate}
\item Multiply the global learning rate by $\datacor^{-1/3}$.
\item Multiply further the learning rate of embedding weights
  $\eta_{\text{input}}$
  by $\widthcor^{-1 / 2}$.
\item Multiply the learning rate $\eta_{\text{hidden}}$ for weights matrices in MLP and Attention blocks and the learning rate $\eta_{\text{output}}$ for unembedding weights by $\widthcor^{-3 / 4}$.
\item Optionally, multiply $\eta_{\text{input}}$, $\eta_{\text{output}}$ by additional constant factors tuned on the base model.
\item Do not scale $\alpha_{\text{$A$,scale}}$, $\alpha_{\text{$M$,scale}}$, $s_{\text{$qk$,scale}}$, $s_{\text{$z$,scale}}$: take them to be constant $0.03$ rather
  than $\dmodel^{- 1 / 2}$ as in original nGPT.
\item Scale $\alpha_{\text{$A$,init}}$ and $\alpha_{\text{$M$,init}}$ with depth: put them at $0.05 \, \depthcor^{-1}$ rather than $0.05$ as in original nGPT.
\end{enumerate}
\end{roundnote}

The initialization of embeddings, unembeddings and hidden weights is Gaussian with variances
$\sigma^2_{\text{input}}$, $\sigma^2_{\text{output}}$ and $\sigma^2_{\text{hidden}}$
respectively. It is not important what the actual values of these variances are because the corresponding matrices are normalized
before starting each optimization step (including the first one), which is why we write ``arbitrary'' for their values in the table.

\subsection{Width and depth corrections}\label{sec:width-depth-corr}

In this section we provide theoretically grounded heuristics for transferring learning rates across width and depth. As in prior work \cite{pmlr-v139-yang21c} and \cite{dey2025dont} we proceed by formulating two intuitive desiderata constraining initialization and learning rates as functions of depth and width,
meant to give a simplified picture of what a correct parametrization must look like.
We then give heuristic computations that explain how these desiderata can be fulfilled, leading to the $\nu$GPT prescription described above.
Detailed estimation of scales of initial values and updates inside each block are deferred to \cref{sec:width-deriv}.

\paragraph{Notation.}
For any vector or matrix in the network $\mP(t)$, we denote by $\mP$ its value $\mP(0)$  at initialization and by $\Delta \mP (t)$ or just $\Delta \mP$ its change from initialization $\mP(t) - \mP(0)$. In this section, the number of optimization steps $t$ is assumed to not exceed a universal constant (we relax this in \cref{sec:token-horizon-corr}). For quantities $a$ and $b$ depending on width and depth, we write $a = O(b)$, or $a \lesssim b$, or $b = \Omega(a)$ to mean that the ratio $a / b$ ``does not explode''
(does not grow unbounded at growing width and depth). The notation $a = \Theta(b)$ and $a \asymp b$ both mean that simultaneously $a \lesssim b$ and $a \gtrsim b$.

\paragraph{HP Transfer Desiderata.}

Our first desideratum for HP transfer is borrowed from \cite{pmlr-v139-yang21c} and asks that all residual layer preactivations are order-$1$.

\begin{desideratum}[Stability at initialization]\label{des:stab-init}
  We call a parametrization \textbf{stable at initialization} if
  no preactivations blow up or become trivial,
  and the logits do not blow up\footnote{It is a technicality of the \mup initialization that logits converge to zero at initialization, though they evolve non-trivially, thus becoming $\Theta(1)$ during training. We do not recommend $\mu$P as our primary choice, but we do not have reason to exclude it.}:
  $\norm{\vh^{\ell}_n} = \Theta(1)$ for each $\ell \in \range{1}{L}$ and $\norm{\vz_n} = O(1)$.
\end{desideratum}

\Cref{des:stab-init} is essentially automatic for nGPT because of the
$\text{Norm}(\cdot)$ operations in the forward pass and normalization of matrices at each step.

For our second desideratum we again follow ideas from \cite{pmlr-v139-yang21c} and seek parameterizations in which preactivations change order-$1$ from each step of training, independent of depth and width.

\begin{desideratum}[Stable non-trivial feature learning]\label{des:feature-learn}
  We say that a parametrization achieves \textbf{stable non-trivial feature learning} if the logits and all preactivations
  evolve non-trivially without blowing up:
  $\norm{\Delta \vh_n^{\ell}} = \Theta(1)$ and $\norm{\Delta \vz_n} = \Theta(1)$,
  and the learning rates $\eta_{\text{input}}$, $\eta_{\text{hidden}}$, $\eta_{\text{output}}$ have the largest scales possible without
  breaking this constraint.
\end{desideratum}

It is a robust empirical observation that desiderata such as the ones above lead to learning rate transfer, even though it is not completely clear why \citep{dey2025dont, pmlr-v139-yang21c, everett2024scalingexponentsacross}.
We present a simple if tedious calculation in \cref{sec:width-deriv} for why \cref{des:feature-learn} holds in our $\nu$GPT parameterization at growing model width and fixed depth. While we defer the full details to the appendix, we focus here on illustrating the main ideas, especially the somewhat unusual $m_{\text{width}}^{-3/4}$ scaling of the hidden layer and unembedding learning rates.

\paragraph{Summary derivations of width corrections}
We start with recalling the important notion of alignment exponents introduced in \cite{everett2024scalingexponentsacross}.

\begin{definition}[Alignment exponents]\label{def:alignment-exponents}
Define $\alpha_{\text{output}}, \omega_{\text{output}}, \nu_{\text{output}} \in [0, 1]$ to be such that
\begin{align*}
  \frac{\norm{\Delta \mE_{\text{output}} \vh}}{\sqrt{V}}
  &\asymp d_{\text{model}}^{\alpha_{\text{output}}} \frac{\norm{\Delta \mE_{\text{output}}}_F}{\sqrt{V d_{\text{model}}}} \frac{\norm{\vh}}{\sqrt{d_{\text{model}}}},\\
  \frac{\norm{\mE_{\text{output}} \Delta \vh}}{\sqrt{V}}
  &\asymp d_{\text{model}}^{\omega_{\text{output}}} \frac{\norm{\mE_{\text{output}}}_F}{\sqrt{V d_{\text{model}}}} \frac{\norm{\Delta \vh}}{\sqrt{d_{\text{model}}}},\\
  \frac{\norm{\Delta \mE_{\text{output}} \Delta \vh}}{\sqrt{V}}
  &\asymp d_{\text{model}}^{\nu_{\text{output}}} \frac{\norm{\Delta \mE_{\text{output}}}_F}{\sqrt{V d_{\text{model}}}} \frac{\norm{\Delta \vh}}{\sqrt{d_{\text{model}}}}.
\end{align*}

Additionally, define $\alpha_{\text{hidden}}, \omega_{\text{hidden}}, \nu_{\text{hidden}} \in [0, 1]$ to be such that
\begin{align*}
  \frac{\norm{\Delta \mW \vh}}{\sqrt{d_{\text{out}}}}
  &\asymp d_{\text{in}}^{\alpha_{\text{hidden}}} \frac{\norm{\Delta \mW}_F}{\sqrt{d_{\text{out}} \, d_{\text{in}}}}  \frac{\norm{\vh}}{\sqrt{d_{\text{in}}}},\\
  \frac{\norm{\mW \Delta \vh}}{\sqrt{d_{\text{out}}}}
  &\asymp d_{\text{in}}^{\omega_{\text{hidden}}} \frac{\norm{\mW}_F}{\sqrt{d_{\text{out}} \, d_{\text{in}}}}  \frac{\norm{\Delta \vh}}{\sqrt{d_{\text{in}}}},\\
  \frac{\norm{\Delta \mW \Delta \vh}}{\sqrt{d_{\text{out}}}}
  &\asymp d_{\text{in}}^{\nu_{\text{hidden}}} \frac{\norm{\Delta \mW}_F}{\sqrt{d_{\text{out}} \, d_{\text{in}}}}  \frac{\norm{\Delta \vh}}{\sqrt{d_{\text{in}}}}
\end{align*}
for any hidden weight $\mW \in \mathbb{R}^{d_{\text{out}} \times d_{\text{in}}}$.

Although we suppress this from the notation,  alignment exponents in principle depend on step number, token position, layer number.
\end{definition}

The actual dynamics of the alignment exponents in large nGPT models is complicated and will be explored in \cref{sec:alignment-exponents}.
Unsurprisingly, some of them change during training. However, in order for the theory to be tractable, we assume them to take fixed values.
We postpone further discussion of this, and for now just list the recommended values. It is observed decisively that $\omega_{\text{hidden}} = \omega_{\text{output}} = 1 / 2$ (see \cref{fig:alignment_val_loss_weighted}). Further, we introduce the following definition for clarity.

\begin{definition}[Full, no, mid alignment]\label{def:full-no-mid-align}
  We introduce the following assumptions:
\begin{itemize}
\item \textbf{no alignment assumption:} $\max\crl{\alpha_{\text{hidden}}, \nu_{\text{hidden}}} = \max \crl{\alpha_{\text{output}}, \nu_{\text{output}}} = 1 / 2$,
\item \textbf{full alignment assumption:} $\max\crl{\alpha_{\text{hidden}}, \nu_{\text{hidden}}} = \max \crl{\alpha_{\text{output}}, \nu_{\text{output}}} = 1$,
\item \textbf{mid alignment assumption:}
  $\max\crl{\alpha_{\text{hidden}}, \nu_{\text{hidden}}} = \max \crl{\alpha_{\text{output}}, \nu_{\text{output}}} = 3 / 4$.
\end{itemize}
\end{definition}

Our parametrization in \cref{tab:our-param} assumes \textbf{mid alignment} because it achieves perfect transfer over width and is consistent with measurements (\cref{sec:alignment-exponents}). Moreover, we make predictions for HP transfer in Adam by studying signGD. Our motivation is that the Adam update for a scalar weight component $w$ takes the form
\begin{equation*}
w(t + 1) = w(t) - \eta \frac{\frac{1 - \beta_1}{1 - \beta_1^{t + 1}} \sum_{\tau = 0}^t \beta_1^{t - \tau} g(\tau)}{\sqrt{\frac{1 - \beta_2}{1 - \beta_2^{t + 1}} \sum_{\tau = 0}^t \beta_2^{t - \tau} g^2(\tau)} + \epsilon},
\end{equation*}
where $g$ the corresponding gradient component, $\eta$ the corresponding learning rate, $t$ the time step, $\beta_1, \beta_2, \epsilon$ the Adam's hyperparameters. This can be viewed as a smoothed version of the signGD update
\begin{equation*}
w(t + 1) = w(t) - \eta \frac{g(t)}{\sqrt{g^2(t)}} = w(t) - \eta \sign g(t).
\end{equation*}
In particular, we assume ``signGD-like'' scaling of updates:
\begin{equation*}
w(t + 1) - w(t) \asymp \eta.
\end{equation*}
For small enough updates (which will be the case in our derivations), the renormalization of matrices at each step does not influence this scale.

The following derivations, designed to avoid violating \cref{des:feature-learn}, motivate the choice of width corrections.

\begin{itemize}
\item What should we do to make
  the embedding update contribute non-trivially without blowup, that is,
  for
  $\norm{\Delta \mE_{\text{input}} \vx_n}$ to have scale $\Theta(1)$?
Since $\vx_n$ is one-hot, $\Delta \mE_{\text{input}} \vx_n$ is a column of $\Delta \mE_{\text{input}}$, so this norm has
scale $\sqrt{d_{\text{model}}} \eta_{\text{input}}$ because Adam updates have signGD-like scaling (with each component almost $\pm \eta_{\text{input}}$).
This is why we need $\eta_{\text{input}}$ to be proportional to $d_{\text{model}}^{- 1 / 2}$, justifying the $\widthcor^{-1 / 2}$ correction in $\eta_{\text{input}}$.

\item What should we do to ensure $\norm{\Delta (\mW \vh_n)}$ have scale $\Theta(1)$ for some hidden weight matrix $\mW \in \mathbb{R}^{\dout \times \din}$ inside one of the Transformer blocks, where $\dout \asymp \din \asymp \dmodel$? Using the definition of $\alpha_{\text{hidden}}$, we have
\begin{align*}
  \frac{\norm{\Delta \mW \vh_n}}{\sqrt{\dmodel}}
  \asymp \dmodel^{\alpha_{\text{hidden}}} \frac{\norm{\Delta \mW}_F}{\dmodel} \frac{\norm{\vh_n}}{\sqrt{\dmodel}}
  \asymp \dmodel^{\alpha_{\text{hidden}} - 1 / 2} \eta_{\text{hidden}},
\end{align*}
where the last equivalence is true because the quadratic mean of $\Delta \mW$ components $\norm{\Delta \mW}_F / \dmodel$ scales like $\eta_{\text{hidden}}$ (Adam updates have signGD-like scaling). Similarly,
\begin{align*}
  \frac{\norm{\mW \Delta \vh_n}}{\sqrt{\dmodel}}
  \asymp \dmodel^{\omega_{\text{hidden}}} \frac{\norm{\mW}_F}{\dmodel} \frac{\norm{\Delta \vh_n}}{\sqrt{\dmodel}}
  \asymp \dmodel^{- 1 / 2},
\end{align*}
where in the last equivalence we used $\omega_{\text{hidden}} = 1 / 2$, the fact that rows of $\mW$ are of norm 1 and assumed that
the movement of the previous hidden state matches itself in scale, that is, $\norm{\Delta \vh_n} \asymp \norm{\vh_n} \asymp 1$. Finally,
\begin{align*}
\frac{\norm{\Delta \mW \Delta \vh_n}}{\sqrt{\dmodel}} \asymp \dmodel^{\nu_{\text{hidden}} - 1 / 2} \eta_{\text{hidden}}
\end{align*}
by similar arguments. Combining, we obtain the maximal hidden learning rate $\eta_{\text{hidden}} \asymp \dmodel^{- \max\crl{\alpha_{\text{hidden}}, \nu_{\text{hidden}}}} = \dmodel^{- 3 / 4}$,
justifying the $\widthcor^{- 3 / 4}$ (mid alignment) correction in $\eta_{\text{hidden}}$.

\item
  What should we do if we want to ensure $\norm{\Delta (\mE_{\text{output}} \vh_n)} \asymp \norm{\mE_{\text{output}} \vh_n}$?
  The alignment exponent at initialization is always $1 / 2$ by independence:
  \begin{equation*}
    \frac{\norm{\mE_{\text{output}} \vh_n}}{\sqrt{V}}
    \asymp \sqrt{d_{\text{model}}} \frac{\norm{\mE_{\text{output}}}_F}{\sqrt{d_{\text{model}} V}} \frac{\norm{\vh_n}}{\sqrt{d_{\text{model}}}}
    = \frac{\norm{\mE_{\text{output}}}_F}{\sqrt{d_{\text{model}} V}} = d_{\text{model}}^{- 1 / 2}.
  \end{equation*}
  Next, similarly to the above, we have
\begin{align*}
  &\frac{\norm{\Delta \mE_{\text{output}} \vh_n}}{\sqrt{V}} \asymp d_{\text{model}}^{\alpha_{\text{output}}} \eta_{\text{output}} \frac{\norm{\vh_n}}{\sqrt{d_{\text{model}}}} = d_{\text{model}}^{\alpha_{\text{output}} - 1 / 2} \eta_{\text{output}};\\
  &\frac{\norm{\mE_{\text{output}} \Delta \vh_n}}{\sqrt{V}} \asymp d_{\text{model}}^{\omega_{\text{output}}} \frac{\norm{\mE_{\text{output}}}_F}{\sqrt{V d_{\text{model}}}} \frac{\norm{\Delta \vh_n}}{\sqrt{d_{\text{model}}}}
  \asymp d_{\text{model}}^{\omega_{\text{output}} - 1 / 2} \frac{\norm{\mE_{\text{output}}}_F}{\sqrt{V d_{\text{model}}}} \asymp d_{\text{model}}^{\omega_{\text{output}} - 1};\\
  &\frac{\norm{\Delta \mE_{\text{output}} \Delta \vh_n}}{\sqrt{V}}
  \lesssim d_{\text{model}}^{\nu_{\text{output}}} \eta_{\text{output}} \frac{\norm{\Delta \vh_n}}{\sqrt{d_{\text{model}}}}
  = d_{\text{model}}^{\nu_{\text{output}} - 1 / 2} \eta_{\text{output}}.
\end{align*}
Recalling that $\omega_{\text{output}} = 1  /2$, we should set
$\eta_{\text{output}} \lesssim \dmodel^{- \max\crl{\alpha_{\text{output}}, \nu_{\text{output}}}}$,
justifying the width correction $\widthcor^{-3 / 4}$ (mid alignment) in $\eta_{\text{output}}$.
\end{itemize}

\paragraph{Depth corrections}
The preceding discussion concerned HP transfer across width in nGPT. Let us also briefly discuss how to scale learning rates and parameterization with growing depth. It is common in the literature to consider a residual network like
\begin{equation}\label{eq:TTiUGU}
\vh^{\ell + 1} = \vh^{\ell} + n_{\text{layers}}^{-\depthalph} \mathcal{F}_{\ell}(\vh^{\ell}),
\end{equation}
where $\mathcal{F}_{\ell}$ is the $\ell$th block, and $\depthalph$ is the \textbf{depth alpha}. The updates
\begin{equation*}
\Delta \vh^{\ell + 1} = \Delta \vh^{\ell} + n_{\text{layers}}^{-\depthalph} \Delta \mathcal{F}_{\ell}(\vh^{\ell})
\end{equation*}
accumulate over $\depth$ layers, so one can require
\begin{equation*}
n_{\text{layers}}^{-\depthalph} \Delta \mathcal{F}_{\ell}(\vh^{\ell}) \lesssim \frac{1}{\depth}
\end{equation*}
to prevent updates from blowing up (\cref{des:feature-learn}). If $\Delta \mathcal{F}_{\ell}(\vh^{\ell}) \asymp \eta_{\text{hidden}}$ which is typically the case for Adam
because of signGD-like scaling,
this means we may need to introduce a depth correction into the hidden learning rate:
\begin{equation}\label{eq:LWphMK}
\eta_{\text{hidden}} \asymp n_{\text{layers}}^{\depthalph - 1},
\end{equation}
unless $\depthalph = 1$ (in which case no such correction is required).
The Depth-$\mu$P \citep{yang2024tensor} style parametrization sets $\depthalph = 1 / 2$ (and therefore $\eta_{\text{hidden}} \asymp n_{\text{layers}}^{- 1 / 2}$)
based on additional desiderata such as feature diversity,
whereas CompleteP \citep{dey2025dont} sets $\depthalph = 1$ based on a different additional desideratum
(``complete'' feature learning: hidden layers and model output are not near-linear in any of the model parameters).

The Normalized Transformer differs significantly from \eqref{eq:TTiUGU},
so we conduct our own theoretical analysis in \cref{sec:depth-scaling-deriv}.
We find that for a simple network closer to nGPT, the safe hidden learning rate is also given by \eqref{eq:LWphMK}.
To keep things simple, we just introduce a correction $\depthcor^{-1}$ into $\alpha_{\text{$A$,init}}$ and $\alpha_{\text{$M$,init}}$,
which roughly corresponds to $\depthalph = 1$ (at least at initialization).
Interestingly, our ablations in \cref{sec:experiments} show that,
(a)~baseline nGPT already shows decent transfer over depth
(perhaps because $\valpha_A$ and $\valpha_M$ are trainable and initialized with a small enough value, which means $\depth$ needs to be extremely large for depth corrections to matter),
(b)~the hidden learning rate should not be changed even if a $\depthcor^{- 1 / 2}$ correction in $\alpha_{\text{$A$,init}}$ and $\alpha_{\text{$M$,init}}$ is used instead. Explaining this phenomenon theoretically can be an interesting future direction.

\subsection{Token horizon corrections}\label{sec:token-horizon-corr}

In \cref{sec:width-depth-corr}, the number of steps was bounded.
In practice, however, the token horizon is typically scaled along with the model size; hence,
we need to account for the growing number of steps.
It is a robust finding in recent literature that the optimal learning rates decrease as the token horizon increases
at a fixed batch size
\citep{everett2024scalingexponentsacross,bjorck2025scaling,mlodozeniec2025completedhyperparametertransfer}.
There is no well-known and well-tested theoretical framework predicting the correct scaling. The work \citet{mlodozeniec2025completedhyperparametertransfer} uses an SDE perspective from \citet{malladi2022on} to derive the data correction $\datacor^{- 1 / 2}$.
\Citet{shulgin2026deriving} also predict this data correction via a different theoretical method.
However, we find that the exponent $1 / 2$ is too large.
Instead, we base our choice $\datacor^{- 1 / 3}$ on empirical results from \citet{bjorck2025scaling}
(cf. $\beta = 0.32$ there) and our own independent power law fits (\cref{sec:token-count}).
The exponent $1 / 3$ is also concurrently confirmed in \citet{ren2026rethinking} in a very different setting.


\section{Experiments}\label{sec:experiments}

We train nGPT as defined in \cref{sec:ngpt-def} with cross-entropy loss on the FineWeb-Edu dataset \citep{penedo2024the} with sequence length 4096,
fixed batch size 64.
We use the OLMo 2 \citep{olmo20252olmo2} tokenizer\footnote{\url{https://huggingface.co/allenai/OLMo-2-0425-1B}} with vocabulary size $100\,352$.
The ``\textbf{fixed iteration count}'' experiments all use $80\,000$ steps, or $4096 \times 64 \times 80\,000 \approx 21B$ tokens.
We use our own implementation of nGPT in our fork of TorchTitan \citep{liang2025torchtitan}.
The ``\textbf{20 tokens per parameter}'' experiments all use the token count which is $20$ times the number of non-embedding parameters\footnote{or, more specifically, the number of steps corresponding to this token count and rounded up to a multiple of 250}.
This is heuristically treated as the compute-optimal number of tokens in the literature \citep{dey2025dont,wen2025fantasticpretrainingoptimizers}, although
\citet{hoffmann2022empiricalanalysiscompute} include embedding parameters when fitting power laws (see also \citet{pearce2024reconciling} on this topic).
We use Adam (AdamW with weight decay 0.0)
with $\beta_1 = 0.9$, $\beta_2 = 0.95$, $\epsilon = 10^{- 16}$ (following \citet{dey2025dont}), no warm-up, the learning rate decayed to $10\%$ of its peak (initial) value using a cosine schedule. Our base depth and width are $10$.
Unless averaging over seeds, we use validation loss EMA with $\beta = 0.95$.

\subsection{Growing $\nheads$ at a fixed iteration count}\label{sec:growing-nheads-at-fixed-iteration}

We first fix the head dimension at\footnote{$102 = \lfloor 1024 / 10 \rfloor$, where $10$ is the base depth and width} 102 and scale the number of heads,
fixing the number of blocks (depth) at $\depth = 12$. As the number of heads grows from 8 to 40,
the total number of parameters grows from 0.26\,B to 3.22\,B\internalComment{!}.
The baseline nGPT implementation without our reparametrization does \textit{not} show transfer of the learning rate over width
(\cref{fig:width-fixed-steps-baseline}). In the same setting,
our parametrization shows good transfer over the number of heads (\cref{fig:width-fixed-steps-mupsamescalelogits-tuned-midalign}).

\begin{roundnote}
Width corrections (powers of $\widthcor$) are important for width transfer in $\nu$GPT.
\end{roundnote}

\subsection{Growing $\depth$ at a fixed iteration count}\label{sec:depth-corr-ablations}

\begin{figure}[t]
  \centering
  \begin{tabular}{cc}
    \begin{subfigure}[h]{0.48\textwidth}
      \includegraphics[width=\linewidth]{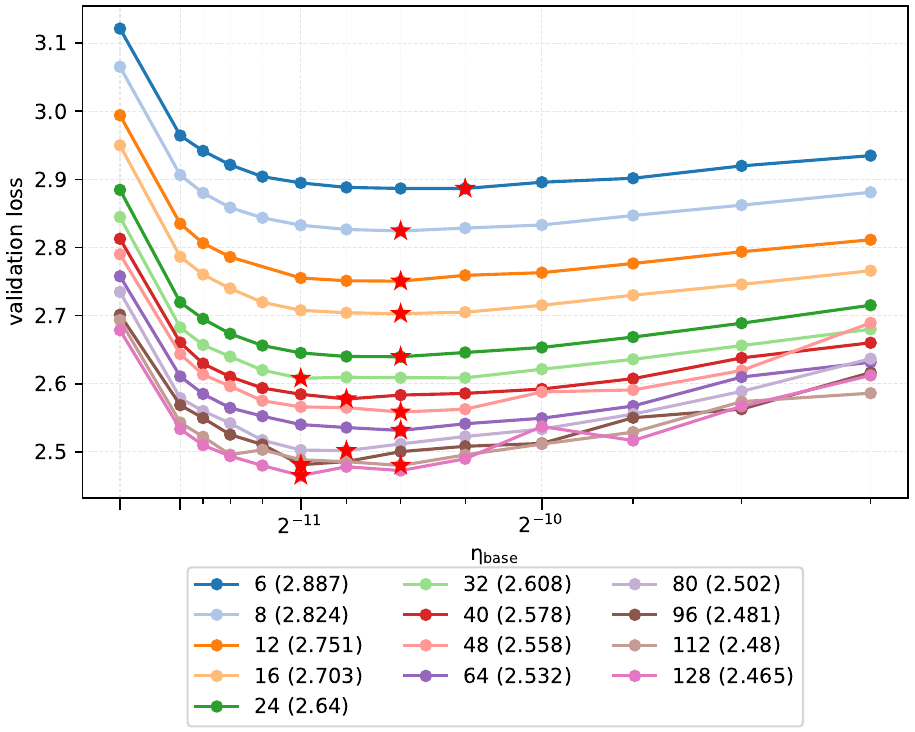}
      \caption{nGPT (baseline)\label{fig:depth-sweep-ngpt}}
    \end{subfigure}
    &
      \begin{subfigure}[h]{0.48\textwidth}
        \includegraphics[width=\linewidth]{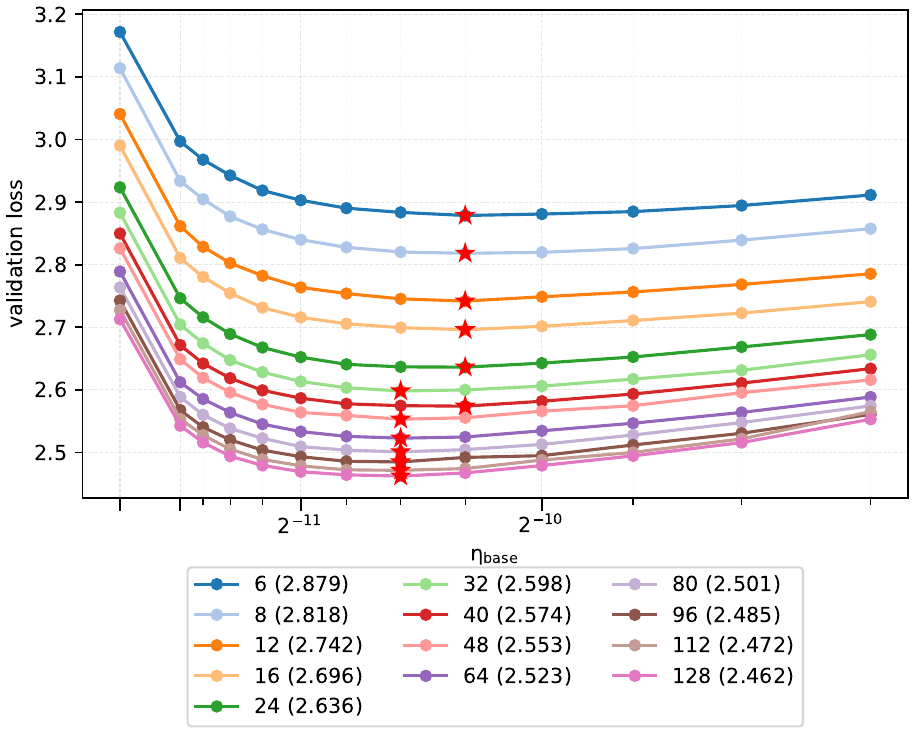}
        \caption{$\nu$GPT (ours)\internalComment{(mid align) with $\eta_{\text{output}}$ multiplied by $2^{-1}$.}\label{fig:depth-sweep-ngpt-samescalelogits-tuned-midalign}}
      \end{subfigure}
  \end{tabular}
  \caption{
    \textbf{Depth} sweeps with $\depth$ increasing (in the legend, with the best loss in parentheses), head dimension and $\nheads = 12$ fixed, 80\,000 iterations (about 21\,B tokens).
    Both baseline nGPT and our CompleteP-inspired \citep{dey2025dont} parametrization show decent transfer, with ours winning slightly in stability and performance.
    \label{fig:depth-sweeps}
    \internalComment{See \cref{exp:depth-sweeps-at-fixed-step-count}.}}
\end{figure}

In this set of sweeps, we fix the head dimension (at the same value 102 as above) and the number of heads at 12 (so that $d_{\text{model}} = 1224$), and scale the number of blocks (depth) $\depth$. As we vary $\depth$ from 8 to 128, the number of parameters in the model grows from 0.39\,B to 2.55\,B.\internalComment{!}

Interestingly, baseline nGPT already shows decent transfer over depth at a fixed iteration count (\cref{fig:depth-sweep-ngpt}),
although models with the highest depth become somewhat unstable.
Therefore, the minimal depth corrections in \cref{sec:mungpt}, although theoretically well-motivated,
may not be necessary. In addition, we observe that the average $\valpha_A$ and $\valpha_M$ components
(over all components and all blocks) at the end of training decrease at power laws in depth close to $\depth^{-0.5}$
(see \cref{fig:depth-sweep-baseline-alpha-power-law} for precise formulas),
which suggests that among $\depthalph \in \crl{0, 1 / 2, 1}$ (defined in \cref{sec:width-depth-corr}),
the trained baseline Normalized Transformer prefers $\depthalph = 1 / 2$ on average (it should be noted that the variance across components and blocks is very high,
as seen from error bars).

We also verify that our parametrization gives good transfer over depth in \cref{fig:depth-sweep-ngpt-samescalelogits-tuned-midalign}.

\begin{roundnote}
  Both baseline nGPT and our reparametrization show good learning rate transfer over depth.
  Our parametrization shows lower learning rate sensitivity
  at large depths.
\end{roundnote}

\subsection{Alignment exponents}\label{sec:alignment-exponents}

We show in \cref{sec:width-deriv} (sketch in \cref{sec:width-depth-corr}) that if we assume fixed values for alignment exponents (\cref{def:alignment-exponents}), then the maximal stable learning rates should scale as
\begin{equation*}
\eta_{\text{hidden}} \asymp \dmodel^{- \max\crl{\alpha_{\text{hidden}}, \nu_{\text{hidden}}}}, \quad \eta_{\text{output}} \lesssim \dmodel^{- \max\crl{\alpha_{\text{output}}, \nu_{\text{output}}}}.
\end{equation*}
Recall\internalComment{!} the ``full alignment'' regime where $\max\crl{\alpha_{\text{hidden}}, \nu_{\text{hidden}}} = \max\crl{\alpha_{\text{output}}, \nu_{\text{output}}} = 1$ and the ``no alignment''
regime where $\max\crl{\alpha_{\text{hidden}}, \nu_{\text{hidden}}} = \max\crl{\alpha_{\text{output}}, \nu_{\text{output}}} = 1 / 2$.
We measure the alignment exponents in a representative run of $\nu$GPT with $\depth = \nheads = 12$
(see \cref{sec:alignment-exponents-for-wider-model} for a different size) and
see that neither of these two regimes perfectly match observation.
\Cref{fig:alignment_val} shows that all $\alpha$ and $\nu$ vary significantly during training, with $\alpha_{\text{hidden}}, \nu_{\text{hidden}}$ and $\alpha_{\text{output}}, \nu_{\text{output}}$ much higher during a short period at the beginning than at the end (where they settle close to 0.5--0.6).

We find that the exact middle of these two regimes gives essentially perfect transfer over width:
\begin{equation*}
\max\crl{\alpha_{\text{hidden}}, \nu_{\text{hidden}}} = \max\crl{\alpha_{\text{output}}, \nu_{\text{output}}} = 3 / 4,
\end{equation*}
and we make this choice that in our final recommendation.
This setting is close to observed average alignment exponents if we weight them by the loss decrease (\cref{fig:alignment_val_loss_weighted}).
This naturally gives more weight to the beginning of training. Although this is not obviously the right weighting, it is reasonable because loss decrease may parametrize the importance of updates, and it is consistent with the $\mu$P methodology which is focused on the first one or two steps.

In addition, the plots reported here show definitively that the correct value of $\omega_{\text{hidden}}$ and $\omega_{\text{output}}$ is one half:
\begin{equation*}
\omega_{\text{hidden}} = \omega_{\text{output}} = 1 / 2.
\end{equation*}
This confirms the point made theoretically in \citet{everett2024scalingexponentsacross}
that the assumption $\omega_{\text{output}} = 1$ (uniquely motivating $\mu$P)
is too conservative.

\begin{figure}[t]
  \centering
  \begin{subfigure}[t]{0.32\textwidth}
    \centering
    \includegraphics[width=\textwidth]{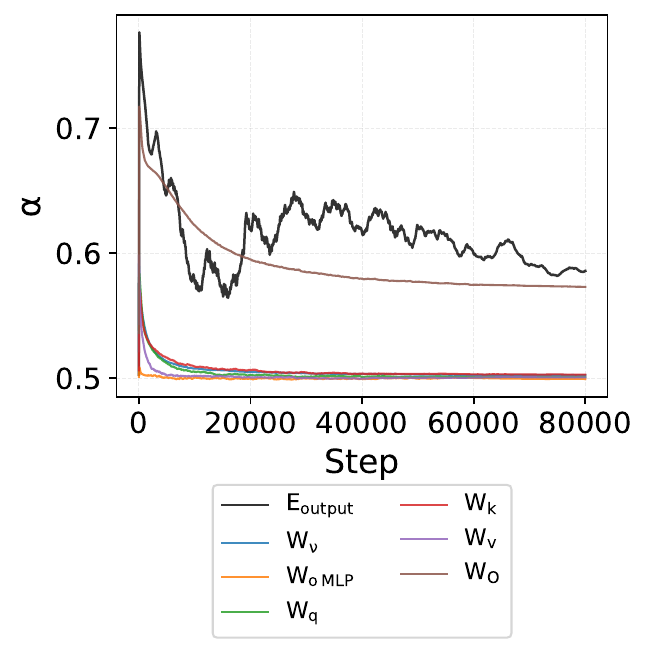}
    \caption{Alignment exponent $\alpha$}
  \end{subfigure}
  \hfill
  \begin{subfigure}[t]{0.32\textwidth}
    \centering
    \includegraphics[width=\textwidth]{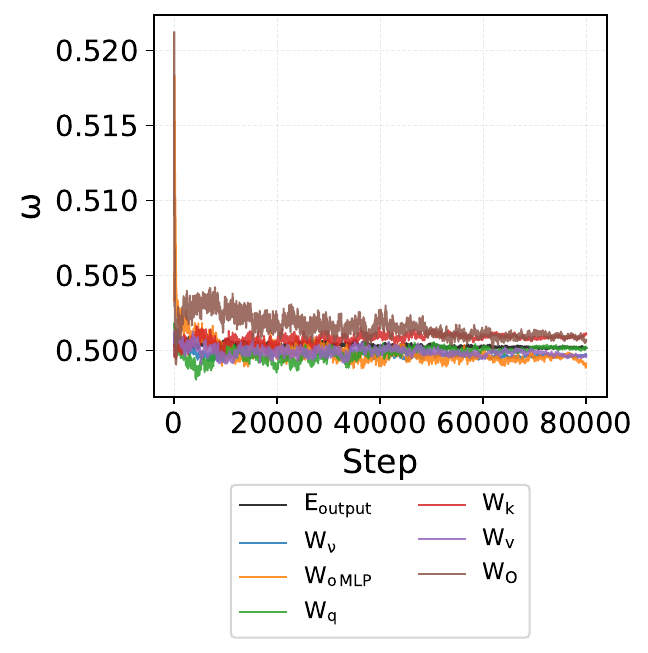}
    \caption{Alignment exponent $\omega$}
  \end{subfigure}
  \hfill
  \begin{subfigure}[t]{0.32\textwidth}
    \centering
    \includegraphics[width=\textwidth]{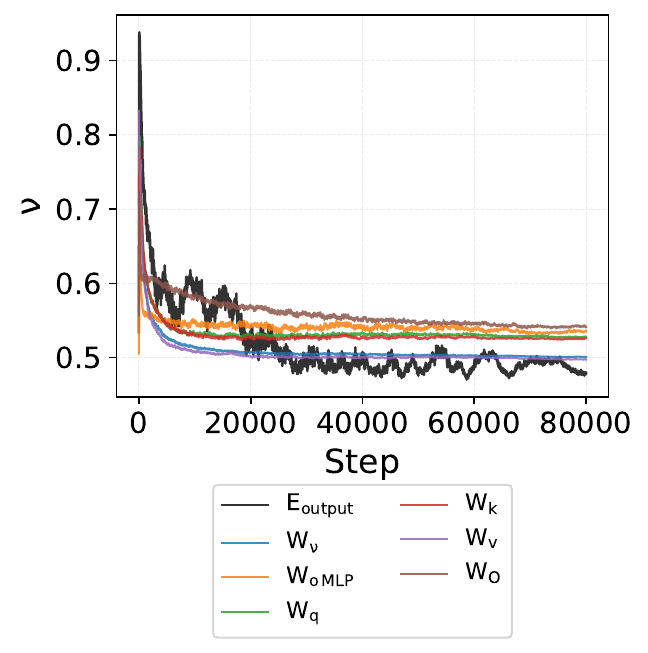}
    \caption{Alignment exponent $\nu$}
  \end{subfigure}
  \hfill
  \caption{Alignment exponents (\cref{def:alignment-exponents}) of $\nu$GPT with $\depth = \nheads = 12$ on a fixed validation batch, averaged over layers.}
  \label{fig:alignment_val}
\end{figure}

\begin{figure}[t]
  \centering
  \begin{subfigure}[t]{0.32\textwidth}
    \centering
    \includegraphics[width=\textwidth]{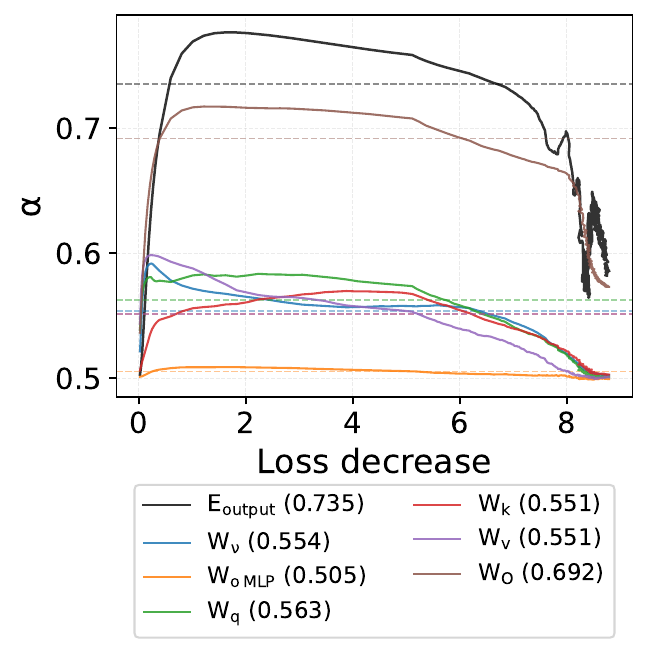}
    \caption{Alignment exponent $\alpha$}
  \end{subfigure}
  \hfill
  \begin{subfigure}[t]{0.32\textwidth}
    \centering
    \includegraphics[width=\textwidth]{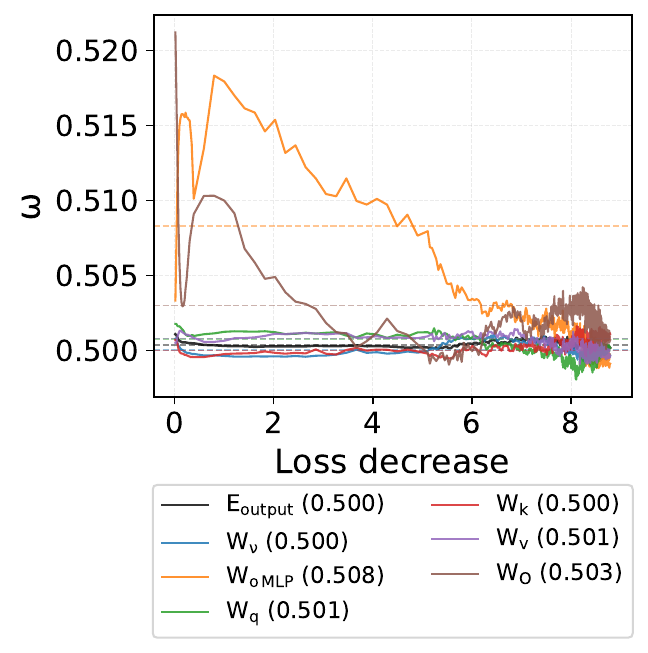}
    \caption{Alignment exponent $\omega$}
  \end{subfigure}
  \hfill
  \begin{subfigure}[t]{0.32\textwidth}
    \centering
    \includegraphics[width=\textwidth]{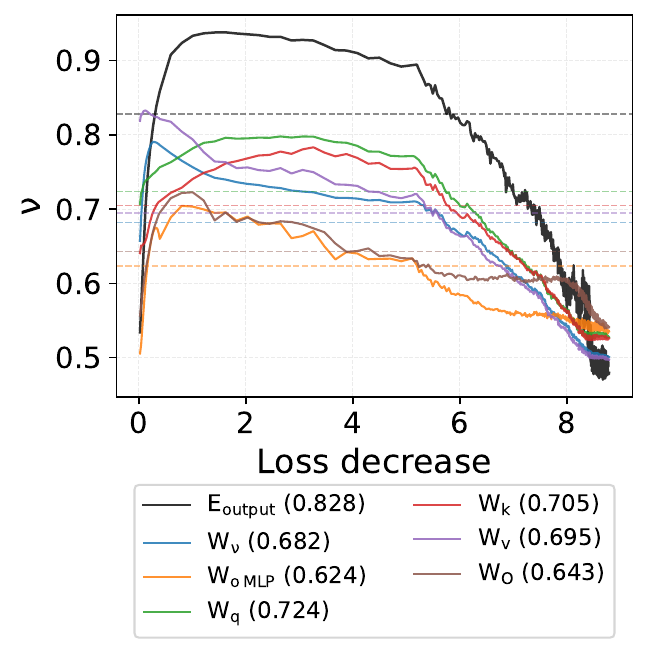}
    \caption{Alignment exponent $\nu$}
  \end{subfigure}
  \hfill
  \caption{Alignment exponents (\cref{def:alignment-exponents}) of $\nu$GPT with $\depth = \nheads = 12$ on a fixed validation batch, averaged over layers,
   viewed as a function of loss decrease. The mid alignment assumption (\cref{def:full-no-mid-align}) matches the observations well.}
  \label{fig:alignment_val_loss_weighted}
\end{figure}

\begin{figure}[t]
  \centering
  \begin{subfigure}[t]{0.32\textwidth}
    \centering
    \includegraphics[width=\textwidth]{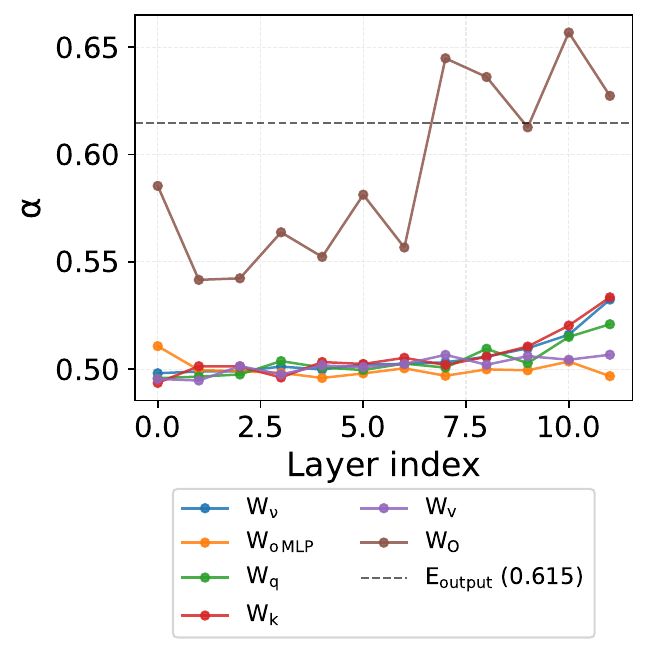}
    \caption{Alignment exponent $\alpha$}
  \end{subfigure}
  \hfill
  \begin{subfigure}[t]{0.32\textwidth}
    \centering
    \includegraphics[width=\textwidth]{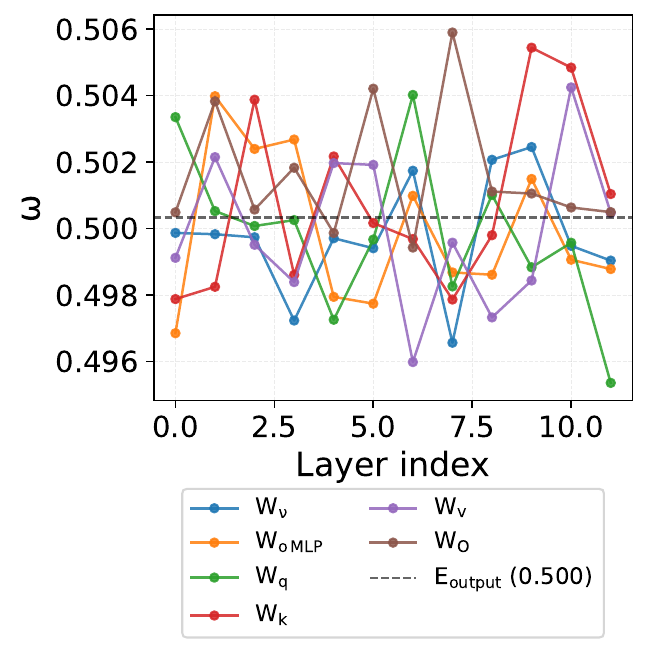}
    \caption{Alignment exponent $\omega$}
  \end{subfigure}
  \hfill
  \begin{subfigure}[t]{0.32\textwidth}
    \centering
    \includegraphics[width=\textwidth]{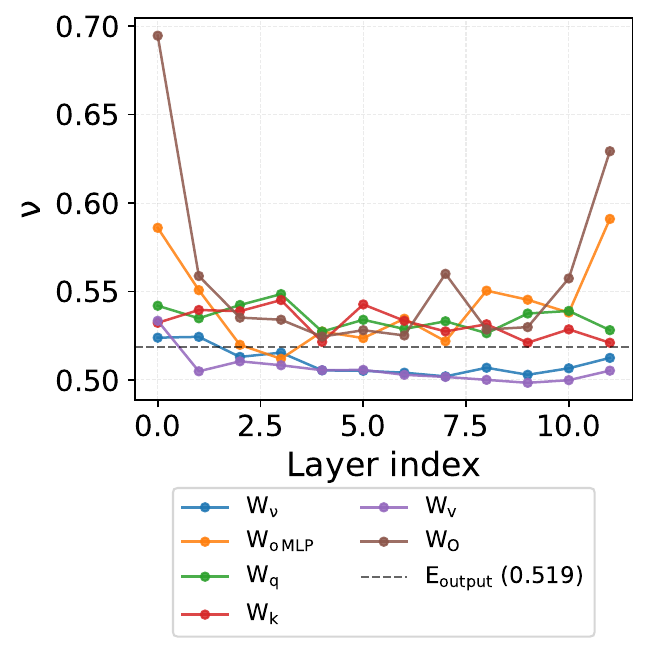}
    \caption{Alignment exponent $\nu$}
  \end{subfigure}
  \hfill
  \caption{Alignment exponents (\cref{def:alignment-exponents}) of $\nu$GPT with $\depth = \nheads = 12$ on a fixed validation batch vs. layer index, averaged over steps.}
  \label{fig:alignment_val_layer_dependence}
\end{figure}


\subsection{$\nu$GPT gives better transfer over width than $\mu$P}\label{sec:nu-gpt-gives-better-transfer-over-width-than-mup}

We ablate our carefully tuned alignment exponents by comparing with standard $\mu$P-style
width corrections from prior literature. \Cref{fig:mup-drifts-ours-not} shows that $\nu$GPT gives somewhat better transfer over width.
We do not interpret this as falsifying $\mu$P (it still gives decent transfer)
but rather making some corrections in the theory. In particular, we confirm the point made in \citet{everett2024scalingexponentsacross}
that $\mu$P is by no means the unique maximal update parametrization achieving feature learning and giving
learning rate transfer (in particular, our parametrization is somewhat different from any parametrization in prior work).

\begin{figure}[h!]
  \centering

  \begin{subfigure}[h]{0.48\textwidth} \includegraphics[width=\linewidth]{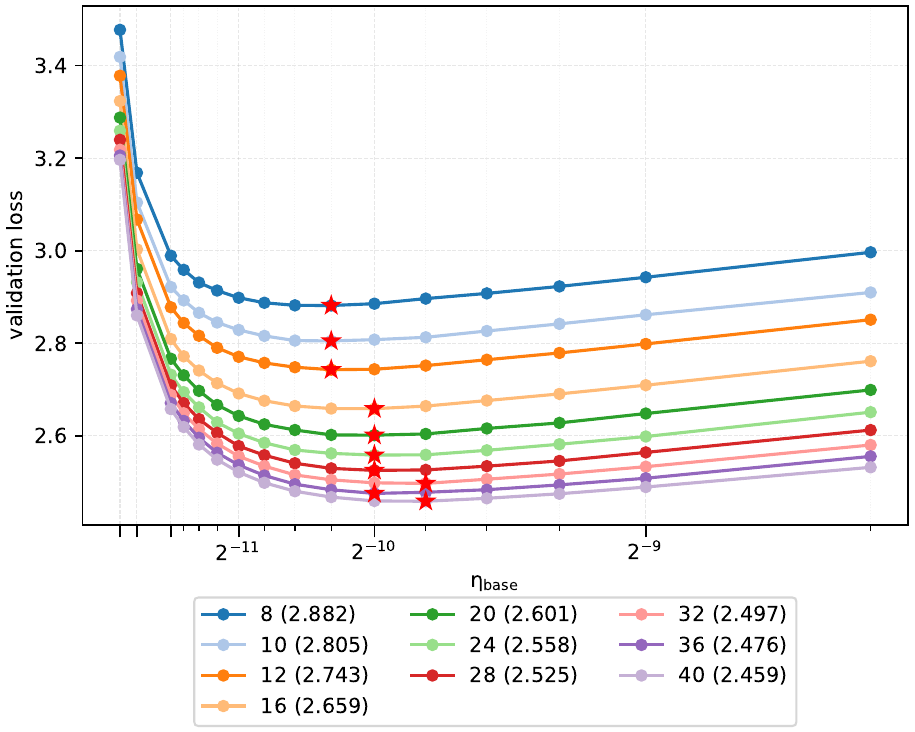}
\caption{nGPT with $\mu$P-style width corrections (``CompleteP'' in \cref{tab:our-param})\internalComment{$\mu$nGPT (full align) with $\eta_{\text{output}}$ multiplied by $2^{-1.5}$}}
\end{subfigure}
\hfill
\begin{subfigure}[h]{0.48\textwidth} \includegraphics[width=\linewidth]{width_steps80000_sqkBaseScale0.03_szScale0.03_alphaScale0.03_eigenBaseInit0.05_mup_sameScaleLogits_etaOut0.5_etaOutPow0.75_etaHidPow0.75_val_loss.pdf}
  \caption{$\nu$GPT (ours)\internalComment{(mid align) with $\eta_{\text{output}}$ multiplied by $2^{-1}$}}
\end{subfigure}

\caption{The number of heads is swept as in \cref{fig:width-fixed-steps}.
  For nGPT with $\mu$P-style width corrections (``CompleteP'' in \cref{tab:our-param}) the learning rate slightly drifts to the right.
  Our parametrization provides essentially perfect transfer of the learning rate over width.
The multipliers for $(\eta_{\text{input}}, \eta_{\text{output}})$ are tuned for both parametrizations separately.
\label{fig:ours-vs-mup-width}
}

\label{fig:mup-drifts-ours-not}
\end{figure}

\begin{roundnote}
  Measuring the alignment exponents leads to corrections into the $\mu$P-type theory,
  providing somewhat better learning rate transfer over width at a fixed token horizon.
\end{roundnote}

The correct value $\omega_{\text{output}} = 1 / 2$ instead of $\omega_{\text{output}} = 1$ allows the logits to be order-1
both at initialization and during training, rather than of order $1 / \sqrt{\dmodel}$ at initialization as in $\mu$P.
An ablation of the logits scaling specifically (with other choices fixed) is provided in \cref{sec:mup-or-not-mup}.

\subsection{Depth-$\mu$P style correction in $\eta_{\text{hidden}}$ would break transfer over depth}

We observe in \cref{fig:depth-mup-depth-sweep} that a depth correction in $\eta_{\text{hidden}}$
motivated by \eqref{eq:LWphMK} with $\depthalph = 1 / 2$, as done in Depth-$\mu$P,
does not show transfer over depth: the learning rate strongly drifts to the right suggesting
that $\eta_{\text{hidden}}$ should not be changed.
This is likely an artifact of the architecture: $\valpha_A$, $\valpha_M$ are trainable and we only control their (small) initial values
and learning rate.
This finding does not confirm or refute \citep{dey2025dont} that ``Depth-$\mu$P'' style $\depthalph = 1 / 2$ is incorrect
but it does suggest that depth transfer in normalized models
(with or without trainable convex combinations)
is a fruitful subject of theoretical research.

\begin{figure}[h!]
  \centering

  \begin{tabular}{c c}
    \begin{subfigure}[h]{0.48\textwidth} \includegraphics[width=\textwidth]{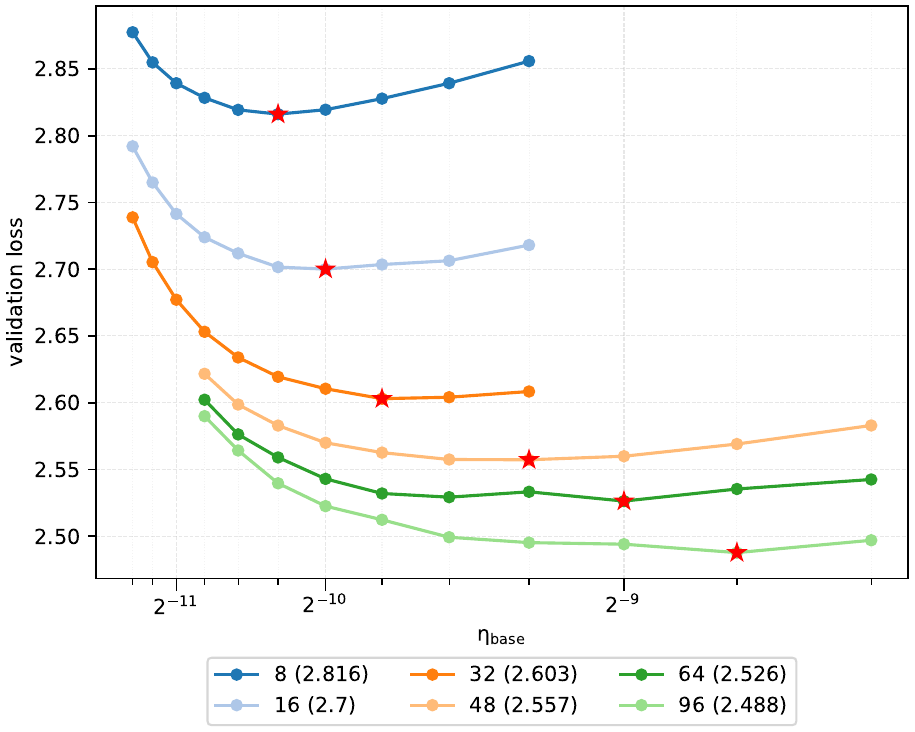}
      \caption{Depth sweep of ``Depth-$\mu$P'' in \cref{tab:our-param}.}
    \end{subfigure}
    &
      \begin{subfigure}[h]{0.48\textwidth}
        \includegraphics[width=\linewidth]{depth_steps80000_sqkBaseScale0.03_szScale0.03_alphaScale0.03_eigenBaseInit0.05_mup_sameScaleLogits_etaOut0.5_etaOutPow0.75_etaHidPow0.75_val_loss.pdf}
        \caption{$\nu$GPT (ours)\internalComment{(mid align) with $\eta_{\text{output}}$ multiplied by $2^{-1}$.}}
      \end{subfigure}

  \end{tabular}

  \caption{
    The number of layers is swept as in \cref{fig:depth-sweeps}.
    A $\depthcor^{-1 / 2}$ correction to $\eta_{\text{hidden}}$ like in Depth-$\mu$P would break transfer over depth.
  }

  \label{fig:depth-mup-depth-sweep}
\end{figure}

\subsection{Token count correction}\label{sec:token-count}

We train a fixed $\nu$GPT model ($\depth = \nheads = 12$)
with token count corrections removed and fit a power law describing how the optimal learning rate
decreases with the number of iterations.
We see that it decreases roughly as $(\text{iter. count})^{- 1 / 3}$
(\cref{fig:steps-sweep-ngpt-ours}).
\begin{figure}[h!]
  \centering

  \begin{tabular}{c c}
    \begin{subfigure}[h]{0.48\textwidth}
      \includegraphics[width=\linewidth]{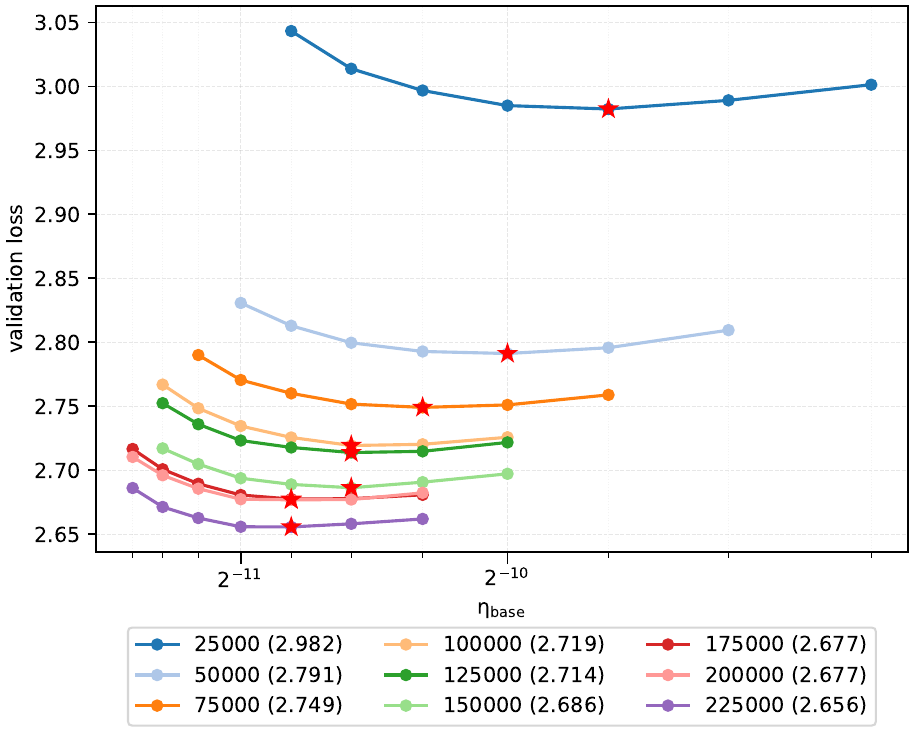}
    \end{subfigure}
    &
      \begin{subfigure}[h]{0.48\textwidth}
        \includegraphics[width=\linewidth]{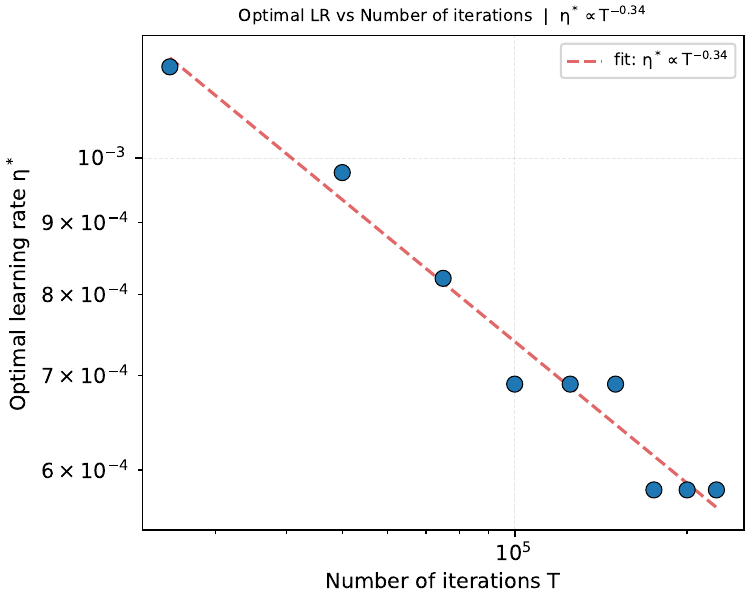}
      \end{subfigure}
  \end{tabular}
  \caption{A sweep of a fixed $\nu$GPT model with different number of iterations (in the legend, with the best loss in parentheses) but without token count corrections: optimal peak learning rate decreases at about $(\text{iter. count})^{- 1 / 3}$.
  }
  \label{fig:steps-sweep-ngpt-ours}
\end{figure}
This matches \citet{bjorck2025scaling} who used non-normalized architectures
(cf. $\beta = 0.32$ there).
An additional sweep with different model sizes is provided in \cref{sec:width-sweep-at-20-tpp}.

\begin{roundnote}
  It is a robust finding that the learning rate decreases proportionally to $(\text{iter. count})^{- 1 / 3}$,
  calling for an iteration count correction $\datacor^{- 1 / 3}$ (as in \cref{tab:our-param}).
\end{roundnote}


\section{Concluding remarks}

We have obtained a parametrization $\nu$GPT of the Normalized Transformer that shows good transfer of the learning rate
over model width, depth and token count as well as their combinations, at no loss of performance.
An important feature of the normalized setting is that it is more controlled, reducing the number of assumptions
(such as matrix norms)
that are to be taken on faith.
For example, the models achieve impressive performance without using weight decay, which is a confounder
for HP transfer theory (e.\,g. \cite{kosson2026weight}).
We have further made significant effort to stay empirically grounded, e.\,g. by measuring weight-activation
alignment exponents,
avoiding $\mu$P-type assumptions commonly used in the literature, and questioning
somewhat imprecise token count power laws theoretically predicted in recent works.
This motivates a few theoretical future directions,
such as a more complete understanding of depthwise
transfer in normalized models, the dynamic understanding of weight decay when it is, in fact, used,
explaining analytically the breakdown of the $\mu$P assumptions in toy models,
and predicting the $\sim 1 / 3$ exponent in the token count corrections.

\section*{Funding Acknowledgments}
BH is supported by a 2024 Sloan Fellowship in Mathematics, NSF grant DMS-2143754, DMS-2133806, and DARPA AIQ grant (HR001124S0029).

\clearpage
\newpage
\beginappendix

\section{Proofs and derivations}\label{sec:proofs-derivations}

\subsection{Width scaling}\label{sec:width-deriv}

In this \lcnamecref{sec:width-deriv}, we will show that the width corrections
described in \cref{sec:summary-of-changes} satisfy \cref{des:stab-init,des:feature-learn}
in the setting of fixed depth.
Specifically, in \cref{lem:Tgznub} we show that the first hidden state $\vh_n^1 = \mE_{\text{input}} \vx_n$ is initialized stably
and evolves stably and non-trivially.
In \cref{lem:DUvOrU}, we show that if this was true of the hidden state before an attention block,
it is also true of the hidden state after this block, and so on. Then, since depth $\depth$ is fixed
and the number of iterations is bounded, by induction, we can conclude that \cref{des:stab-init,des:feature-learn} are satisfied
during the whole training.

\paragraph{Notation}
We ask the reader to recall the notation from \cref{sec:width-depth-corr}.\internalComment{!}
In this section, depth is considered fixed but the number of heads can vary (and we think of it as large).
We will sometimes use $N = \dmodel$ to declutter notations.
We will follow \citet{everett2024scalingexponentsacross} in using non-standard $O(\cdot)$ and similar notation:
specifically, $a_N \lesssim b_N$ or $a_N = O(b_N)$ means $\abs{a_N / b_N}$ is allowed to grow sub-polynomially, that is,
almost surely $\limsup_{N \to \infty} \log_N \abs{a_N / b_N} \leq 0$, whereas
$a_N \asymp b_N$ or $a_N = \Theta(b_N)$ means $a_N \lesssim b_N$ and $b_N \lesssim a_N$.
Note that this definition requires care because properties that more standard $O(\cdot)$ notation
has can be false. For example, it is false that $\sum_{k = 1}^N O(1) = O(N)$ (a counter-example is $a_N^{(k)} \equiv k$ for $k \in \mathbb{Z}_{> 0}$).

\paragraph{Alignment}
Recall the definition of alignment exponents given in \cref{def:alignment-exponents}.

We make the following classical \citep{everett2024scalingexponentsacross}
simple assumption that is definitively consistent with observation (as shown in \cref{sec:alignment-exponents}):

\begin{assumption}\label{ass:alignment}
Assume $\omega_{\text{hidden}} = 1 / 2$ (no alignment).
\end{assumption}

Requiring additionally $\omega_{\text{output}} = 1$ corresponds to classical \mup,
but $\omega_{\text{output}} = 1 / 2$ is consistent with observation (\cref{sec:alignment-exponents})
and allows for non-equivalent maximal-update parametrizations
that achieve feature learning at high widths \citep{everett2024scalingexponentsacross}.

\paragraph{Learning rate of  ``rescalers''}
Recall that trainable ``rescalers'' such as $\frac{s_{\text{$qk$,init}}}{s_{\text{$qk$,scale}}} \vs_{q k}$
have two hyperparameters $s_{\text{$qk$,init}}$ and $s_{\text{$qk$,scale}}$.
Since each component of $\vs_{q k}$ is initialized with $s_{\text{$qk$,scale}}$, each component of
$\frac{s_{\text{$qk$,init}}}{s_{\text{$qk$,scale}}} \vs_{q k}$ at initialization is $s_{\text{$qk$,init}}$.
Thus, the hyperparameter $s_{\text{$qk$,init}}$ controls the initialization value, whereas $s_{\text{$qk$,scale}}$ controls
the learning rate. So, for certainty, we will leave the learning rate of the raw $\vs_{q k}$ vector unmodified
(equal to $\eta_{\text{base}}$). In other words, $\Delta s_{q k, i} = \Theta(1)$ for each coordinate $i$.
We do not lose generality, because if we needed the learning rate of $\frac{s_{\text{$qk$,init}}}{s_{\text{$qk$,scale}}} \vs_{q k}$ to scale with width or depth,
we would modify $s_{\text{$qk$,scale}}$.

\begin{assumption}\label{ass:rescalers-movement}
  At each step $t = O(1)$ we have
\begin{equation*}
\Delta s_{q k, i}(t) \asymp \Delta s_{u, i}(t) \asymp \Delta s_{\nu, i}(t) \asymp \Delta s_{z, i}(t) \asymp \Delta \alpha_{A, i}(t) \asymp \Delta \alpha_{M, i}(t) \asymp 1.
\end{equation*}
\end{assumption}

We now proceed with proving that neither block of $\nu$GPT breaks the stability and non-triviality requirements.

\begin{lemma}[Input]\label{lem:Tgznub}
  Suppose \cref{ass:alignment} holds,
  and $\eta_{\text{input}} \asymp d_{\text{model}}^{- 1 / 2}$.
  Then, for one-hot $\vx_n$, the vector $\vh_n^1(t) := \mE_{\text{input}}(t) \vx_n$ satisfies
  $\norm{\vh_n^1} = \Theta(1)$
  and $\norm{\Delta \vh_n^1} = \Theta(1)$.
\end{lemma}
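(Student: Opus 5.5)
Since $\vx_n$ is one-hot, say $\vx_n = e_k$, the vector $\vh_n^1(t) = \mE_{\text{input}}(t)\vx_n$ is simply the $k$th column of $\mE_{\text{input}}(t)$. The plan is to reduce both claims to statements about this single column, using only two facts: the column normalization performed before every step, and the signGD-like scaling of Adam updates assumed in \cref{sec:width-depth-corr}.

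\textbf{Step 1: the norm.} Columns of $\mE_{\text{input}}$ are renormalized to unit $\ell_2$ norm before each training step, including the first. Hence $\norm{\vh_n^1(t)} = 1$ exactly for every $t$, and in particular $\norm{\vh_n^1} = \Theta(1)$ at initialization. This is independent of $\sigma^2_{\text{input}}$, which is why that variance is listed as ``arbitrary'' in \cref{tab:our-param}.

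\textbf{Step 2: upper bound on the movement.} We have $\Delta\vh_n^1(t) = \mE_{\text{input}}(t)e_k - \mE_{\text{input}}(0)e_k$, the difference of two unit vectors, so $\norm{\Delta\vh_n^1}\le 2$ trivially. This gives $\norm{\Delta \vh_n^1} = O(1)$ at once.

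\textbf{Step 3: lower bound on the movement (non-triviality).} Under the signGD-like assumption, each of the $\dmodel$ coordinates of column $k$ receives a raw update of magnitude $\asymp \eta_{\text{input}}$ whenever token $k$ appears in the batch. The raw update $\delta$ therefore has $\norm{\delta} \asymp \sqrt{\dmodel}\,\eta_{\text{input}} \asymp 1$.

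Renormalization then maps $e + \delta \mapsto (e+\delta)/\norm{e+\delta}$. This map moves $e$ by an amount comparable to the component of $\delta$ orthogonal to $e$, as long as $\delta$ is not nearly parallel to $e$.

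For the orthogonal component, we use that the sign pattern of the update is not aligned with the randomly initialized column: $e$ has i.i.d.\ Gaussian direction, so $\abs{\langle e,\delta\rangle}\lesssim \norm{\delta}\,\dmodel^{-1/2}$, up to sub-polynomial factors in the paper's convention. This is an $\omega$-type, no-alignment statement, and it is the role I would assign to \cref{ass:alignment}. It follows that $\norm{P_{e^\perp}\delta}\asymp\norm{\delta}\asymp 1$, and hence $\norm{\Delta\vh_n^1}\gtrsim 1$ after one step.

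For a bounded number of steps $t=O(1)$, the same estimate applies to the first step in which token $k$ is updated, and later steps cannot cancel it exactly except on a degenerate event. In the paper's heuristic framework, that event is excluded by the same genericity.

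\textbf{Main obstacle.} The delicate point is Step 3: showing that renormalization does not wash out the update, i.e.\ ruling out $\delta$ being nearly parallel to the current column. This is exactly where a non-alignment hypothesis is needed. I would handle it by invoking the genericity behind \cref{ass:alignment}, arguing that the inner product of a $\pm\eta$ vector with an independent isotropic unit vector is $O(\eta)$, far smaller than $\norm{\delta}$. One should also note that if $\norm{\delta}$ were large, normalization would still cap the movement at $2$, so the conclusion is robust.
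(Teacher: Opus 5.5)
Your proposal is correct and follows the paper's proof: the one-hot input picks out a column of $\mE_{\text{input}}$, column normalization gives $\norm{\vh_n^1}=1$, and signGD-like scaling gives $\norm{\Delta\vh_n^1}\asymp\eta_{\text{input}}\sqrt{\dmodel}\asymp 1$. The paper simply identifies $\Delta\vh_n^1$ with a column of the raw update and does not discuss renormalization in the proof. Your Steps~2--3 (the trivial bound by $2$ and the non-alignment argument) are therefore extra care beyond what the paper does. That non-alignment step is really an additional genericity heuristic, not a consequence of \cref{ass:alignment}, which only concerns $\omega_{\text{hidden}}$.
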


\begin{proof}
Since $\vx_n$ is one-hot, $\vh_n^1$ is a column of $\mE_{\text{input}}$,
so it is normalized: $\norm{\vh_n^1} = 1$.
Next, $\Delta \vh_n^1$ is a column of $\Delta \mE_{\text{input}}$. Therefore,
\begin{equation*}
\norm{\Delta \vh_n^1} \asymp \eta_{\text{input}} \sqrt{\dmodel} \asymp 1.\qedhere
\end{equation*}
\end{proof}

The next \lcnamecref{lem:mlp} deals with the stability of forward pass and updates in the MLP block.

\begin{lemma}[MLP]\label{lem:mlp}
  Suppose \cref{ass:alignment,ass:rescalers-movement} hold,
  the hyperparameters
  $s_{\text{$u$,init}}$ $s_{\text{$u$,scale}}$, $s_{\text{$\nu$,init}}$, $s_{\text{$\nu$,scale}}$
  are positive constants,
  and $\eta_{\text{hidden}} \lesssim \dmodel^{- \max\crl{\alpha_{\text{hidden}}, \nu_{\text{hidden}}}}$.
  In addition, suppose $\norm{\Delta \vh_n(t)} \asymp 1$.
  Then the components\footnote{We assume implicitly that components of general vectors are of the same scale. (They in fact become i.\,i.\,d. random variables in the infinite-width limit, see e.\,g. \citet{pmlr-v139-yang21c}).} of $\text{MLP}(\vh_n)$ (at time $0$) and $\Delta \text{MLP}(\vh_n) (t)$ are both $\Theta(d_{\text{model}}^{- 1 / 2})$.
\end{lemma}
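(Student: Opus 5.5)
I would follow the MLP block in forward order, tracking two quantities at every intermediate vector: the scale of its entries at initialization, and the scale of its change over $t=O(1)$ steps. The vectors are $\mW_u \vh_n$ and $\mW_\nu \vh_n$, then $\vu$ and $\vnu$, then $\text{SiLU}(\vnu)\odot\vu$, and finally $\mW_{o\text{MLP}}$ applied to that product.

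\textbf{Step 1 (up and gate projections).} The rows of $\mW_u$ and $\mW_\nu$ have unit norm and $\norm{\vh_n}=1$. By the independence (alignment exponent $1/2$) argument used earlier for $\mE_{\text{output}}\vh_n$, each entry of $\mW_u\vh_n$ and $\mW_\nu\vh_n$ is $\Theta(\dmodel^{-1/2})$. For the change, I split
\[
\Delta(\mW\vh_n)=\Delta\mW\,\vh_n+\mW\,\Delta\vh_n+\Delta\mW\,\Delta\vh_n
\]
exactly as in \cref{sec:width-depth-corr}. By \cref{ass:alignment} and the hypothesis $\norm{\Delta\vh_n}\asymp 1$, the middle term has entries of scale $\dmodel^{-1/2}$. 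The two terms involving $\Delta\mW$ have entries $\lesssim \dmodel^{\max\{\alpha,\nu\}-1}\eta_{\text{hidden}}\lesssim \dmodel^{-1}$. So the change has entries of scale $\Theta(\dmodel^{-1/2})$, and it is non-trivial because of the $\omega$ term.

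\textbf{Step 2 (rescalers).} Multiplying by $\frac{s_{u,\text{init}}}{s_{u,\text{scale}}}\vs_u$ keeps the entries of $\vu$ at $\Theta(\dmodel^{-1/2})$. The rescalers have $\Theta(1)$ initial values and, by \cref{ass:rescalers-movement}, $\Theta(1)$ movement. By the product rule, $\Delta\vu$ then has entries $\Theta(\dmodel^{-1/2})$. The extra factor $\dmodel^{1/2}$ makes the entries of $\vnu$ and of $\Delta\vnu$ both $\Theta(1)$.

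\textbf{Step 3 (nonlinearity).} With $\nu_i=\Theta(1)$ and $\Delta\nu_i=\Theta(1)$, both $\text{SiLU}(\nu_i)$ and $\Delta\,\text{SiLU}(\nu_i)$ are $\Theta(1)$, using smoothness and the non-degeneracy of SiLU on $\Theta(1)$ inputs. Combining with Step 2 via the product rule, the product $\vy:=\text{SiLU}(\vnu)\odot\vu$ has entries $\Theta(\dmodel^{-1/2})$ at initialization, and $\Delta\vy$ has entries $\Theta(\dmodel^{-1/2})$. Hence $\norm{\vy}\asymp\norm{\Delta\vy}\asymp (d_{\text{MLP}}/\dmodel)^{1/2}\asymp 1$, since $d_{\text{MLP}}\asymp\dmodel$.

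\textbf{Step 4 (down projection).} I apply the same three-term decomposition to $\mW_{o\text{MLP}}\vy$, now with $\din=d_{\text{MLP}}$ and unit-norm columns. At initialization the entries of $\mW_{o\text{MLP}}\vy$ are of order $\norm{\vy}\dmodel^{-1/2}\asymp\dmodel^{-1/2}$. The $\omega$ term $\mW_{o\text{MLP}}\Delta\vy$ likewise gives $\Theta(\dmodel^{-1/2})$ by \cref{ass:alignment}. The terms with $\Delta\mW_{o\text{MLP}}$ contribute $\lesssim \dmodel^{\max\{\alpha,\nu\}-1}\eta_{\text{hidden}}\norm{\vy}\lesssim\dmodel^{-1}$. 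This gives the claim.

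\textbf{Main obstacle.} The hard part is the lower bounds (the non-triviality), not the upper bounds. Adding terms could cause cancellation, and SiLU or the rescaler product could degenerate; for example, $\Delta\vu\odot\text{SiLU}(\vnu)$ could cancel against $\vu\odot\Delta\text{SiLU}(\vnu)$. I would handle this by invoking the paper's standing heuristic that generic vectors have entries of a common scale, with no exact cancellation. The dominant $\Theta(\dmodel^{-1/2})$ terms then cannot be cancelled by the strictly smaller $O(\dmodel^{-1})$ terms, and the remaining same-order terms are treated as generic sums.
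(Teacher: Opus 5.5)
Your route is the paper's: track $\mW_u\vh_n$ and $\mW_\nu\vh_n$ through the three-term split, then push through the rescalers by the product rule, with the extra $\dmodel^{1/2}$ making $\vnu$ order one. From there you conclude for $\text{SiLU}(\vnu)\odot\vu$ and repeat for $\mW_{o\text{MLP}}$, a step the paper only covers by ``the same logic''.

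There is a scaling slip in Steps~1 and~4. Under signGD the entries of $\Delta\mW$ are of order $\eta_{\text{hidden}}$, so $\norm{\Delta\mW}_F/\sqrt{\dout\din}\asymp\eta_{\text{hidden}}$. Since $\norm{\vh_n}/\sqrt{\din}=\dmodel^{-1/2}$, the definition of $\alpha_{\text{hidden}}$ gives entries of $\Delta\mW\,\vh_n$ of order $\dmodel^{\alpha_{\text{hidden}}-1/2}\eta_{\text{hidden}}$. Likewise, $\Delta\mW\,\Delta\vh_n$ has entries of order $\dmodel^{\nu_{\text{hidden}}-1/2}\eta_{\text{hidden}}$. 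Your $\dmodel^{\max\crl{\alpha_{\text{hidden}},\nu_{\text{hidden}}}-1}\eta_{\text{hidden}}$ is off by $\dmodel^{1/2}$. Under the hypothesis these terms are $\lesssim\dmodel^{-1/2}$, not $\lesssim\dmodel^{-1}$. The same correction applies to the $\Delta\mW_{o\text{MLP}}$ terms in Step~4, where you get $\dmodel^{\alpha_{\text{hidden}}-1/2}\eta_{\text{hidden}}\norm{\vy}\lesssim\dmodel^{-1/2}$.

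The upper bounds, and hence the lemma's conclusion, survive this correction, but two parts of your reasoning do not. First, consider the rate the lemma is built for, $\eta_{\text{hidden}}\asymp\dmodel^{-\max\crl{\alpha_{\text{hidden}},\nu_{\text{hidden}}}}$, which is $\dmodel^{-3/4}$ in $\nu$GPT. There all three terms of $\Delta(\mW\vh_n)$ are of the same order $\dmodel^{-1/2}$. So your non-triviality argument, that the dominant term ``cannot be cancelled by the strictly smaller $O(\dmodel^{-1})$ terms'', has no separation of scales to lean on. The lower bound rests entirely on the generic no-cancellation heuristic among same-order terms, which is exactly what the paper tacitly assumes when it writes that the scales ``stay $\Theta(\dmodel^{-1/2})$''. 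Second, if the weight-update terms really were $O(\dmodel^{-1})$, you could raise $\eta_{\text{hidden}}$ by a factor $\dmodel^{1/2}$ without changing any scale. That would contradict the maximality argument in Section~\ref{sec:width-depth-corr} that motivates the hypothesis $\eta_{\text{hidden}}\lesssim\dmodel^{-\max\crl{\alpha_{\text{hidden}},\nu_{\text{hidden}}}}$. With the corrected exponent, the hidden weights' own updates move the features at leading order, as the parametrization intends.
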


\begin{proof}
The scale of $\mW_u \vh_n$ at initialization (no alignment) is given by
\begin{equation*}
  \frac{\norm{\mW_u \vh_n}}{\sqrt{d_{\text{MLP}}}}
  \asymp \sqrt{d_{\text{model}}} \frac{\norm{\mW_u}_F}{\sqrt{d_{\text{MLP}} d_{\text{model}}}} \frac{\norm{\vh_n}}{\sqrt{d_{\text{model}}}}
  = d_{\text{model}}^{- 1 / 2},
\end{equation*}
whereas the movement is
\begin{equation*}
\Delta (\mW_u \vh_n) = \Delta \mW_u \vh_n + \mW_u \Delta \vh_n + \Delta \mW_u \Delta \vh_n,
\end{equation*}
and
\begin{equation*}
  \frac{\norm{\Delta \mW_u \vh_n}}{\sqrt{d_{\text{MLP}}}}
  \asymp d_{\text{model}}^{\alpha_{\text{hidden}}} \eta_{\text{hidden}} \frac{\norm{\vh_n}}{\sqrt{d_{\text{model}}}}
  \asymp d_{\text{model}}^{\alpha_{\text{hidden}} - 1 / 2} \eta_{\text{hidden}};
\end{equation*}
by $\omega_{\text{hidden}} = 1 / 2$ (and using $\norm{\Delta \vh_n} \asymp 1$)
\begin{equation*}
  \frac{\norm{\mW_u \Delta \vh_n}}{\sqrt{d_{\text{MLP}}}}
  \asymp \sqrt{d_{\text{model}}} \frac{\norm{\mW_u}_F}{\sqrt{d_{\text{MLP}} d_{\text{model}}}} \frac{\norm{\Delta \vh_n}}{\sqrt{d_{\text{model}}}}
  \asymp d_{\text{model}}^{- 1 / 2};
\end{equation*}
and
\begin{equation*}
  \frac{\norm{\Delta \mW_u \Delta \vh_n}}{\sqrt{d_{\text{MLP}}}} \asymp d_{\text{model}}^{\nu_{\text{hidden}}} \eta_{\text{hidden}} \frac{\norm{\Delta \vh_n}}{\sqrt{d_{\text{model}}}}
  \asymp d_{\text{model}}^{\nu_{\text{hidden}} - 1 / 2} \eta_{\text{hidden}}.
\end{equation*}
Since $\eta_{\text{hidden}} \lesssim \dmodel^{- \max\crl{\alpha_{\text{hidden}}, \nu_{\text{hidden}}}}$,
we see that the scales of components of $\mW_u \vh_n$ and $\Delta (\mW_u \vh_n)$ stay $\Theta(d_{\text{model}}^{- 1 / 2})$ during training.

The movement of $\vu$ is given by
\begin{equation*}
  \Delta u_i
  = \Delta (\mW_u \vh_n)_i \frac{s_{\text{$u$,init}}}{s_{\text{$u$,scale}}} s_{u, i}
  + (\mW_u \vh_n)_i \frac{s_{\text{$u$,init}}}{s_{\text{$u$,scale}}} \Delta s_{u, i}
  + \Delta (\mW_u \vh_n)_i \frac{s_{\text{$u$,init}}}{s_{\text{$u$,scale}}} \Delta s_{u, i}.
\end{equation*}
Since at initialization $s_{u, i} = s_{\text{$u$,scale}}$,
the first term is of scale $d_{\text{model}}^{- 1 / 2} s_{\text{$u$,init}} \asymp d_{\text{model}}^{- 1 / 2}$;
the second and third are of scale $\frac{s_{\text{$u$,init}}}{s_{\text{$u$,scale}}} d_{\text{model}}^{- 1 / 2} \asymp d_{\text{model}}^{- 1 / 2}$.

By the same logic as above, the scales of components of $\mW_{\nu} \vh_n$ and $\Delta (\mW_{\nu} \vh_n)$ stay $\Theta(d_{\text{model}}^{- 1 / 2})$ during training.
The update of $\vnu$ is given by
\begin{equation*}
\Delta \nu_i = \Delta (\mW_{\nu} \vh_n)_i \frac{s_{\text{$\nu$,init}}}{s_{\text{$\nu$,scale}}} d_{\text{model}}^{1 / 2} s_{\nu, i} + (\mW_{\nu} \vh_n)_i \frac{s_{\text{$\nu$,init}}}{s_{\text{$\nu$,scale}}} d_{\text{model}}^{1 / 2} \Delta s_{\nu, i} + \Delta (\mW_{\nu} \vh_n)_i \frac{s_{\text{$\nu$,init}}}{s_{\text{$\nu$,scale}}} d_{\text{model}}^{1 / 2} \Delta s_{\nu, i},
\end{equation*}
and the scale of each term is $\Theta(1)$.

We see that components of $\vnu$ and hence $\text{SiLU}(\vnu)$ are $\Theta(1)$, so the components of
$\text{SiLU}(\vnu) \odot \vu$ are $\Theta(d_{\text{model}}^{- 1 / 2})$ along with their updates during training.

Proceeding with the same logic (and the same alignment assumptions), we see that
the components of $\text{MLP}(\vh_n)$ along with their updates stay
$\Theta(d_{\text{model}}^{- 1 / 2})$ during training.
\end{proof}

\begin{remark}
  For baseline nGPT, the same calculation shows that components of $\vu$ are $\Theta(\dmodel^{- 1 / 2})$
  at initialization, components of $\Delta \vu$ are $\Theta\prn[\big]{\max\crl{\dmodel^{1 / 2} \eta_{\text{global}}, \dmodel^{- 1 / 2}}}$ during training,
  whereas $\frac{s_{\text{$u$,init}}}{s_{\text{$u$,scale}}} \Delta s_{u, i}$ move at scale $\eta_{\text{global}}$.
  This means that the movement of $\frac{s_{\text{$u$,init}}}{s_{\text{$u$,scale}}} \Delta s_{u, i}$ becomes negligible at large width.
  The same can be said about $\frac{s_{\text{$\nu$,init}}}{s_{\text{$\nu$,scale}}} \Delta s_{\nu, i}$.
\end{remark}

The next \lcnamecref{lem:DUvOrU} checks the scaling inside attention blocks.

\begin{lemma}[Attention]\label{lem:DUvOrU}
  Suppose \cref{ass:alignment,ass:rescalers-movement} hold,
  the hyperparameters
  $s_{\text{$qk$,init}}$, $s_{\text{$qk$,scale}}$
  are positive constants,
  and $\eta_{\text{hidden}} \lesssim \dmodel^{- \max\crl{\alpha_{\text{hidden}}, \nu_{\text{hidden}}}}$.
  In addition, suppose $\norm{\Delta \vh_n(t)} \asymp 1$.
  Assume that $d_{\text{key}}$ is large enough so that the approximation $\Delta \frac{q_{n, i}}{\norm{\vq_n}} \asymp \frac{\Delta q_{n, i}}{\norm{\vq_n}}$ holds.
  Then the components of $\vq'_n$ (at time $0$) and $\Delta \vq_n' (t)$ are both $\Theta(d_{\text{key}}^{- 1 / 2})$, where $\vq'_n$ is given by \cref{eq:xgSNnZ};
  the components of $\vk'_m$ and $\Delta \vk_m' (t)$ are both $\Theta(d_{\text{key}}^{- 1 / 2})$, where $\vk'_m$ is given by \cref{eq:XBxnKv};
  the components of $\vv_m$ and $\Delta \vv_m(t)$ are both $\Theta(d_{\text{model}}^{- 1 / 2})$, where $\vv_m$ is given by \cref{eq:MNAzYI}.
  Further, the quantity $\sqrt{d_{\text{key}}} \vq^{\prime \transt}_n \vk'_m$ is $\Theta(1)$ at initialization,
  whereas its movement $\sqrt{d_{\text{key}}} \Delta \prn[\big]{\vq^{\prime \transt}_n \vk'_m}$ is $O(\sqrt{d_{\text{key}}})$.
  Finally, if $d_{\text{key}}$ is constant, the components of the attention output
  $\text{Attention}_n(\crl{\vh_m}_{m = 1}^{\text{SeqLen}})$ (given by \cref{eq:JSpBcR})
  and of their movements $\Delta \text{Attention}_n(\crl{\vh_m}_{m = 1}^{\text{SeqLen}})(t)$
  are both $\Theta(\dmodel^{- 1 / 2})$.
\end{lemma}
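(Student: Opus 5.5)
I will follow the template of the proof of \cref{lem:mlp}, treating each linear map inside the attention block by decomposing its movement into three terms and bounding each with the alignment exponents. For the queries, write $\Delta(\mW_q\trans \vh_n) = \Delta\mW_q\trans \vh_n + \mW_q\trans \Delta\vh_n + \Delta\mW_q\trans\Delta\vh_n$. With unit-norm columns of $\mW_q$ and no alignment at initialization, the components of $\mW_q\trans\vh_n$ are $\Theta(\dmodel^{-1/2})$. By \cref{ass:alignment}, together with $\norm{\Delta\vh_n}\asymp 1$ and $\eta_{\text{hidden}}\lesssim\dmodel^{-\max\crl{\alpha_{\text{hidden}},\nu_{\text{hidden}}}}$, the three movement terms are, respectively, $\dmodel^{\alpha_{\text{hidden}}-1/2}\eta_{\text{hidden}}$, $\dmodel^{-1/2}$, and $\dmodel^{\nu_{\text{hidden}}-1/2}\eta_{\text{hidden}}$. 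All of these are $\lesssim\dmodel^{-1/2}$, and the middle one is exactly of that order, so the movement is $\Theta(\dmodel^{-1/2})$ componentwise. Since $\text{QRot}_n$ is an orthogonal map acting on pairs of coordinates, it preserves norms and component scales. The same argument applied verbatim to $\mW_k$ and $\mW_v$ immediately gives the claim for $\vv_m$ and for the unnormalized $\vk_m$.

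Next I will handle normalization and the rescaler. The components of $\vq_n/\norm{\vq_n}$ are $\Theta(d_{\text{key}}^{-1/2})$ because the vector is a unit vector in $\R^{d_{\text{key}}}$. By the stated approximation hypothesis, $\Delta(q_{n,i}/\norm{\vq_n})\asymp \Delta q_{n,i}/\norm{\vq_n}$, and since $\norm{\vq_n}\asymp \sqrt{d_{\text{key}}}\,\dmodel^{-1/2}$ this ratio is $\Theta(d_{\text{key}}^{-1/2})$. Multiplying by $\frac{s_{\text{$qk$,init}}}{s_{\text{$qk$,scale}}}\vs_{qk}$ and expanding the product into three terms exactly as was done for $\vu$ in \cref{lem:mlp}, using \cref{ass:rescalers-movement} and the fact that $s_{\text{$qk$,init}}$ and $s_{\text{$qk$,scale}}$ are constants, yields $\Theta(d_{\text{key}}^{-1/2})$ both for $\vq'_n$ and for $\Delta\vq'_n$. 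The same holds for $\vk'_m$.

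For the attention logits I will argue as follows. At initialization, $\vq'_n$ and $\vk'_m$ behave like independent vectors with $d_{\text{key}}$ coordinates of size $d_{\text{key}}^{-1/2}$, so $\vq^{\prime\transt}_n\vk'_m\asymp d_{\text{key}}^{-1/2}$ and hence $\sqrt{d_{\text{key}}}\,\vq^{\prime\transt}_n\vk'_m=\Theta(1)$. For the movement, I expand $\Delta(\vq'^{\transt}\vk')$ into three bilinear terms. Each is bounded by Cauchy--Schwarz using $\norm{\vq'},\norm{\Delta\vq'},\norm{\vk'},\norm{\Delta\vk'}=O(1)$. This gives $O(1)$ for the inner product and hence $O(\sqrt{d_{\text{key}}})$ after the prefactor. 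I would only claim an upper bound here, since the degree of alignment between queries and keys after training is not controlled.

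Finally, with $d_{\text{key}}$ constant, the logits and their movements are $O(1)$, so the softmax weights and their changes are $O(1)$. Each head output is a convex combination of the $\vv_m$, so its components and their movements are $\Theta(\dmodel^{-1/2})$; for the lower bound I take components of a generic combination to be of the same scale as the summands, as in the footnote to \cref{lem:mlp}. The concatenated vector lies in $\R^{\nheads d_{\text{key}}}$ with $\nheads d_{\text{key}}\asymp\dmodel$. Applying $\mW_O$ with the same three-term alignment decomposition as in the first paragraph, with unit-norm columns, finishes the argument. The main obstacle is this last step: the concatenated head vector has norm $\Theta(1)$ only if $\Theta(\dmodel)$ components of size $\dmodel^{-1/2}$ do not cancel. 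I will justify this with the same component-scale convention and the no-alignment assumption $\omega_{\text{hidden}}=1/2$, rather than with a rigorous estimate.
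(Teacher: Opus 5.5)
Your proposal is correct and follows essentially the same route as the paper's proof: the three-term decomposition of $\Delta(\mW_q\trans \vh_n)$ bounded via $\alpha_{\text{hidden}}$, $\omega_{\text{hidden}} = 1/2$ and $\nu_{\text{hidden}}$ (repeated for $\mW_k$ and $\mW_v$), the three-term rescaler expansion using the assumed approximation for the normalized query, independence at initialization plus Cauchy--Schwarz for $\sqrt{d_{\text{key}}}\,\vq^{\prime\transt}_n\vk'_m$, and the same argument as in \cref{lem:mlp} for the heads and $\mW_O$. One small correction to your closing remark: the norm of the concatenated head vector is a sum of squares and so cannot cancel; the genuinely heuristic step is the componentwise lower bound for the softmax-weighted combination of the $\vv_m$ (and for its movement), which you already cover with the footnote convention, exactly as the paper does.
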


\begin{remark*}
  In $\mu$P \cite{NEURIPS2021_8df7c2e3}, the multiplier of key-query scalar product is changed because of alignment between keys and queries during training.
  Here, such alignment would correspond to $\Delta \prn[\big]{\vq^{\prime \transt}_n \vk'_m} = \Theta(1)$.
  If $d_{\text{key}}$ were not constant, we would need to remove the multiplier $\sqrt{d_{\text{key}}}$
  in attention
  to prevent blowup.
\end{remark*}

\begin{proof}
For simplicity, we ignore rotary embedding maps here because they do not change the relevant scales.
The scale of a query vector at initialization (no alignment) is
\begin{equation*}
  \frac{\norm{\vq_n}}{\sqrt{d_{\text{key}}}}
  = \sqrt{d_{\text{model}}} \frac{\norm{\mW_q}_F}{\sqrt{d_{\text{key}} d_{\text{model}}}}
  \frac{\norm{\vh_n}}{\sqrt{d_{\text{model}}}}
  = d_{\text{model}}^{- 1 / 2}.
\end{equation*}
The movement at time $t$ is given by
\begin{equation*}
\Delta \vq_n = \Delta \mW_q\trans \vh_n + \mW_q\trans \Delta \vh_n + \Delta \mW_q\trans \Delta \vh_n,
\end{equation*}
and by the definition of $\alpha_{\text{hidden}}$
\begin{equation*}
  \frac{\norm{\Delta \mW_q\trans \vh_n}}{\sqrt{d_{\text{key}}}}
  \asymp d_{\text{model}}^{\alpha_{\text{hidden}}} \eta_{\text{hidden}} \frac{\norm{\vh_n}}{\sqrt{d_{\text{model}}}}
  = d_{\text{model}}^{\alpha_{\text{hidden}} - 1 / 2} \eta_{\text{hidden}};
\end{equation*}
by $\omega_{\text{hidden}} = 1 / 2$ (and using $\norm{\Delta \vh_n} \asymp 1$)
\begin{equation*}
  \frac{\norm{\mW_q\trans \Delta \vh_n}}{\sqrt{d_{\text{key}}}}
  \asymp \sqrt{d_{\text{model}}} \frac{\norm{\mW_q}_F}{\sqrt{d_{\text{key}} d_{\text{model}}}} \frac{\norm{\Delta \vh_n}}{\sqrt{d_{\text{model}}}}
  \asymp d_{\text{model}}^{- 1 / 2};
\end{equation*}
by the definition of $\nu_{\text{hidden}}$ (again using $\norm{\Delta \vh_n} \asymp 1$)
\begin{equation*}
  \frac{\norm{\Delta \mW_q\trans \Delta \vh_n}}{\sqrt{d_{\text{key}}}} \asymp d_{\text{model}}^{\nu_{\text{hidden}}} \eta_{\text{hidden}} \frac{\norm{\Delta \vh_n}}{\sqrt{d_{\text{model}}}}
  \asymp d_{\text{model}}^{\nu_{\text{hidden}} - 1 / 2} \eta_{\text{hidden}}.
\end{equation*}
Since $\eta_{\text{hidden}} \lesssim \dmodel^{- \max\crl{\alpha_{\text{hidden}}, \nu_{\text{hidden}}}}$,
we see that the scales of components of $\vq_n$ and $\Delta \vq_n$ are $\Theta(d_{\text{model}}^{- 1 / 2})$ during training.

The update of $\vq'_n$ is given by
\begin{equation}\label{eq:SgIxkS}
  \Delta q_{n, i}' =
  \prn[\bigg]{\Delta \frac{q_{n, i}}{\norm{\vq_n}}} \frac{s_{\text{$qk$,init}}}{s_{\text{$qk$,scale}}} s_{q k, i}
  + \frac{q_{n, i}}{\norm{\vq_n}} \frac{s_{\text{$qk$,init}}}{s_{\text{$qk$,scale}}} \Delta s_{q k, i}
  + \prn[\bigg]{\Delta \frac{q_{n, i}}{\norm{\vq_n}}} \frac{s_{\text{$qk$,init}}}{s_{\text{$qk$,scale}}} \Delta s_{q k, i},
\end{equation}
where
\begin{equation*}
\Delta q_{n, i} \asymp d_{\text{model}}^{- 1 / 2}, \quad \norm{\vq_n} \asymp d_{\text{key}}^{1 / 2} d_{\text{model}}^{- 1 / 2} \quad \Rightarrow \quad \frac{\Delta q_{n, i}}{\norm{\vq_n}} = d_{\text{key}}^{- 1 / 2}
\end{equation*}
and
\begin{equation*}
\frac{q_{n, i}}{\norm{\vq_n}} \asymp d_{\text{key}}^{- 1 / 2}.
\end{equation*}
Then, using $\Delta \frac{q_{n, i}}{\norm{\vq_n}} \asymp \frac{\Delta q_{n, i}}{\norm{\vq_n}}$, we see that each of the three terms in \cref{eq:SgIxkS} is $\Theta(d_{\text{key}}^{- 1 / 2})$.

The claims about $\vk_m'$ and $\vv_m$ are proven similarly.

At initialization, $\vq'_n$ and $\vk'_m$ are unaligned, so that
\begin{equation*}
\abs{\vq^{\prime \transt}_n \vk'_m} \asymp \frac{\norm{\vq'_n} \norm{\vk'_m}}{\sqrt{d_{\text{key}}}} \asymp \frac{1}{\sqrt{d_{\text{key}}}}.
\end{equation*}
During training, they may become aligned, and the upper bound
$\abs[\big]{\Delta \prn[\big]{\vq^{\prime \transt}_n \vk'_m}} \lesssim 1$
follows from the Cauchy-Schwarz inequality.

We conclude that, if $d_{\text{key}}$ is constant, the components of each head's output and their movements
are $\Theta(\dmodel^{- 1 / 2})$. Hence, the components of $\text{Attention}_n(\crl{\vh_m}_{m = 1}^{\text{SeqLen}})$ and $\Delta \text{Attention}_n(\crl{\vh_m}_{m = 1}^{\text{SeqLen}})(t)$ are both $\Theta(\dmodel^{- 1 / 2})$ by the same argument as in \cref{lem:mlp}.
\end{proof}

\internalComment{
  I speculate that their intuition is as follows.
  Think of $q_{n, i} / \norm{\vq_n}$ as one parameter whose magnitude is $d_{\text{model}}^{- 1 / 2}$
  (not distinguishing between $d_{\text{key}}$ and $d_{\text{model}}$).
  Say we want to match the scales in the update above:
  \begin{equation*}
  \frac{\Delta q_{n, i}}{\norm{\vq_n}} \frac{s_{\text{$qk$,init}}}{s_{\text{$qk$,scale}}} s_{q k, i} \asymp \frac{q_{n, i}}{\norm{\vq_n}} \frac{s_{\text{$qk$,init}}}{s_{\text{$qk$,scale}}} \Delta s_{q k, i}
\end{equation*}
Assuming $s_{q k, i} \asymp s_{\text{$qk$,scale}}$ (hence the name) leads to
\begin{equation*}
\Delta \frac{q_{n, i}}{\norm{\vq_n}} \asymp d_{\text{model}}^{- 1 / 2} \frac{1}{s_{\text{$qk$,scale}}} \Delta s_{q k, i}.
\end{equation*}
Assuming (incorrectly) that this normalized parameter $q_{n, i} / \norm{\vq_n}$ is directly updated with one global learning rate, we \textit{would have}
\begin{equation*}
\eta_{\text{global}} \asymp d_{\text{model}}^{- 1 / 2} \frac{1}{s_{\text{$qk$,scale}}} \eta_{\text{global}},
\end{equation*}
motivating $s_{\text{$qk$,scale}} = d_{\text{model}}^{- 1 / 2}$.
}

The next lemma deals with the LERP after Attention or MLP blocks.

\begin{lemma}[LERP; learning rates of $\valpha_A$ and $\valpha_M$]
  Suppose \cref{ass:alignment,ass:rescalers-movement} hold and the hyperparameter $\alpha_{\text{$A$,scale}}$
  is a positive constant. In addition, suppose $\Delta h_{A, n, i}(t) \asymp d_{\text{model}}^{-1 / 2}$.
  Then each component of $\frac{\alpha_{\text{$A$,init}}}{\alpha_{\text{$A$,scale}}} \valpha_A \odot \vh_{A, n}$
  (at initialization)
  and its movement at time $t$ are both of scale $\Theta(\alpha_{\text{$A$,init}} d_{\text{model}}^{- 1 / 2})$.
\end{lemma}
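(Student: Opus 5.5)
We will argue componentwise, in the same way as the rescaler computations in the proof of \cref{lem:mlp}. We write $c := \alpha_{\text{$A$,init}} / \alpha_{\text{$A$,scale}}$ and, for a fixed coordinate $i$, consider the product $c\,\alpha_{A,i} h_{A,n,i}$.

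We first treat the value at initialization. Every component of $\valpha_A$ is initialized at $\alpha_{\text{$A$,scale}}$, so $c\,\alpha_{A,i}(0) = \alpha_{\text{$A$,init}}$. Next, $\vh_{A,n} = \text{Norm}(\text{Attention}_n(\cdot))$ is a unit vector. Under the standing convention (footnote to \cref{lem:mlp}) that the components of a generic vector share a common scale, its components are $\Theta(d_{\text{model}}^{-1/2})$. This is also consistent with \cref{lem:DUvOrU}, which gives attention components of order $d_{\text{model}}^{-1/2}$ and hence an $\Theta(1)$ norm before normalization. It follows that the product is $\Theta(\alpha_{\text{$A$,init}} d_{\text{model}}^{-1/2})$ at initialization.

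For the movement we use the product-rule decomposition
\begin{equation*}
\Delta\prn[\big]{c\,\alpha_{A,i} h_{A,n,i}} = c\,\alpha_{A,i}\,\Delta h_{A,n,i} + c\,\Delta\alpha_{A,i}\, h_{A,n,i} + c\,\Delta\alpha_{A,i}\,\Delta h_{A,n,i}.
\end{equation*}
We bound the three terms separately:
\begin{itemize}
\item The first term equals $\alpha_{\text{$A$,init}}\Delta h_{A,n,i}$. By the hypothesis $\Delta h_{A,n,i}\asymp d_{\text{model}}^{-1/2}$, it is $\Theta(\alpha_{\text{$A$,init}} d_{\text{model}}^{-1/2})$.
\item For the second term, \cref{ass:rescalers-movement} gives $\Delta\alpha_{A,i}\asymp 1$. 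Since $\alpha_{\text{$A$,scale}}$ is a positive constant, $c\asymp\alpha_{\text{$A$,init}}$, and the term is of scale $\alpha_{\text{$A$,init}} d_{\text{model}}^{-1/2}$.
\item The third term is likewise $\alpha_{\text{$A$,init}}\cdot 1\cdot d_{\text{model}}^{-1/2}$.
\end{itemize}
Each term is therefore $\Theta(\alpha_{\text{$A$,init}} d_{\text{model}}^{-1/2})$, and so is their sum.

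The main obstacle is the lower bound on the sum. The upper bound is immediate, but the three terms could in principle cancel. We will handle this in the same way as in \cref{lem:mlp}: we take the generic-position convention that a sum of finitely many terms of matching scale does not cancel to leading order. The first term is enough to give non-triviality. A secondary point is that $\Delta h_{A,n,i}$ should be understood after the $\text{Norm}(\cdot)$ operation. The hypothesis already covers this, and it is consistent with \cref{lem:DUvOrU}, because normalizing a vector of $\Theta(1)$ norm whose movement is also $\Theta(1)$ preserves the scales of both. The same argument applies verbatim to $\valpha_M$ and $\vh_{M,n}$, using \cref{lem:mlp} in place of \cref{lem:DUvOrU}.
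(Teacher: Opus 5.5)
Your proposal is correct and follows essentially the same route as the paper: the same three-term product-rule decomposition of $\Delta\bigl(\tfrac{\alpha_{\text{$A$,init}}}{\alpha_{\text{$A$,scale}}}\alpha_{A,i}h_{A,n,i}\bigr)$, with each term shown to be of scale $\alpha_{\text{$A$,init}} d_{\text{model}}^{-1/2}$ using $\Delta h_{A,n,i}\asymp d_{\text{model}}^{-1/2}$, $h_{A,n,i}\asymp d_{\text{model}}^{-1/2}$ and $\Delta\alpha_{A,i}\asymp 1$. Two of your steps are small clarifications of things the paper leaves implicit: you use the exact initial value $\alpha_{A,i}(0)=\alpha_{\text{$A$,scale}}$, where the paper only assumes $\alpha_{A,i}\asymp\alpha_{\text{$A$,scale}}$, and you state an explicit non-cancellation convention for the lower bound.
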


\begin{proof}
The scale of each component in $\frac{\alpha_{\text{$A$,init}}}{\alpha_{\text{$A$,scale}}} \valpha_A \odot \vh_{A, n}$ at initialization
is $\alpha_{\text{$A$,init}} d_{\text{model}}^{- 1 / 2}$.
The update is given by
\begin{equation*}
  \frac{\alpha_{\text{$A$,init}}}{\alpha_{\text{$A$,scale}}} \Delta \alpha_{A, i} h_{A, n, i}
  + \frac{\alpha_{\text{$A$,init}}}{\alpha_{\text{$A$,scale}}} \alpha_{A, i} \Delta h_{A, n, i}
  + \frac{\alpha_{\text{$A$,init}}}{\alpha_{\text{$A$,scale}}} \Delta \alpha_{A, i} \Delta h_{A, n, i}
\end{equation*}
where
\begin{equation*}
\Delta h_{A, n, i} \asymp d_{\text{model}}^{- 1 / 2}, \quad h_{A, n, i} \asymp d_{\text{model}}^{- 1 / 2}.
\end{equation*}
Assuming $\alpha_{A, i} \asymp \alpha_{\text{$A$,scale}}$,
we have
\begin{align*}
  &\frac{\alpha_{\text{$A$,init}}}{\alpha_{\text{$A$,scale}}} \Delta \alpha_{A, i} h_{A, n, i}
    \asymp \frac{\alpha_{\text{$A$,init}}}{\alpha_{\text{$A$,scale}}} d_{\text{model}}^{- 1 / 2} \Delta \alpha_{A, i},\\
  &\frac{\alpha_{\text{$A$,init}}}{\alpha_{\text{$A$,scale}}} \alpha_{A, i} \Delta h_{A, n, i} \asymp \alpha_{\text{$A$,init}} d_{\text{model}}^{- 1 / 2},\\
  &\frac{\alpha_{\text{$A$,init}}}{\alpha_{\text{$A$,scale}}} \Delta \alpha_{A, i} \Delta h_{A, n, i} \asymp \frac{\alpha_{\text{$A$,init}}}{\alpha_{\text{$A$,scale}}} d_{\text{model}}^{- 1 / 2} \Delta \alpha_{A, i}.\qedhere
\end{align*}
\end{proof}

For logits scaling,
there are two possible choices.
If we assume $\omega_{\text{output}} = 1 / 2$ \citep{everett2024scalingexponentsacross},
it is possible for logits to be the same scale at initialization
as during training, although this requires $s_{\text{$z$,init}} \asymp d_{\text{model}}^{1 / 2}$.
On the other hand, in classical $\mu$P parametrizations, the logits scale as $1 / \sqrt{\dmodel}$ at initialization but move $\Theta(1)$.

\begin{lemma}[Same scaling of logits at initialization and during training]\label{lem:logits-under-align0.5}
  Suppose \cref{ass:alignment,ass:rescalers-movement} hold,
  $\omega_{\text{output}} = 1 / 2$, $\eta_{\text{output}} \lesssim \dmodel^{- \max\crl{\alpha_{\text{output}}, \nu_{\text{output}}}}$, $s_{\text{$z$,init}} \asymp d_{\text{model}}^{1 / 2}$,
  and $s_{\text{$z$,scale}}$ is a positive constant.
  Suppose further $\norm{\Delta \vh_n(t)} \asymp 1$.
  Then each component of $\vz_n$ (at initialization) and $\Delta \vz_n(t)$ are both of scale $\Theta(1)$, where $\vz_n$ is defined by \cref{eq:oDPOKs}.
\end{lemma}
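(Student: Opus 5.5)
We will mirror the structure of the proofs of \cref{lem:mlp} and \cref{lem:DUvOrU}. We handle the raw logits $\hat{\vz}_n = \mE_{\text{output}} \vh_n$ first, and then the rescaler $\frac{s_{\text{$z$,init}}}{s_{\text{$z$,scale}}} \vs_z$. Here $\vh_n$ stands for the final residual state $\vh_n^{n_{\text{layers}}+1}$. It has unit norm because of the final $\text{Norm}(\cdot)$, and by hypothesis $\norm{\Delta \vh_n} \asymp 1$.

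\textbf{Initialization.} Rows of $\mE_{\text{output}}$ are normalized, so $\norm{\mE_{\text{output}}}_F/\sqrt{V\dmodel} = \dmodel^{-1/2}$. At initialization $\mE_{\text{output}}$ and $\vh_n$ are unaligned (exponent $1/2$), as computed in \cref{sec:width-depth-corr}. Hence the components of $\hat{\vz}_n$ are $\Theta(\dmodel^{-1/2})$. Each component of $\frac{s_{\text{$z$,init}}}{s_{\text{$z$,scale}}}\vs_z$ equals $s_{\text{$z$,init}} \asymp \dmodel^{1/2}$ at initialization, so $z_{n,i} \asymp \dmodel^{1/2}\cdot \dmodel^{-1/2} = 1$.

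\textbf{Movement of $\hat{\vz}_n$.} We decompose
\[
\Delta\hat{\vz}_n = \Delta\mE_{\text{output}}\vh_n + \mE_{\text{output}}\Delta\vh_n + \Delta\mE_{\text{output}}\Delta\vh_n .
\]
We bound each term using \cref{def:alignment-exponents} and the signGD-like scaling $\norm{\Delta\mE_{\text{output}}}_F/\sqrt{V\dmodel} \asymp \eta_{\text{output}}$. This gives per-component scales of
\[
\dmodel^{\alpha_{\text{output}}-1/2}\eta_{\text{output}}, \qquad \dmodel^{\omega_{\text{output}}-1} = \dmodel^{-1/2}, \qquad \dmodel^{\nu_{\text{output}}-1/2}\eta_{\text{output}} ,
\]
respectively. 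Under $\eta_{\text{output}} \lesssim \dmodel^{-\max\{\alpha_{\text{output}},\nu_{\text{output}}\}}$, the first and third are $O(\dmodel^{-1/2})$. The middle term is exactly $\Theta(\dmodel^{-1/2})$, which gives the matching lower bound. So $\Delta \hat z_{n,i} \asymp \dmodel^{-1/2}$.

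\textbf{Movement of $\vz_n$.} By the product rule, as in the $\vu$ computation of \cref{lem:mlp},
\[
\Delta z_{n,i} = \Delta\hat z_{n,i}\,\tfrac{s_{\text{$z$,init}}}{s_{\text{$z$,scale}}} s_{z,i} + \hat z_{n,i}\,\tfrac{s_{\text{$z$,init}}}{s_{\text{$z$,scale}}}\Delta s_{z,i} + \Delta\hat z_{n,i}\,\tfrac{s_{\text{$z$,init}}}{s_{\text{$z$,scale}}}\Delta s_{z,i}.
\]
We use $s_{z,i}=s_{\text{$z$,scale}}$ at initialization, $\Delta s_{z,i}\asymp 1$ by \cref{ass:rescalers-movement}, $s_{\text{$z$,scale}}$ constant, and $s_{\text{$z$,init}}\asymp\dmodel^{1/2}$. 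With these, each term is $\dmodel^{1/2}\cdot\dmodel^{-1/2}\asymp 1$.

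\textbf{Main obstacle.} The delicate point is the lower bound $\Theta$ rather than merely $O$. A sum of terms each of order $1$ could cancel. We would argue, as the paper does elsewhere, that these terms are generically non-cancelling. In particular, the $\mE_{\text{output}}\Delta\vh_n$ contribution alone already has exact scale $\dmodel^{-1/2}$ by $\omega_{\text{output}}=1/2$, and we treat this as the non-trivial lower bound. A second subtlety is the row renormalization of $\mE_{\text{output}}$ after each step. We dismiss it as in \cref{sec:width-depth-corr}: since $\eta_{\text{output}}\sqrt{\dmodel}\lesssim \dmodel^{-1/4}$ is small, the change in row norm is lower order, so the renormalization does not alter the scales above.
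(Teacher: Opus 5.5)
Your proposal is correct and follows essentially the same route as the paper: the unaligned initial scale $\hat z_{n,i}\asymp\dmodel^{-1/2}$, then the three-term decomposition of $\Delta\hat{\vz}_n$ bounded via the alignment exponents with $\omega_{\text{output}}=1/2$ supplying the $\Theta(\dmodel^{-1/2})$ contribution, and finally the product rule for the rescaler using $s_{\text{$z$,init}}\asymp\dmodel^{1/2}$ and $\Delta s_{z,i}\asymp 1$. Your remarks on non-cancellation and row renormalization go beyond the paper's proof, which treats both heuristically; one small correction is that your bound $\eta_{\text{output}}\sqrt{\dmodel}\lesssim\dmodel^{-1/4}$ needs $\max\{\alpha_{\text{output}},\nu_{\text{output}}\}\geq 3/4$ (mid alignment), which is not a hypothesis of the lemma, whereas the paper simply relies on its standing remark that renormalization does not change the scales.
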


\begin{proof}
The scale of $\hat{\vz}_n \in \mathbb{R}^V$ at initialization (no alignment) is given by
\begin{equation}\label{eq:gVMQEG}
  \hat{z}_{n, i} \asymp \frac{\norm{\hat{\vz}_n}}{\sqrt{V}} \asymp \sqrt{d_{\text{model}}} \frac{\norm{\mE_{\text{output}}}_F}{\sqrt{d_{\text{model}} V}} \frac{\norm{\vh_n}}{\sqrt{d_{\text{model}}}}
  = \frac{\norm{\mE_{\text{output}}}_F}{\sqrt{d_{\text{model}} V}} = d_{\text{model}}^{- 1 / 2}.
\end{equation}
The update of $\hat{\vz}_n$ is given by
\begin{equation*}
\Delta \hat{\vz}_n = \Delta \mE_{\text{output}} \vh_n + \mE_{\text{output}} \Delta \vh_n + \Delta \mE_{\text{output}} \Delta \vh_n.
\end{equation*}
The scale of the first perturbation term is
\begin{equation}\label{eq:NGyVlR2}
  \frac{\norm{\Delta \mE_{\text{output}} \vh_n}}{\sqrt{V}} \asymp d_{\text{model}}^{\alpha_{\text{output}}} \eta_{\text{output}} \frac{\norm{\vh_n}}{\sqrt{d_{\text{model}}}} = d_{\text{model}}^{\alpha_{\text{output}} - 1 / 2} \eta_{\text{output}};
\end{equation}
of the second perturbation term (no alignment $\omega_{\text{output}} = 1 / 2$ and $\norm{\Delta \vh_n} \asymp 1$)
\begin{equation*}
  \frac{\norm{\mE_{\text{output}} \Delta \vh_n}}{\sqrt{V}} \asymp \sqrt{d_{\text{model}}} \frac{\norm{\mE_{\text{output}}}_F}{\sqrt{V d_{\text{model}}}} \frac{\norm{\Delta \vh_n}}{\sqrt{d_{\text{model}}}}
  = \frac{\norm{\mE_{\text{output}}}_F}{\sqrt{V d_{\text{model}}}} = d_{\text{model}}^{- 1 / 2};
\end{equation*}
and of the third perturbation term (using $\norm{\Delta \vh_n} \asymp 1$)
\begin{equation*}
  \frac{\norm{\Delta \mE_{\text{output}} \Delta \vh_n}}{\sqrt{V}}
  \asymp d_{\text{model}}^{\nu_{\text{output}}} \eta_{\text{output}} \frac{\norm{\Delta \vh_n}}{\sqrt{d_{\text{model}}}}
  = d_{\text{model}}^{\nu_{\text{output}} - 1 / 2} \eta_{\text{output}}.
\end{equation*}
Then, each component of both $\hat{\vz}_n$ (at initialization) and $\Delta \hat{\vz}_n(t)$ is of scale $d_{\text{model}}^{- 1 / 2}$ during training.

The update of $\hat{\vz}_n \odot \frac{s_{\text{$z$,init}}}{s_{\text{$z$,scale}}} \vs_z$ is given by
\begin{equation*}
  \Delta \prn[\bigg]{\hat{z}_{n, i} \frac{s_{\text{$z$,init}}}{s_{\text{$z$,scale}}} s_{z, i}} = \Delta \hat{z}_{n, i} \frac{s_{\text{$z$,init}}}{s_{\text{$z$,scale}}} s_{z, i} + \hat{z}_{n, i} \frac{s_{\text{$z$,init}}}{s_{\text{$z$,scale}}} \Delta s_{z, i} + \frac{s_{\text{$z$,init}}}{s_{\text{$z$,scale}}} \Delta \hat{z}_{n, i} \Delta s_{z, i}.
\end{equation*}
Each term is of scale $s_{\text{$z$,init}} d_{\text{model}}^{- 1 / 2} \asymp 1$.
\end{proof}

Finally,
classical $\mu$P leads to logits being negligible at initialization
but not during training.
Moreover, $\mu$P would be the only possible (stable and nontrivial) choice if we assumed $\omega_{\text{output}} = 1$.
Width-wise learning rate transfer in \mup is a robust empirical observation,
making it a safe choice.

\begin{lemma}[Logits are smaller at initialization than during training, e.\,g. classical \mup]\label{lem:logits-under-align1}
  Suppose \cref{ass:alignment,ass:rescalers-movement} hold,
  $\eta_{\text{output}} \asymp \dmodel^{1 / 2 - \max\crl{\alpha_{\text{output}}, \nu_{\text{output}}}}$, $s_{\text{$z$,init}}$
  and $s_{\text{$z$,scale}}$ are positive constants.
  Suppose further $\norm{\Delta \vh_n(t)} \asymp 1$.
  Then, with no assumption on $\omega_{\text{output}}$, each component of $\vz_n$ (at initialization)
  is $\Theta(d_{\text{model}}^{- 1 / 2})$
  but $\Delta \vz_n(t)$ is of scale $\Theta(1)$, where $\vz_n$ is defined by \cref{eq:oDPOKs}.
\end{lemma}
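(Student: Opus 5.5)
The proof will follow the template of \cref{lem:logits-under-align0.5}, with the roles of the $\omega_{\text{output}}$ term and the learning-rate terms swapped. First I handle the forward pass at initialization. Since rows of $\mE_{\text{output}}$ are normalized and, at initialization, $\mE_{\text{output}}$ is independent of $\vh_n$ (alignment exponent $1/2$ by independence), the computation in \eqref{eq:gVMQEG} applies verbatim. It gives $\hat z_{n,i} \asymp d_{\text{model}}^{-1/2}$ using only $\norm{\vh_n}=1$. Because $s_{\text{$z$,init}}$ is a constant and $s_{z,i}=s_{\text{$z$,scale}}$ at initialization, each $z_{n,i} = s_{\text{$z$,init}}\hat z_{n,i} \asymp d_{\text{model}}^{-1/2}$. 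No assumption on $\omega_{\text{output}}$ enters here.

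Next I expand $\Delta\hat{\vz}_n = \Delta\mE_{\text{output}}\vh_n + \mE_{\text{output}}\Delta\vh_n + \Delta\mE_{\text{output}}\Delta\vh_n$ and estimate each term with \cref{def:alignment-exponents} and the signGD-like scaling $\norm{\Delta\mE_{\text{output}}}_F/\sqrt{Vd_{\text{model}}}\asymp\eta_{\text{output}}$.

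\textbf{First and third terms.} Using $\norm{\vh_n}=1$ and $\norm{\Delta \vh_n}\asymp 1$, these have per-component scales $d_{\text{model}}^{\alpha_{\text{output}}-1/2}\eta_{\text{output}}$ and $d_{\text{model}}^{\nu_{\text{output}}-1/2}\eta_{\text{output}}$. With $\eta_{\text{output}}\asymp d_{\text{model}}^{1/2-\max\{\alpha_{\text{output}},\nu_{\text{output}}\}}$ both are $O(1)$, and the one attaining the maximum is $\Theta(1)$. So $\Delta\mE_{\text{output}}\vh_n + \Delta\mE_{\text{output}}\Delta\vh_n$ has $\Theta(1)$ components. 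This is exactly why the learning rate is chosen a factor $d_{\text{model}}^{1/2}$ larger than in \cref{lem:logits-under-align0.5}.

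\textbf{Middle term.} Using $\omega_{\text{output}}\le 1$, it has scale $d_{\text{model}}^{\omega_{\text{output}}-1}\lesssim 1$ whatever $\omega_{\text{output}}$ is. So it is at most $\Theta(1)$ and cannot cause blowup. This is the point where ``no assumption on $\omega_{\text{output}}$'' is used: the worst case $\omega_{\text{output}}=1$ (the $\mu$P case) still yields $O(1)$.

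Finally I pass through the rescaler as in the last display of \cref{lem:logits-under-align0.5}. In $\Delta z_{n,i}$, the term $\Delta\hat z_{n,i}\,s_{\text{$z$,init}}s_{z,i}/s_{\text{$z$,scale}}$ is $\Theta(1)$. The term $\hat z_{n,i}\,\frac{s_{\text{$z$,init}}}{s_{\text{$z$,scale}}}\Delta s_{z,i}$ is $\Theta(d_{\text{model}}^{-1/2})$ by \cref{ass:rescalers-movement}. The cross term is $O(1)$. Hence $\Delta z_{n,i}=\Theta(1)$.

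The main obstacle is the lower bound $\Delta\hat z_{n,i}\gtrsim 1$. The three perturbation terms could in principle cancel, since the middle term is also $\Theta(1)$ when $\omega_{\text{output}}=1$. I would handle this as the paper does elsewhere: treat terms of equal order as non-cancelling generically, i.e., the sum has the scale of its largest term. Under that convention the maximum of the three scales is $\Theta(1)$, coming from whichever of $\alpha_{\text{output}},\nu_{\text{output}}$ attains the maximum. The cross term with $\Delta s_{z,i}$ is lower order and cannot cancel it.
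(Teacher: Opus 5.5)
Your proposal is correct and follows essentially the same route as the paper's proof. It uses the independence estimate $\hat z_{n,i}\asymp d_{\text{model}}^{-1/2}$ at initialization, then the three-term expansion of $\Delta\hat{\vz}_n$, where the chosen $\eta_{\text{output}}$ makes the $\alpha_{\text{output}}$ and $\nu_{\text{output}}$ terms $\Theta(1)$ while the $\omega_{\text{output}}$ term is $d_{\text{model}}^{\omega_{\text{output}}-1}\lesssim 1$, and finally the product rule through the rescaler. One small correction: the term $\frac{s_{\text{$z$,init}}}{s_{\text{$z$,scale}}}\Delta\hat z_{n,i}\,\Delta s_{z,i}$ is $\Theta(1)$, not lower order (the paper notes that $s_{\text{$z$,scale}}\asymp 1$ is exactly what makes it match the leading term), so either extend your non-cancellation convention to it or combine it with the leading term into $\frac{s_{\text{$z$,init}}}{s_{\text{$z$,scale}}}\Delta\hat z_{n,i}\,s_{z,i}(t)$; only $\hat z_{n,i}\frac{s_{\text{$z$,init}}}{s_{\text{$z$,scale}}}\Delta s_{z,i}$ is genuinely $O(d_{\text{model}}^{-1/2})$.
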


\begin{proof}
The scale of $\hat{\vz}_n \in \mathbb{R}^V$ at initialization (no alignment) is given by
\begin{equation}\label{eq:gVMQEGhhh}
  \hat{z}_{n, i} \asymp \frac{\norm{\hat{\vz}_n}}{\sqrt{V}} \asymp \sqrt{d_{\text{model}}} \frac{\norm{\mE_{\text{output}}}_F}{\sqrt{d_{\text{model}} V}} \frac{\norm{\vh_n}}{\sqrt{d_{\text{model}}}}
  = \frac{\norm{\mE_{\text{output}}}_F}{\sqrt{d_{\text{model}} V}} = d_{\text{model}}^{- 1 / 2}.
\end{equation}
The update of $\hat{\vz}_n$ is given by
\begin{equation*}
\Delta \hat{\vz}_n = \Delta \mE_{\text{output}} \vh_n + \mE_{\text{output}} \Delta \vh_n + \Delta \mE_{\text{output}} \Delta \vh_n.
\end{equation*}
The scale of the first perturbation term is
\begin{equation}\label{eq:NGyVlR}
  \frac{\norm{\Delta \mE_{\text{output}} \vh_n}}{\sqrt{V}} \asymp d_{\text{model}}^{\alpha_{\text{output}}} \eta_{\text{output}} \frac{\norm{\vh_n}}{\sqrt{d_{\text{model}}}} = d_{\text{model}}^{\alpha_{\text{output}} - 1 / 2} \eta_{\text{output}};
\end{equation}
of the second perturbation term (using $\norm{\Delta \vh_n} \asymp 1$)
\begin{equation*}
  \frac{\norm{\mE_{\text{output}} \Delta \vh_n}}{\sqrt{V}} \asymp d_{\text{model}}^{\omega_{\text{output}}} \frac{\norm{\mE_{\text{output}}}_F}{\sqrt{V d_{\text{model}}}} \frac{\norm{\Delta \vh_n}}{\sqrt{d_{\text{model}}}}
  \asymp d_{\text{model}}^{\omega_{\text{output}} - 1 / 2} \frac{\norm{\mE_{\text{output}}}_F}{\sqrt{V d_{\text{model}}}} \asymp d_{\text{model}}^{\omega_{\text{output}} - 1};
\end{equation*}
and of the third perturbation term ($\norm{\Delta \vh_n} \asymp 1$)
\begin{equation*}
  \frac{\norm{\Delta \mE_{\text{output}} \Delta \vh_n}}{\sqrt{V}}
  \lesssim d_{\text{model}}^{\nu_{\text{output}}} \eta_{\text{output}} \frac{\norm{\Delta \vh_n}}{\sqrt{d_{\text{model}}}}
  = d_{\text{model}}^{\nu_{\text{output}} - 1 / 2} \eta_{\text{output}}.
\end{equation*}
Thus, the (pre-)logits start much smaller than they are updated.
From the second forward pass onwards, the scale of (pre-)logits becomes
the (dominant) scale of their updates, that is,
\begin{equation*}
\hat{z}_{n, i} \asymp d_{\text{model}}^{- 1 / 2} \quad \text{but} \quad \hat{z}_{n, i} + \Delta \hat{z}_{n, i} \asymp 1.
\end{equation*}
The movement of $\hat{\vz}_n \odot \frac{s_{\text{$z$,init}}}{s_{\text{$z$,scale}}} \vs_z$ is given by
\begin{equation*}
  \Delta \prn[\bigg]{\hat{z}_{n, i} \frac{s_{\text{$z$,init}}}{s_{\text{$z$,scale}}} s_{z, i}} = \Delta \hat{z}_{n, i} \frac{s_{\text{$z$,init}}}{s_{\text{$z$,scale}}} s_{z, i} + \hat{z}_{n, i} \frac{s_{\text{$z$,init}}}{s_{\text{$z$,scale}}} \Delta s_{z, i} + \frac{s_{\text{$z$,init}}}{s_{\text{$z$,scale}}} \Delta \hat{z}_{n, i} \Delta s_{z, i}.
\end{equation*}
The first term is of scale $s_{\text{$z$,init}}$,
the second of scale $d_{\text{model}}^{- 1 / 2} \frac{s_{\text{$z$,init}}}{s_{\text{$z$,scale}}}$,
and the third of scale $\frac{s_{\text{$z$,init}}}{s_{\text{$z$,scale}}}$ (always dominating the second term).
The fact that $s_{\text{$z$,scale}} \asymp 1$ ensures matching scales of the first and third terms
(ensuring that the movement in $\vs_z(t)$ is not negligible but does not dominate the dynamics).
\end{proof}




\subsection{Depth scaling}\label{sec:depth-scaling-deriv}

As common in the literature \citep{yang2024tensor,dey2025dont}, we consider a simplification of the network to derive depth
corrections.

\begin{definition}[Simple normalized network]
The linear normalized network maps a one-hot vector $\vx \in \mathbb{R}^V$ to logits $\vz(\vx; t) \in \mathbb{R}^V$ as follows:
\begin{align*}
  \vh^1(\vx; t) &= \mE_{\text{input}}(t) \vx,\\
  \hat{\vh}^{\ell + 1}(\vx; t) &= \prn[\big]{1 - L^{-\alpha}} \vh^{\ell}(\vx; t) + L^{-\alpha} \text{Norm} \prn[\big]{\mW^{\ell}(t) \vh^{\ell}(\vx; t)},\\
  \vh^{\ell + 1}(\vx; t) &= \text{Norm} \prn[\big]{\hat{\vh}^{\ell + 1}(\vx; t)}, \quad \ell = 1, \ldots, L - 1,\\
  \vz(\vx; t) &= \mE_{\text{output}}(t) \vh^L(\vx; t)
\end{align*}
with each matrix $\mE_{\text{input}}(t) \in \mathbb{R}^{N \times V}$, $\mW^{\ell}(t) \in \mathbb{R}^{N \times N}$ for $\ell \in \range{1}{L - 1}$, $\mE_{\text{output}}(t) \in \mathbb{R}^{V \times N}$ initialized with i.\,i.\,d. Gaussian components and normalized along the embedding dimension (columns for $\mE_{\text{input}}$, rows for $\mW^{\ell}$ and $\mE_{\text{output}}$) before each training step, where $\alpha > 0$ is the ``depth alpha'' exponent.

The signGD updates $\Delta \mE_{\text{input}}$, $\Delta \mW^{\ell}$ and $\Delta \mE_{\text{output}}$ consist of components $\pm \eta_{\text{input}}$, $\pm \eta_{\text{hidden}}$ and $\pm \eta_{\text{output}}$ respectively.
\end{definition}

\paragraph{Notation}
We will treat the width $N \equiv d_{\text{model}}$ and depth $L$ as varying (and think of them as large), whereas $V$ is a fixed constant dimension.
Similarly to \cref{sec:width-deriv},
the notation $a_{N, L} \lesssim b_{N, L}$ or $a_{N, L} = O(b_{N, L})$
means $\limsup_{N \to \infty} \log_N \abs{a_{N, L} / b_{N, L}} \leq 0$ for all $L$ (!),
whereas $a_{N, L} \asymp b_{N, L}$ or $a_{N, L} = \Theta(b_{N, L})$ means $a_{N, L} \lesssim b_{N, L}$ and $b_{N, L} \lesssim a_{N, L}$.

First, stability at initialization (\cref{des:stab-init}) is automatic because the network is normalized. Indeed, since $\vx$ is one-hot, $\mE_{\text{input}} \vx$
is a column of $\mE_{\text{input}}$, which is normalized before each training step, so $\norm{\vh^1} = 1$.
Next, $\norm{\vh^{\ell}} = 1$ for $\ell \in \range{2}{L}$ by definition.
Finally, at initialization $\mE_{\text{output}}$ is unaligned with $\vh^L$, so
\begin{equation*}
  \norm{\vz} \asymp \frac{\norm{\vz}}{\sqrt{V}} \asymp \sqrt{N} \frac{\norm{\mE_{\text{output}}}_F}{\sqrt{N V}} \frac{\norm{\vh^L}}{\sqrt{N}}
  = \frac{\norm{\mE_{\text{output}}}_F}{\sqrt{N V}} = N^{- 1 / 2} = O(1).
\end{equation*}

Next, we deal with \cref{des:feature-learn}. Consider the linearized update
\begin{equation}\label{eq:BYaiHs}
  \Delta h_k^{\ell + 1} = \sum_{l = 1}^{\ell} \sum_{i j} \frac{\partial h_k^{\ell + 1}}{\partial W_{i j}^l} \Delta W_{i j}^l
  + \sum_{i j} \frac{\partial h_k^{\ell + 1}}{\partial E_{\text{input}, i j}} \Delta E_{\text{input}, i j} + O(\eta_{\text{hidden}}^2 + \eta_{\text{input}}^2).
\end{equation}
Note that\internalComment{MbTODO say along with \eqref{eq:lAvyNo} that we use $a_k \asymp \norm{\va} / \sqrt{N}$ in general}
\begin{align*}
  \sum_{i j} \frac{\partial h_k^{\ell + 1}}{\partial W_{i j}^l} \Delta W^l_{i j}
  &= \sum_{i j} \sum_m \frac{\partial h_k^{\ell + 1}}{\partial h_m^{l + 1}}
  \frac{\partial h_m^{l + 1}}{\partial W_{i j}^l} \Delta W_{i j}^l\\
  &= \sum_{i j} \sum_m \frac{\partial h_k^{\ell + 1}}{\partial h_m^{l + 1}}
    \frac{\partial h_m^{l + 1}}{\partial (\mW^l \vh^l)_i} h_j^l \Delta W_{i j}^l\\
  &= \sum_m \frac{\partial h_k^{\ell + 1}}{\partial h_m^{l + 1}}
    \brk[\bigg]{\frac{\partial \vh^{l + 1}}{\partial (\mW^l \vh^l)} (\Delta \mW^l \vh^l)}_m\\
  &= \brk[\bigg]{\frac{\partial \vh^{\ell + 1}}{\partial \vh^{l + 1}}
    \frac{\partial \vh^{l + 1}}{\partial (\mW^l \vh^l)} (\Delta \mW^l \vh^l)}_k.
\end{align*}

\begin{lemma}\label{lem:QmUoXJ}
  Suppose \cref{ass:alignment} holds, $L^{1 - 2 \alpha} \lesssim N$, and the matrix $\frac{\partial \vh^{\ell + 1}}{\partial \vh^{l + 1}}$ is unaligned with the vector $\frac{\partial \vh^{l + 1}}{\partial (\mW^l \vh^l)} \Delta \mW^l \vh^l$: specifically,
  \begin{equation*}
    \max_{\ell} \max_{1 \leq l \leq \ell} \prn[\bigg]{\Expectlet \norm[\bigg]{\frac{\partial \vh^{\ell + 1}}{\partial \vh^{l + 1}} \frac{\partial \vh^{l + 1}}{\partial (\mW^l \vh^l)} \Delta \mW^l \vh^l}}
    \prn[\bigg]{N^{-1 / 2} \Expectlet \norm[\bigg]{\frac{\partial \vh^{\ell + 1}}{\partial \vh^{l + 1}}}_F \Expectlet \norm[\bigg]{\frac{\partial \vh^{l + 1}}{\partial (\mW^l \vh^l)} \Delta \mW^l \vh^l}}^{-1}
    \lesssim 1.
  \end{equation*}
  Then
  \begin{equation*}
    N^{-1 / 2} \max_{\ell} \max_{1 \leq l \leq \ell} \Expectlet \norm[\bigg]{\frac{\partial \vh^{\ell + 1}}{\partial \vh^{l + 1}} \frac{\partial \vh^{l + 1}}{\partial (\mW^l \vh^l)} \Delta \mW^l \vh^l}
    \lesssim L^{- \alpha} N^{1 / 2} \eta_{\text{hidden}}.
  \end{equation*}
\end{lemma}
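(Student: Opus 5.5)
Using the unalignment hypothesis, it suffices to bound separately the two factors $N^{-1/2}\Expectlet\|\partial\vh^{\ell+1}/\partial\vh^{l+1}\|_F$ and $\Expectlet\|\frac{\partial \vh^{l+1}}{\partial(\mW^l\vh^l)}\Delta\mW^l\vh^l\|$. After multiplying the hypothesis through, the target quantity is at most
\begin{equation*}
N^{-1/2}\cdot N^{-1/2}\,\Expectlet\norm[\bigg]{\frac{\partial \vh^{\ell + 1}}{\partial \vh^{l + 1}}}_F \cdot \Expectlet\norm[\bigg]{\frac{\partial \vh^{l + 1}}{\partial (\mW^l \vh^l)} \Delta \mW^l \vh^l}.
\end{equation*}
The plan is to show that the first factor is $N^{-1/2}\Expectlet\|\cdot\|_F \lesssim 1$ and that the second is $\lesssim L^{-\alpha} N^{1/2}\eta_{\text{hidden}}$, uniformly in $\ell$ and $l$. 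This yields the claim.

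\textbf{Second factor (local term).} Write $\vy=\mW^l\vh^l$. The map $\vy\mapsto \vh^{l+1}$ is the composition of $\text{Norm}$, multiplication by $L^{-\alpha}$, and the outer normalization of $\hat\vh^{l+1}$.
\begin{itemize}
\item The rows of $\mW^l$ are unit and unaligned with $\vh^l$ at initialization, so $\|\vy\|\asymp 1$.
\item Hence the Jacobian of $\text{Norm}$ at $\vy$ is the projection $(\mI-\hat\vy\hat\vy\trans)/\|\vy\|$, which has operator norm $\asymp 1$.
\item The outer normalization acts on $\hat\vh^{l+1}$, whose norm is $\asymp 1$ because it is a convex combination of two unit vectors with weight $L^{-\alpha}\le 1$. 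Its Jacobian therefore also has operator norm $\lesssim 1$.
\end{itemize}
The second factor is thus $\lesssim L^{-\alpha}\|\Delta\mW^l\vh^l\|$. By the definition of $\alpha_{\text{hidden}}$ and signGD scaling, $\|\Delta\mW^l\vh^l\|\asymp N^{\alpha_{\text{hidden}}}\eta_{\text{hidden}}\le N\eta_{\text{hidden}}$. The crude bound $\|\Delta\mW^l\|_{\mathrm{op}}\le\|\Delta\mW^l\|_F=N\eta_{\text{hidden}}$ gives the same thing.

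\textbf{Correction to the exponent.} This gives $L^{-\alpha}N\eta_{\text{hidden}}$, but the target is $L^{-\alpha}N^{1/2}\eta_{\text{hidden}}$ after the $N^{-1/2}$ normalization. I need to recount the powers of $N$. The statement multiplies by $N^{-1/2}$ only once, not twice. So the bookkeeping is $N^{-1/2}\cdot(\text{target norm})$, and the hypothesis gives target norm $\lesssim N^{-1/2}\|J\|_F\cdot L^{-\alpha}N\eta_{\text{hidden}}$. The first factor must therefore be shown $N^{-1/2}\|J\|_F\lesssim 1$, i.e. $\|J\|_F\lesssim \sqrt N$, where $J=\partial\vh^{\ell+1}/\partial\vh^{l+1}$. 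The total is then $N^{-1/2}\cdot L^{-\alpha}N\eta_{\text{hidden}}=L^{-\alpha}N^{1/2}\eta_{\text{hidden}}$, as required.

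\textbf{First factor (the main obstacle).} I must show $\|J\|_F\lesssim\sqrt N$ uniformly over up to $L$ layers. Write $J=\prod_{k}J_k$ with
\begin{equation*}
J_k=\frac{1}{\|\hat\vh^{k+1}\|}\bigl(\mI-\vh^{k+1}\vh^{k+1\transt}\bigr)\Bigl[(1-L^{-\alpha})\mI+L^{-\alpha}\tfrac{1}{\|\mW^k\vh^k\|}\bigl(\mI-\hat\vy_k\hat\vy_k\trans\bigr)\mW^k\Bigr].
\end{equation*}
Each factor is a near-identity plus an $L^{-\alpha}$ random perturbation. The unit-row, independent $\mW^k$ has operator norm $O(1)$, since it is a square Gaussian-like matrix with unit rows. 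The projections do not increase norms, and $\|\hat\vh^{k+1}\|^2=(1-L^{-\alpha})^2+L^{-2\alpha}+2L^{-\alpha}(1-L^{-\alpha})\langle\cdot,\cdot\rangle$.

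I would track $\Expectlet\|J\vv\|^2$ for a random direction $\vv$, as in standard mean-field recursions. At initialization the cross terms $\langle\vh^k,\text{Norm}(\mW^k\vh^k)\rangle$ are $O(N^{-1/2})$, so $\|\hat\vh\|^{-2}\approx (1-2L^{-\alpha}+2L^{-2\alpha})^{-1}$. The perturbation also contributes $L^{-2\alpha}$ fresh variance per layer through $\Expectlet\|\mW^k\vv\|^2\approx\|\vv\|^2$. So the per-layer multiplicative factor on $\Expectlet\|J\vv\|^2$ is $1+O(L^{-2\alpha}N^{-1}\cdot\text{stuff})+O(L^{-\alpha}N^{-1/2})$, after the leading $L^{-\alpha}$ terms cancel between the numerator and the normalization. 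Over $L$ layers the product stays $\lesssim 1$ precisely when $L\cdot L^{-2\alpha}\lesssim N$, i.e. under the hypothesis $L^{1-2\alpha}\lesssim N$. The delicate point, and where I expect the argument to be hardest, is verifying this cancellation of the first-order $L^{-\alpha}$ drift, so that the residual growth is $L^{1-2\alpha}/N$ and not $L^{1-\alpha}$.

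Summing $\Expectlet\|J\vv\|^2\lesssim\|\vv\|^2$ over an orthonormal basis gives $\Expectlet\|J\|_F^2\lesssim N$, and Jensen then yields $\Expectlet\|J\|_F\lesssim\sqrt N$. Taking maxima over $\ell$ and $l$ is harmless because the bounds are uniform.
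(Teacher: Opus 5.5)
Your skeleton matches the paper's proof: you split via the unalignment hypothesis, and you bound the local term by $L^{-\alpha}\norm{\Delta\mW^l\vh^l}/\norm{\mW^l\vh^l}\lesssim L^{-\alpha}N\eta_{\text{hidden}}$ using the Jacobian of $\mathrm{Norm}$ and $\norm{\hat\vh^{l+1}}=1-O(L^{-\alpha})$. You then aim for $\Expectlet\norm{\partial\vh^{\ell+1}/\partial\vh^{l+1}}_F\lesssim\sqrt N$ through a per-layer estimate iterated by conditioning on earlier layers. Your corrected power counting is right. The gap is that this first factor, which is the real content of the lemma, is never established, and the per-layer estimate you write down is too weak. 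If each layer multiplies $\Expectlet\norm{J\vv}^2$ by $1+O(L^{-2\alpha}N^{-1})+O(L^{-\alpha}N^{-1/2})$, then over $L$ layers you get $\exp\bigl(O(L^{1-2\alpha}N^{-1}+L^{1-\alpha}N^{-1/2})\bigr)$. That is bounded only if $L^{2-2\alpha}\lesssim N$, which is strictly stronger than the hypothesis $L^{1-2\alpha}\lesssim N$ when $\alpha<1$. So ``the product stays $\lesssim 1$ precisely when $L^{1-2\alpha}\lesssim N$'' does not follow from your own estimate. The cancellation of the deterministic $O(L^{-\alpha})$ drift is immediate once you factor out $c_L=(1-L^{-\alpha})^2+L^{-2\alpha}$. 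What actually has to be removed is the random $O(L^{-\alpha}N^{-1/2})$ term, and you do not say how.

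The missing idea is parity. The law of $\mW^k$ (Gaussian, then row-normalized) is invariant under $\mW^k\mapsto-\mW^k$, and both first-order fluctuation terms are odd under this map:
\begin{itemize}
\item the cross term $2(1-L^{-\alpha})L^{-\alpha}\trace\bigl(\mA\trans\tfrac{\partial\,\mathrm{Norm}(\mW^k\vh^k)}{\partial(\mW^k\vh^k)}\mW^k\mA\bigr)$ in the numerator;
\item $s=\tfrac{2L^{-\alpha}(1-L^{-\alpha})}{c_L}(\vh^k)\trans\mathrm{Norm}(\mW^k\vh^k)$ in $\norm{\hat\vh^{k+1}}^2=c_L(1+s)$.
\end{itemize}
Conditionally on earlier layers, both therefore have mean zero. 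Only $\Expectlet s^2=O(L^{-2\alpha}N^{-1})$ survives, via $1/(1+s)\le 1-s+2s^2$. The paper bounds $\Expectlet\norm{(\cdots)\mA}_F^2\le c_L(1+O(L^{-2\alpha}N^{-1}))\norm{\mA}_F^2$ and $\Expectlet\norm{\hat\vh^{k+1}}^{-2}\le c_L^{-1}(1+O(L^{-2\alpha}N^{-1}))$ separately. It combines them by Cauchy--Schwarz to get $\Expectlet\bigl[\norm{J_k\mA}_F\mid\mA\bigr]\le(1+O(L^{-2\alpha}N^{-1}))\norm{\mA}_F$, and hence $(1+O(L^{-2\alpha}N^{-1}))^L\sqrt N=O(\sqrt N)$ under $L^{1-2\alpha}\lesssim N$.

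Cauchy--Schwarz is what lets the paper decouple the random factor $\norm{\hat\vh^{k+1}}^{-1}$ from the numerator while working with first moments. With squared norms, as in your recursion, that decoupling would require fourth moments. You would instead have to show directly that $\Expectlet\bigl[s\,\norm{(\cdots)\vv}^2\bigr]=O(L^{-2\alpha}N^{-1})\norm{\vv}^2$. This holds by the same parity argument, since only the product of the two odd parts survives. Alternatively, switch to first moments of $\norm{\cdot}_F$ as the paper does, which also avoids summing $N$ per-direction bounds under the paper's sub-polynomial $\lesssim$.
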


\internalComment{
  The form of the alignment assumption
  is such that Cauchy-Schwarz goes through without using fourth moments.
  I was trying to be somewhat rigorous here,
  so if I want to bound the scale of, say, $\sum_{l = 1}^L \xi_l$, I need e.g. $\max_{1 \leq l \leq L} \Expectlet \abs{\xi}_l = O(1)$.
  Just having $\abs{\xi_l} = O_{\Problet}(1)$ is not enough because it is not true that $\sum_{l = 1}^L O_{\Problet}(1) = O_{\Problet}(L)$.
}

\begin{proof}
We start with
\begin{equation}\label{eq:qjFwas}
\begin{multlined}
  N^{-1 / 2} \norm[\bigg]{\frac{\partial \vh^{l + 1}}{\partial \hat{\vh}^{l + 1}} \frac{\partial \hat{\vh}^{l + 1}}{\partial (\mW^l \vh^l)} (\Delta \mW^l \vh^l)}
  \labrel{1}[\leq] L^{- \alpha} N^{- 1 / 2} \frac{\norm{\Delta \mW^l \vh^l}}{\norm[\big]{\hat{\vh}^{l + 1}} \norm{\mW^l \vh^l}}\\
  \labrel{2}[\leq] (1 + O(L^{-\alpha})) L^{-\alpha} N^{-1 / 2} \frac{\norm{\Delta \mW^l \vh^l}}{\norm{\mW^l \vh^l}}
\end{multlined}
\end{equation}
where \eqrefpt{eq:qjFwas}{1} is by
\begin{equation}\label{eq:lAvyNo}
  \norm[\bigg]{\frac{\partial \text{Norm}(\va)}{\partial \va} \vb}
  = \frac{1}{\norm{\va}} \norm[\bigg]{\prn[\bigg]{\mI - \frac{\va \va\trans}{\norm{\va}^2}} \vb}
  \leq \frac{\norm{\vb}}{\norm{\va}}
\end{equation}
for any two (non-zero) vectors $\va$ and $\vb$, and \eqrefpt{eq:qjFwas}{2} is because
$\norm{(1 - L^{-\alpha}) \va + L^{-\alpha} \vb} = 1 - O(L^{-\alpha})$ for unit vectors $\va$ and $\vb$ regardless of their alignment.

Now, consider the matrix
\begin{align*}
  \frac{\partial \vh^{\ell + 1}}{\partial \vh^{\ell}} &= \frac{\partial \vh^{\ell + 1}}{\partial \hat{\vh}^{\ell + 1}} \frac{\partial \hat{\vh}^{\ell + 1}}{\partial \vh^{\ell}}\\
                                  &= \frac{\partial \vh^{\ell + 1}}{\partial \hat{\vh}^{\ell + 1}} \prn[\bigg]{(1 - L^{-\alpha}) \mI + L^{-\alpha} \frac{\partial \text{Norm}(\mW^{\ell} \vh^{\ell})}{\partial \mW^{\ell} \vh^{\ell}} \mW^{\ell}}\\
  &= \frac{1}{\norm[\big]{\hat{\vh}^{\ell + 1}}} \prn[\big]{\mI - \vh^{\ell + 1} (\vh^{\ell + 1})\trans} \prn[\bigg]{(1 - L^{-\alpha}) \mI + L^{-\alpha} \frac{\partial \text{Norm}(\mW^{\ell} \vh^{\ell})}{\partial \mW^{\ell} \vh^{\ell}} \mW^{\ell}}.
\end{align*}
Notice that
\begin{align*}
  &\Expectlet \norm[\bigg]{\prn[\big]{\mI - \vh^{\ell + 1} (\vh^{\ell + 1})\trans} \prn[\bigg]{(1 - L^{-\alpha}) \mI + L^{-\alpha} \frac{\partial \text{Norm}(\mW^{\ell} \vh^{\ell})}{\partial \mW^{\ell} \vh^{\ell}} \mW^{\ell}} \mA}_F^2\\
  &\quad \leq \Expectlet \norm[\bigg]{\prn[\bigg]{(1 - L^{-\alpha}) \mI + L^{-\alpha} \frac{\partial \text{Norm}(\mW^{\ell} \vh^{\ell})}{\partial \mW^{\ell} \vh^{\ell}} \mW^{\ell}} \mA}_F^2\\
  &\quad = \Expectlet \trace \mA\trans \prn[\bigg]{(1 - L^{-\alpha}) \mI + L^{-\alpha} \frac{\partial \text{Norm}(\mW^{\ell} \vh^{\ell})}{\partial \mW^{\ell} \vh^{\ell}} \mW^{\ell}}\trans \prn[\bigg]{(1 - L^{-\alpha}) \mI + L^{-\alpha} \frac{\partial \text{Norm}(\mW^{\ell} \vh^{\ell})}{\partial \mW^{\ell} \vh^{\ell}} \mW^{\ell}} \mA\\
  &\quad \leq \brk[\big]{(1 - L^{-\alpha})^2 + L^{- 2 \alpha} (1 + O(N^{-1}))} \norm{\mA}_F^2\\
  &\quad = \brk[\big]{(1 - L^{-\alpha})^2 + L^{- 2 \alpha} } \prn[\big]{1 + O(L^{-2 \alpha} N^{-1})} \norm{\mA}_F^2.
\end{align*}
where $\mA$ is assumed independent of $\mW^{\ell}$ and the expectation is over $\mW^{\ell}$.
Also,
\begin{align*}
  \Expectlet \frac{1}{\norm[\big]{\hat{\vh}^{\ell + 1}}^2} &= \Expectlet \frac{1}{(1 - L^{-\alpha})^2 + L^{-2 \alpha} + 2 L^{-\alpha} (1 - L^{-\alpha}) (\vh^{\ell})\trans \text{Norm}(\mW^{\ell} \vh^{\ell})}\\
                                                        &= \frac{1}{(1 - L^{-\alpha})^2 + L^{-2 \alpha}} \Expectlet \frac{1}{1 + 2 \frac{L^{-\alpha} (1 - L^{-\alpha})}{(1 - L^{-\alpha})^2 + L^{-2 \alpha}} (\vh^{\ell})\trans \text{Norm}(\mW^{\ell} \vh^{\ell})}\\
                                                        &=: \frac{1}{(1 - L^{-\alpha})^2 + L^{-2 \alpha}} \Expectlet \frac{1}{1 + s}
                                                          = \frac{1}{(1 - L^{-\alpha})^2 + L^{-2 \alpha}} \Expectlet \prn[\bigg]{1 - s + \frac{s^2}{1 + s}}\\
                                                        &\leq \frac{1}{(1 - L^{-\alpha})^2 + L^{-2 \alpha}} \Expectlet (1 - s + 2 s^2)\\
                                                        &= \frac{1}{(1 - L^{-\alpha})^2 + L^{-2 \alpha}} \prn[\bigg]{1 + 8 \frac{L^{-2 \alpha} (1 - L^{-\alpha})^2}{N \prn[\big]{(1 - L^{-\alpha})^2 + L^{-2 \alpha}}^2}}\\
  &\leq \frac{1}{(1 - L^{-\alpha})^2 + L^{-2 \alpha}} \prn[\big]{1 + O(L^{-2 \alpha} N^{-1})}.
\end{align*}
By the Cauchy-Schwarz inequality, we can now infer
\begin{multline*}
  \Expectlet \frac{1}{\norm[\big]{\hat{\vh}^{\ell + 1}}} \norm[\bigg]{\prn[\big]{\mI - \vh^{\ell + 1} (\vh^{\ell + 1})\trans} \prn[\bigg]{(1 - L^{-\alpha}) \mI + L^{-\alpha} \frac{\partial \text{Norm}(\mW^{\ell} \vh^{\ell})}{\partial \mW^{\ell} \vh^{\ell}} \mW^{\ell}} \mA}_F\\
  \leq \prn[\big]{1 + O(L^{- 2 \alpha} N^{-1})} \norm{\mA}_F.
\end{multline*}
Repeated conditioning leads to the estimate of the expected Frobenius norm of the matrix $\frac{\partial \vh^{\ell + 1}}{\partial \vh^{l + 1}}$ for $\ell$ and $l$ arbitrarily distant from each other:
\begin{equation*}
\Expectlet \norm[\bigg]{\frac{\partial \vh^{\ell + 1}}{\partial \vh^{l + 1}}}_F \leq \prn[\big]{1 + O(L^{- 2 \alpha} N^{-1})}^L \sqrt{N} = O(\sqrt{N}),
\end{equation*}
where in the last bound we use\internalComment{!} that $L^{- 2 \alpha} N^{-1} \lesssim L^{-1}$.

Then, using low alignment\internalComment{!} of $\frac{\partial \vh^{\ell + 1}}{\partial \vh^{l + 1}}$ and $\frac{\partial \vh^{l + 1}}{\partial (\mW^l \vh^l)} \Delta \mW^l \vh^l$, we see that
uniformly in $l$ and $\ell$,
\begin{equation*}
  \frac{1}{\sqrt{N}} \Expectlet \norm[\bigg]{\frac{\partial \vh^{\ell + 1}}{\partial \vh^{l + 1}} \frac{\partial \vh^{l + 1}}{\partial (\mW^l \vh^l)} \Delta \mW^l \vh^l}
  \lesssim \frac{1}{\sqrt{N}} \Expectlet \norm[\bigg]{\frac{\partial \vh^{l + 1}}{\partial (\mW^l \vh^l)} \Delta \mW^l \vh^l}
  \lesssim \frac{L^{-\alpha}}{\sqrt{N}} \Expectlet \frac{\norm{\Delta \mW^l \vh^l}}{\norm{\mW^l \vh^l}},
\end{equation*}
where the last inequality is by \cref{eq:qjFwas}.
The scale of $\frac{\norm{\Delta \mW^l \vh^l}}{\norm{\mW^l \vh^l}}$ is $\eta_{\text{hidden}} N$ (recall that $\alpha_{\text{hidden}} = 1$ by \cref{ass:alignment}).
\end{proof}

By \cref{lem:QmUoXJ},
the first term in \cref{eq:BYaiHs} is of scale no more than $\eta_{\text{hidden}} \ell L^{-\alpha} N^{1 / 2}$.
Taking into account that $\vx$ is one-hot, it is easy to see similarly that $\sum_{i j} \frac{\partial h_k^{\ell + 1}}{\partial E_{\text{input}, i j}} \Delta E_{\text{input}, i j}$
is of scale not exceeding $\eta_{\text{input}}$.
So, the maximal stable learning rates are found from
\begin{equation*}
\eta_{\text{hidden}} L^{1 - \alpha} N^{1 / 2} \asymp N^{-1 / 2}, \quad \eta_{\text{input}} \asymp N^{- 1 / 2},
\end{equation*}
that is,
\begin{equation*}
\eta_{\text{hidden}} \asymp L^{\alpha - 1} N^{-1}, \quad \eta_{\text{input}} \asymp N^{- 1 / 2}.
\end{equation*}

\clearpage

\section{Additional experiments}\label{sec:additional-exps}

\subsection{LERP parameters}

\begin{figure}[h!]
  \centering

  \begin{tabular}{c c}
    \begin{subfigure}[t]{0.48\textwidth}
      \includegraphics[width=\linewidth]{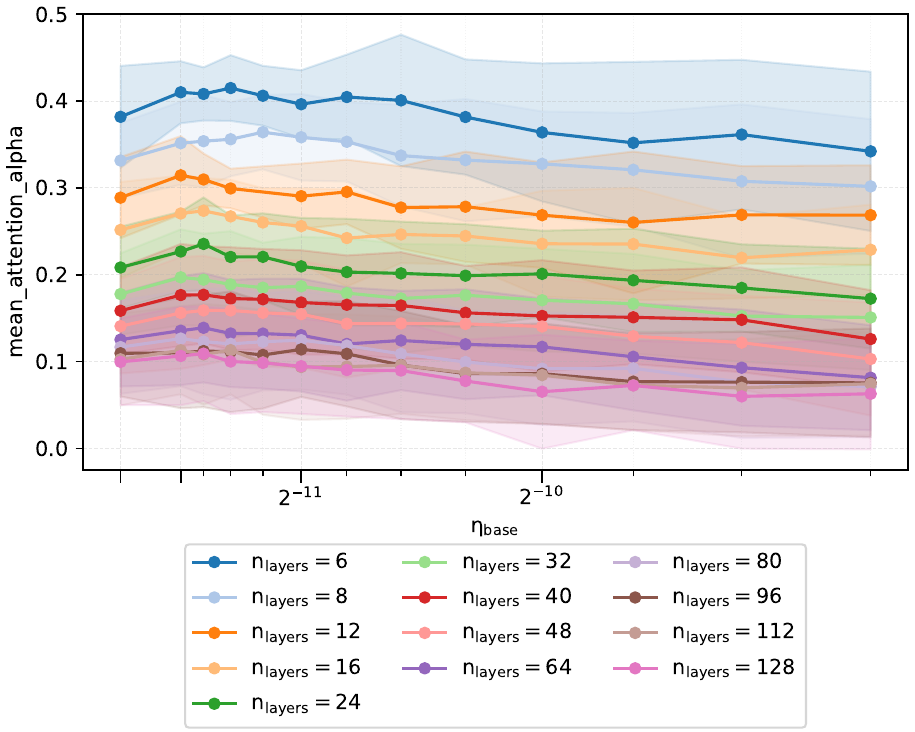}
    \end{subfigure}
    &
      \begin{subfigure}[t]{0.48\textwidth}
        \includegraphics[width=\linewidth]{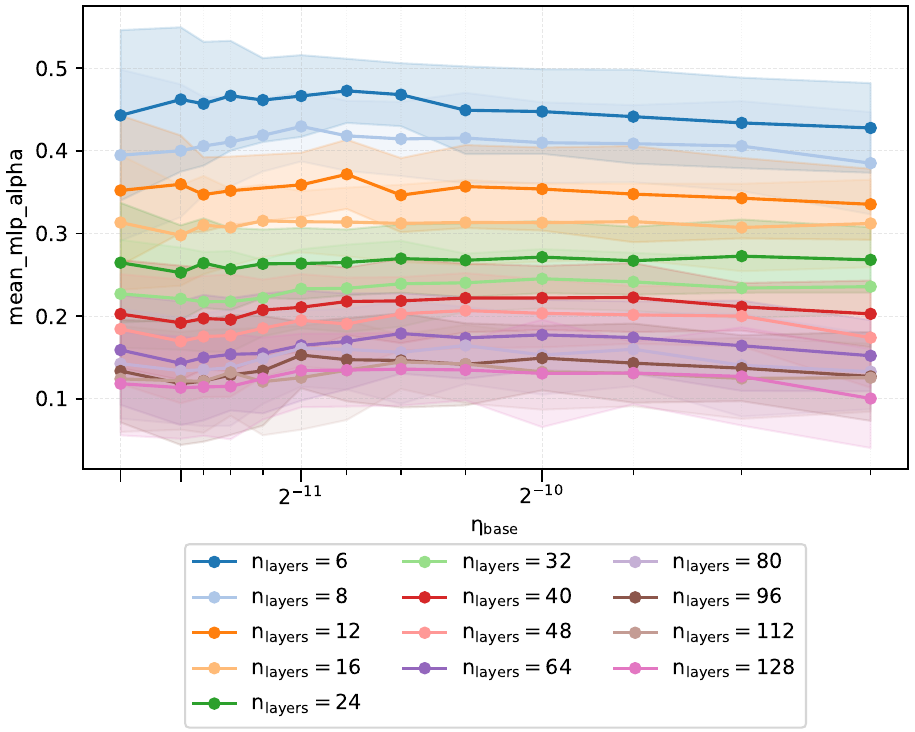}
      \end{subfigure}
    \\
    \begin{subfigure}[t]{0.48\textwidth}
      \includegraphics[width=\linewidth]{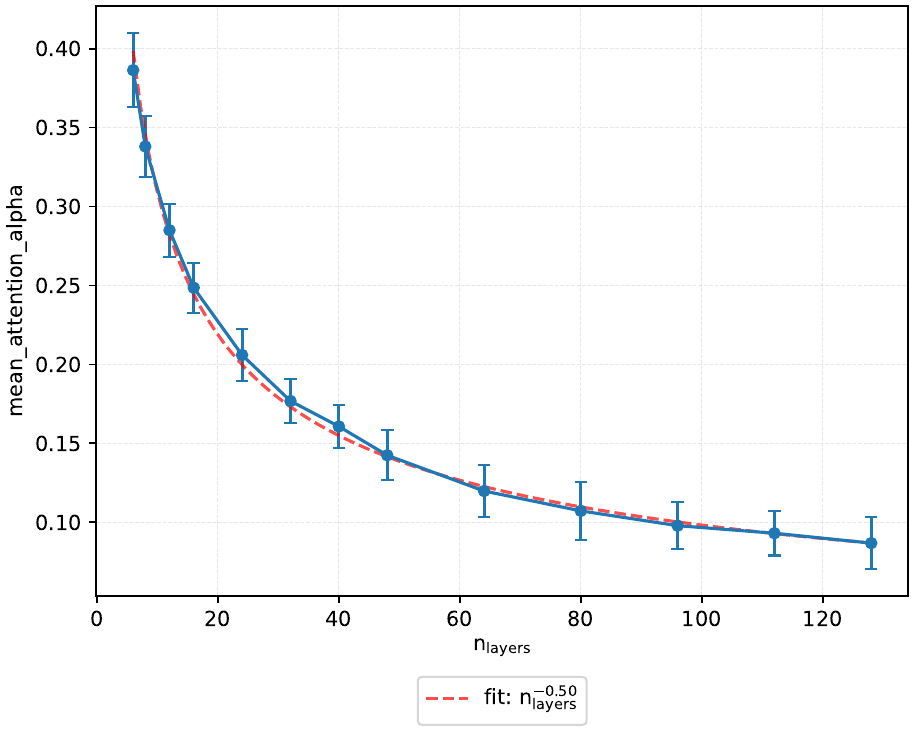}
    \end{subfigure}
    &
      \begin{subfigure}[t]{0.48\textwidth}
        \includegraphics[width=\linewidth]{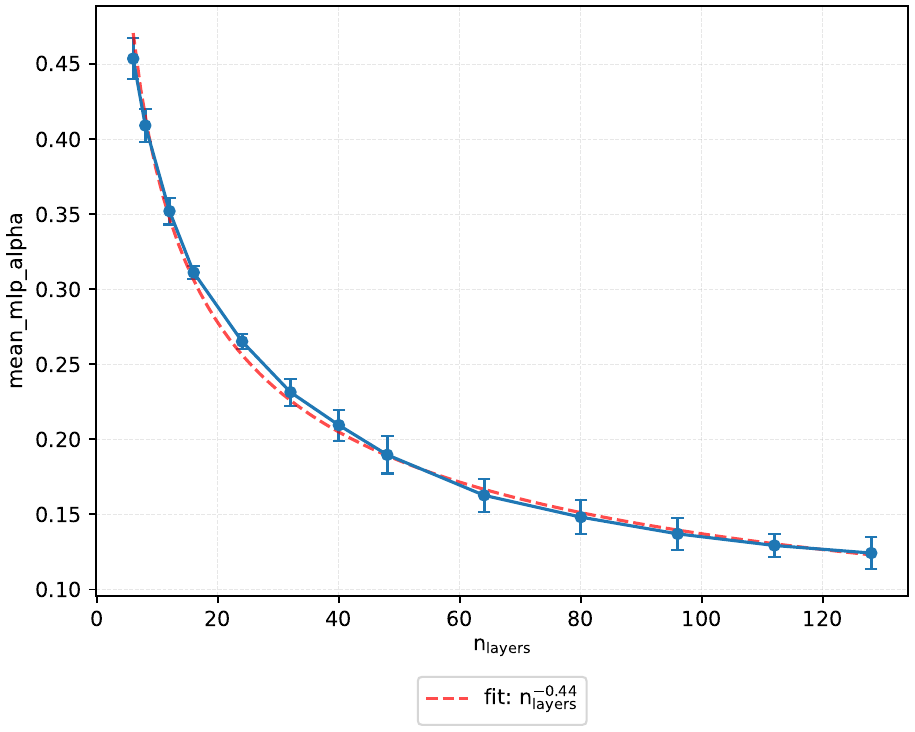}
      \end{subfigure}
  \end{tabular}
  \caption{
    Depth sweep of nGPT (baseline) at a fixed iteration count, power law fit of $\valpha_A$ and $\valpha_M$ components
    (averaged over layers and component indices).
  }
  \label{fig:depth-sweep-baseline-alpha-power-law}
\end{figure}

\begin{figure}[h!]
  \centering

  \begin{tabular}{c c}
      \begin{subfigure}[h]{0.48\textwidth}
        \includegraphics[width=\linewidth]{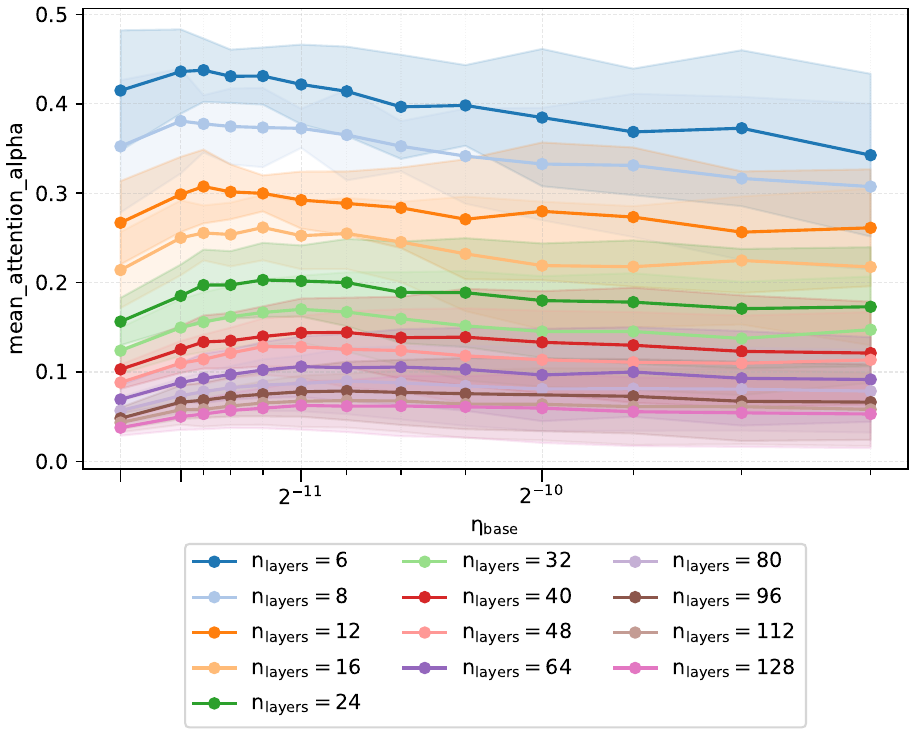}
      \end{subfigure}
             &
               \begin{subfigure}[h]{0.48\textwidth}
                 \includegraphics[width=\linewidth]{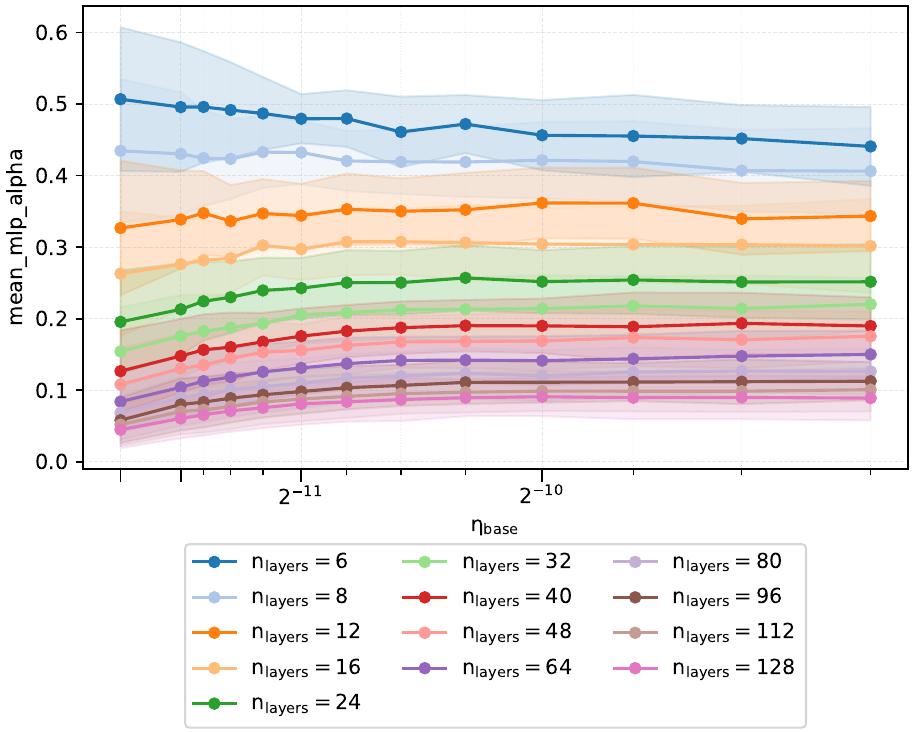}
               \end{subfigure}
      \\
      \begin{subfigure}[h]{0.48\textwidth}
        \includegraphics[width=\linewidth]{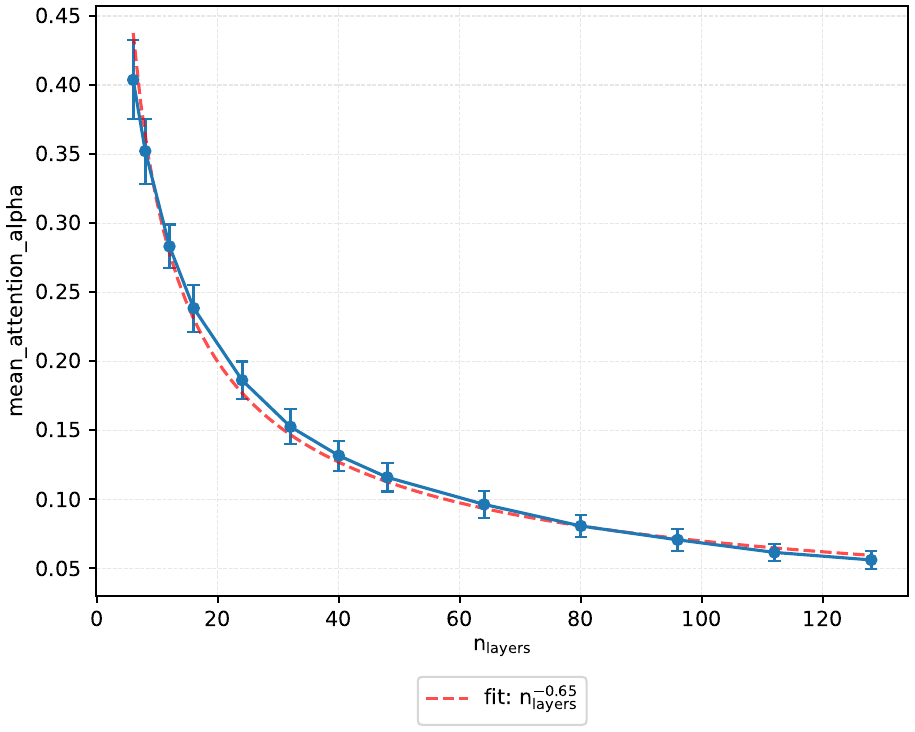}
      \end{subfigure}
             &
               \begin{subfigure}[h]{0.48\textwidth}
                 \includegraphics[width=\linewidth]{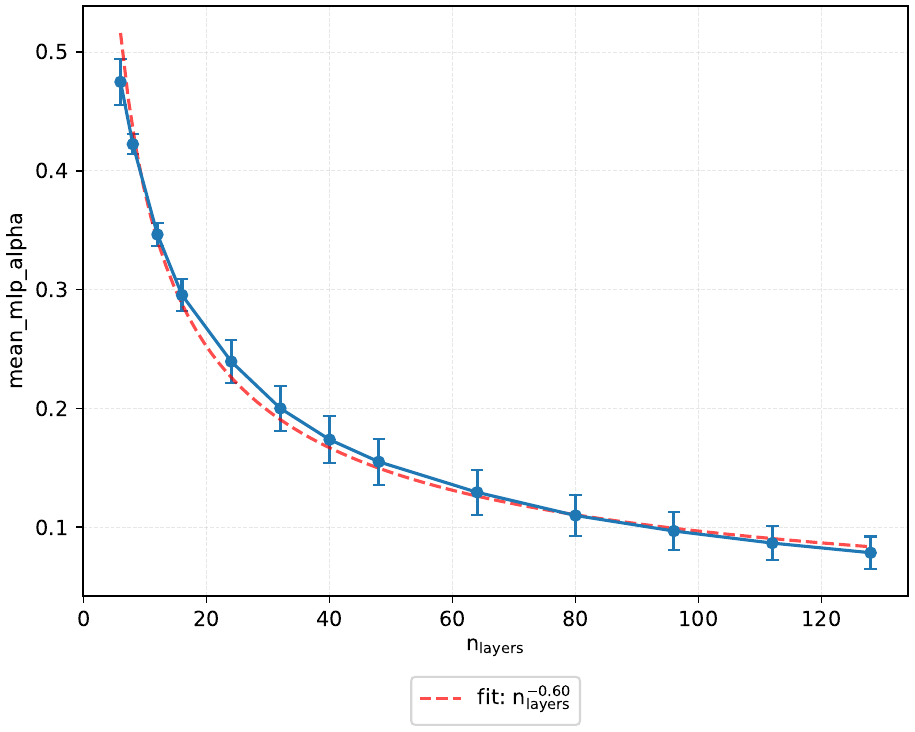}
               \end{subfigure}
    \end{tabular}

    \caption{
    Depth sweep of nGPT with $\mu$P-style corrections at a fixed iteration count, power law fit of $\valpha_A$ and $\valpha_M$ components
    (averaged over layers and component indices).
    }
    \label{fig:depth-sweep-ours-alpha-power-law}
  \end{figure}

\clearpage

\subsection{Alignment exponents for a wider model}\label{sec:alignment-exponents-for-wider-model}

Analogously to \cref{sec:alignment-exponents}, we plot in \cref{fig:alignment_val_l12h28,fig:alignment_val_loss_weighted_l12h28,fig:alignment_val_layer_dependence_l12h28}
the alignment exponents for a model with $\depth = 12$ and $\nheads = 28$.

\begin{figure}[t]
  \centering
  \begin{subfigure}[t]{0.32\textwidth}
    \centering
    \includegraphics[width=\textwidth]{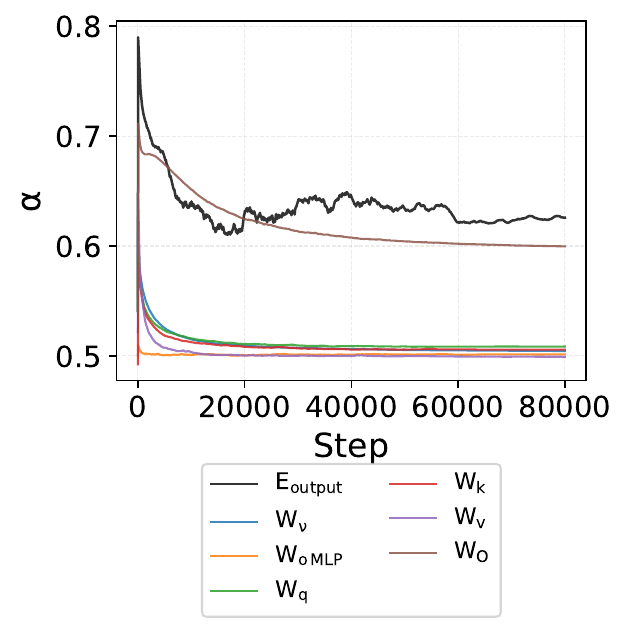}
    \caption{Alignment exponent $\alpha$}
  \end{subfigure}
  \hfill
  \begin{subfigure}[t]{0.32\textwidth}
    \centering
    \includegraphics[width=\textwidth]{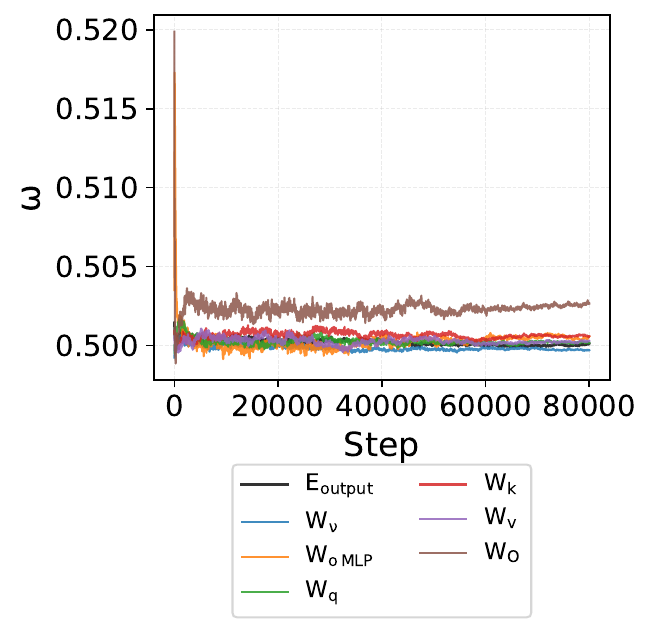}
    \caption{Alignment exponent $\omega$}
  \end{subfigure}
  \hfill
  \begin{subfigure}[t]{0.32\textwidth}
    \centering
    \includegraphics[width=\textwidth]{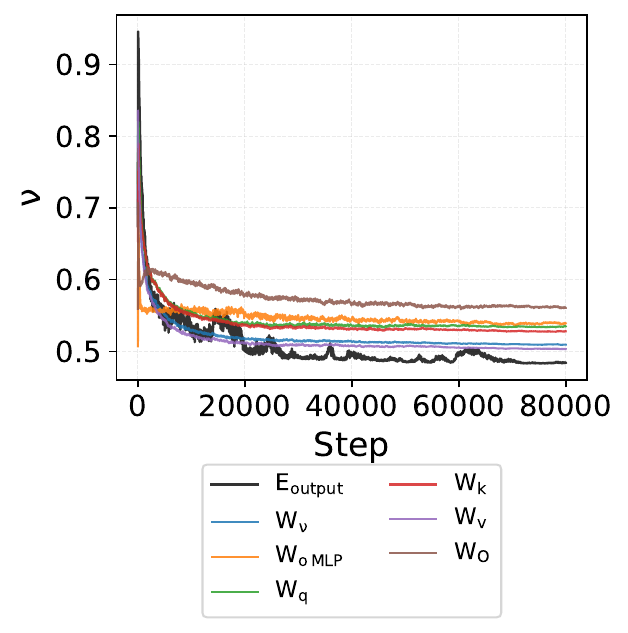}
    \caption{Alignment exponent $\nu$}
  \end{subfigure}
  \hfill
  \caption{Alignment exponents (\cref{def:alignment-exponents}) of $\nu$GPT with $\depth = 12$, $\nheads = 28$ on a fixed validation batch, averaged over layers.}
  \label{fig:alignment_val_l12h28}
\end{figure}

\begin{figure}[t]
  \centering
  \begin{subfigure}[t]{0.32\textwidth}
    \centering
    \includegraphics[width=\textwidth]{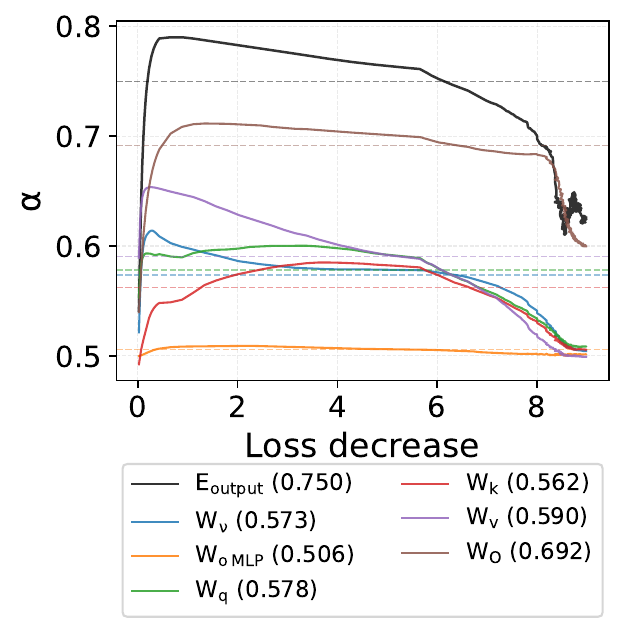}
    \caption{Alignment exponent $\alpha$}
  \end{subfigure}
  \hfill
  \begin{subfigure}[t]{0.32\textwidth}
    \centering
    \includegraphics[width=\textwidth]{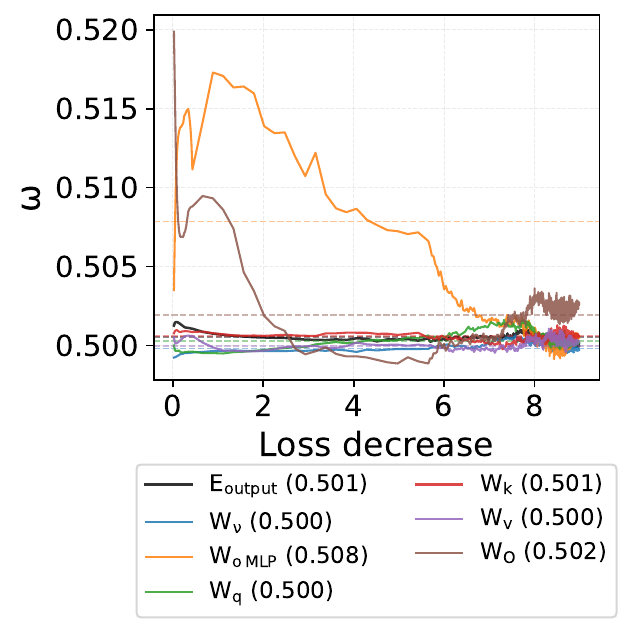}
    \caption{Alignment exponent $\omega$}
  \end{subfigure}
  \hfill
  \begin{subfigure}[t]{0.32\textwidth}
    \centering
    \includegraphics[width=\textwidth]{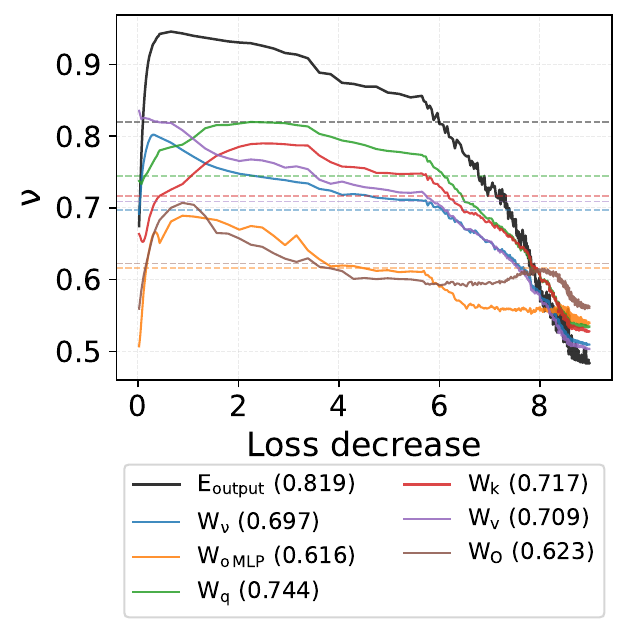}
    \caption{Alignment exponent $\nu$}
  \end{subfigure}
  \hfill
  \caption{Alignment exponents (\cref{def:alignment-exponents}) of $\nu$GPT with $\depth = 12$, $\nheads = 28$ on a fixed validation batch, averaged over layers,
   viewed as a function of loss decrease. The mid alignment assumption (\cref{def:full-no-mid-align}) matches the observations well.}
  \label{fig:alignment_val_loss_weighted_l12h28}
\end{figure}

\begin{figure}[t]
  \centering
  \begin{subfigure}[t]{0.32\textwidth}
    \centering
    \includegraphics[width=\textwidth]{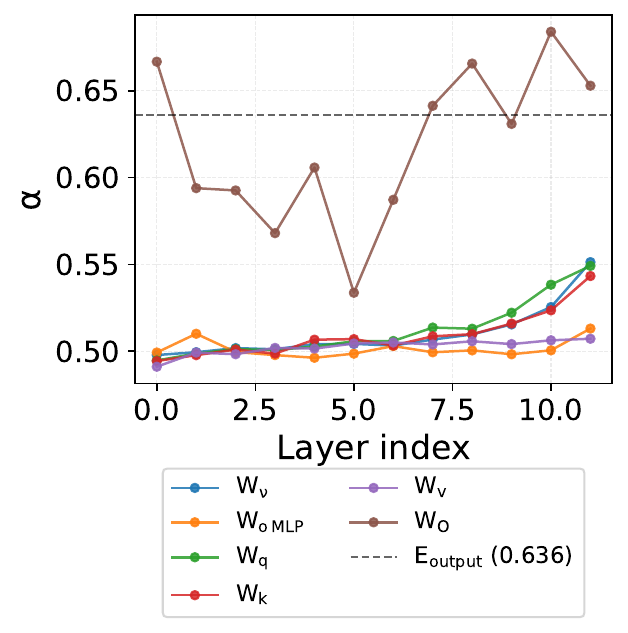}
    \caption{Alignment exponent $\alpha$}
  \end{subfigure}
  \hfill
  \begin{subfigure}[t]{0.32\textwidth}
    \centering
    \includegraphics[width=\textwidth]{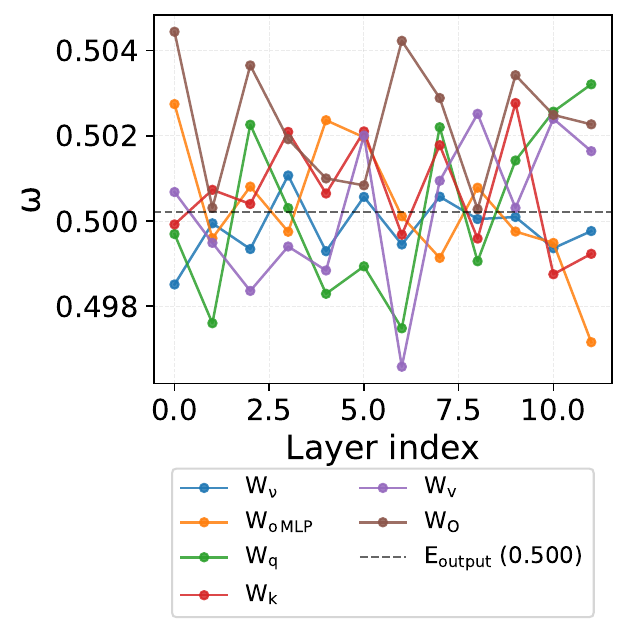}
    \caption{Alignment exponent $\omega$}
  \end{subfigure}
  \hfill
  \begin{subfigure}[t]{0.32\textwidth}
    \centering
    \includegraphics[width=\textwidth]{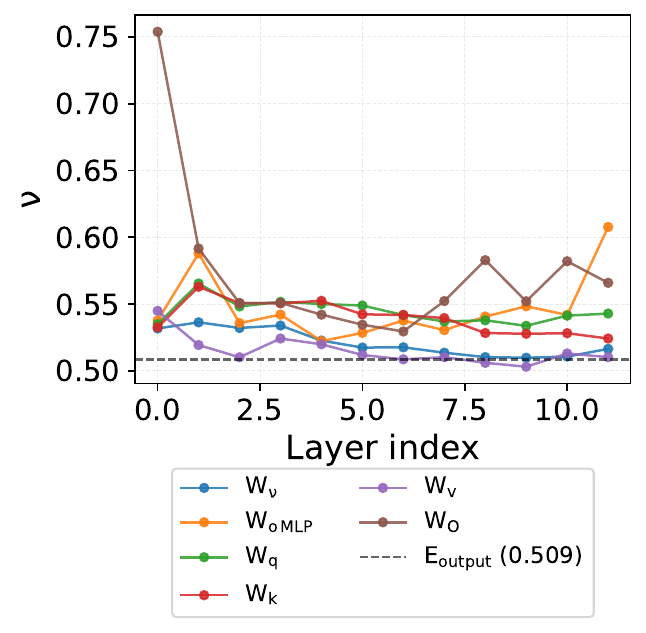}
    \caption{Alignment exponent $\nu$}
  \end{subfigure}
  \hfill
  \caption{Alignment exponents (\cref{def:alignment-exponents}) of $\nu$GPT with $\depth = 12$, $\nheads = 28$ on a fixed validation batch vs. layer index, averaged over steps.}
  \label{fig:alignment_val_layer_dependence_l12h28}
\end{figure}

\subsection{Width sweep at 20 tokens per parameter}\label{sec:width-sweep-at-20-tpp}
We train our version of $\nu$GPT at 20 tokens per parameter with token count corrections removed, scaling the number of heads.
The power law is again close to $(\text{iter. count})^{- 1 / 3}$ like in the sweep of a fixed model
(\cref{fig:width-sweep-ngpt-ours-without-tc-corrections}).

\begin{figure}[h!]
  \centering

  \begin{tabular}{c c}
    \begin{subfigure}[h]{0.48\textwidth}
      \includegraphics[width=\linewidth]{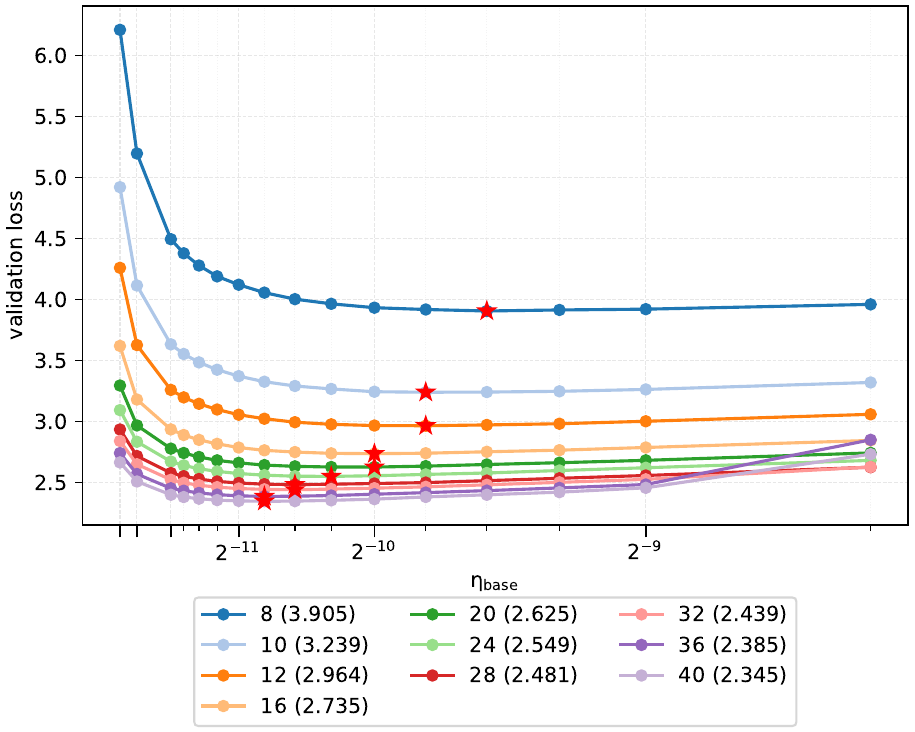}
    \end{subfigure}
    &
      \begin{subfigure}[h]{0.48\textwidth}
        \includegraphics[width=\linewidth]{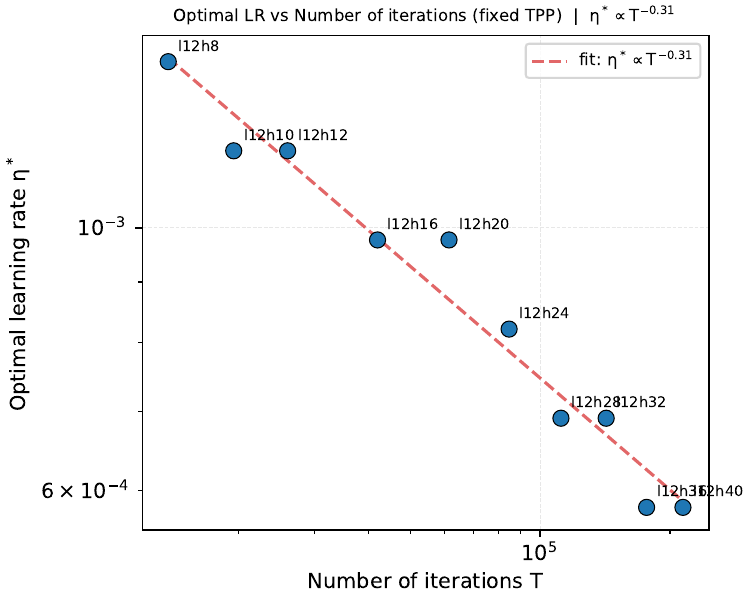}
      \end{subfigure}
  \end{tabular}
  \caption{A width sweep of $\nu$GPT models with different $\nheads$ (in the legend) at 20 tokens per parameter but without token count corrections: optimal peak learning rate also decreases at a law close to $(\text{iter. count})^{- 1 / 3}$, as in \cref{fig:steps-sweep-ngpt-ours}.
  }
  \label{fig:width-sweep-ngpt-ours-without-tc-corrections}
\end{figure}

\subsection{$\eta_{\text{output}}$ is too large}

The performance of each parametrization is quite sensitive to the learning rate applied to unembedding weights $\eta_{\text{output}}$.
Thus, consistently with \citet{everett2024scalingexponentsacross}, we find that significant improvements can be obtained by tuning the ratios
between $\eta_{\text{input}}$, $\eta_{\text{hidden}}$, $\eta_{\text{output}}$.
This tuning can be done on a small model and is not resource intensive (though we conduct and report experiments with different sizes).
The results of such tuning,
reported in \cref{fig:both-eta-input-output-sweep},
suggest that $\eta_{\text{output}}$ is too large in our setting, which does not necessarily impact learning transfer but does hurt performance.
Based on this, we propose a small modification (designed to be as simple as we could make it): multiplying $\eta_{\text{output}}$ by a tuned coefficient ($2^{-1}$ in our case).

\begin{figure}
  \centering
  \begin{tabular}{cc}
  \begin{subfigure}[h]{0.48\textwidth}
    \includegraphics[width=\linewidth]{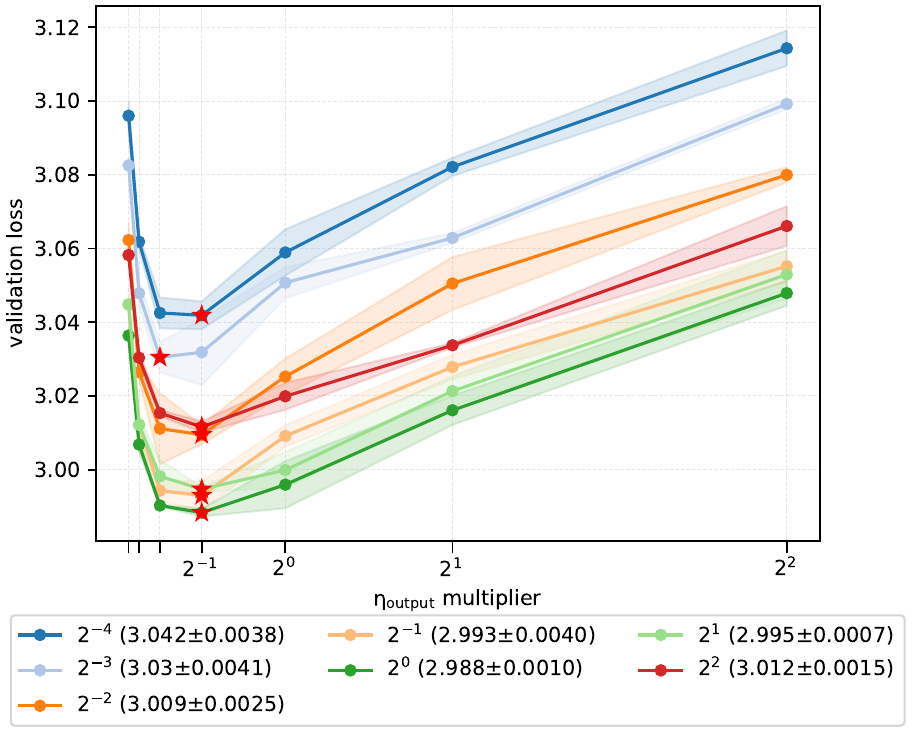}
    \caption{$\depth = \nheads = 10$}
  \end{subfigure}

  \begin{subfigure}[h]{0.48\textwidth} \includegraphics[width=\linewidth]{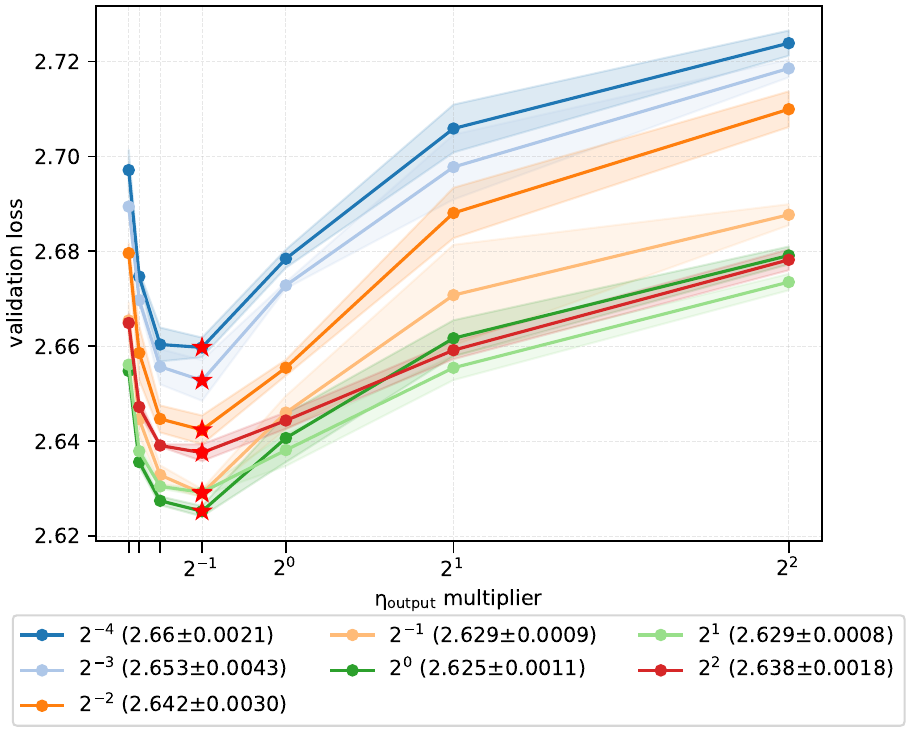}
    \caption{$\depth = \nheads = 16$}
  \end{subfigure}
  \end{tabular}
  \caption{Sweep of optimal $\eta_{\text{input}}$ (different curves) and $\eta_{\text{output}}$ (x-axis) multipliers for $\nu$GPT,
      showing that $\eta_{\text{output}}$ is too large; we multiply it by $2^{-1}$ for performance.}
  \label{fig:both-eta-input-output-sweep}
\end{figure}

\subsection{\mup or not \mup, that is the question (logits scaling)}\label{sec:mup-or-not-mup}

In this ablation, we investigate whether 'tis nobler\internalComment{joke alert} to have logits be $\Theta(\dmodel^{- 1 / 2})$ at the beginning of training with $\Theta(1)$ updates, as done
in \mup, or if they should be $\Theta(1)$ both at initialization and during training.
Specifically, we compare the two parametrizations in \cref{tab:mup-not-mup-param}.
Recall that under $\omega_{\text{output}} = 1$, \mup is the only parametrization satisfying
\cref{des:stab-init,des:feature-learn} for large-width neural networks;
however, our experiments (\cref{sec:alignment-exponents}) decisively show that the correct assumption is
$\omega_{\text{output}} = 1 / 2$, allowing for a different parametrization
in which $\eta_{\text{output}} \asymp \eta_{\text{hidden}}$ but $s_{\text{$z$,init}}$ needs to be scaled with width.
Although mid alignment (\cref{def:full-no-mid-align})
better matches measured average alignment exponents
and transfers better over width,
here we fix the full alignment assumption as in ``CompleteP'' to isolate the effect of only changing logits scaling,
and refer to this parametrization $\nu$GPT (full align), as opposed to $\nu$GPT.
The analogue of this parametrization is called ``standard'' and ``NTK''
(not $\mu$P) in \citet{everett2024scalingexponentsacross}\footnote{
  For clarity, the parametrizations they call ``standard'' and ``NTK'' are theoretically (under infinite precision) equivalent, and both names are unusual with respect to the literature so we avoid them.
}. They observe that such a choice outperforms \mup in terms of the best validation loss.
In \cref{fig:width-sweep-ngpt-mup-vs-muppr,fig:width-sweep-ngpt-mup-vs-muppr-tuned},
we observe that performance essentially matches in our setting when either not tuning $\eta_{\text{output}}$ for either parametrization or
tuning it separately for both.
Hence, we take no position as to which choice is better but choose
to scale logits $\Theta(1)$ at initialization (same as during training)
because the assumption $\omega_{\text{output}} = 1$ leading to $\mu$P is inconsistent with experiment, and hence there is no principled reason
to scale differently.

\begin{table}
  \centering
  \begin{NiceTabular}{c c c}
    \toprule
    \multirow{2}{*}{\textbf{Hyperparameter}} & \textbf{``CompleteP''} & \multirow{2}{*}{\textbf{$\nu$GPT (full align)}}\\
    & {\small with $\datacor$ corrections} & \\
    \midrule
    $\eta_{\text{base}}$ & $\eta_{\text{global}} {\color{mplblue} \datacor^{- \dataalph}}$ & $\eta_{\text{global}} {\color{mplblue} \datacor^{- \dataalph}}$ \\
    \midrule
    $\sigma^2_{\text{input}}$ & arbitrary & arbitrary \\
    $\eta_{\text{input}}$ & $\eta_{\text{base}} {\color{mplorange} \widthcor^{- 1 / 2}}$ & $\eta_{\text{base}} {\color{mplorange} \widthcor^{- 1 / 2}}$ \\
    \midrule
    $\sigma^2_{\text{hidden}}$ & arbitrary & arbitrary \\
    $\eta_{\text{hidden}}$ & $\eta_{\text{base}} {\color{mplorange} \widthcor^{-1}}$ & $\eta_{\text{base}} {\color{mplorange} \widthcor^{-1}}$ \\
    \midrule
    $\alpha_{\text{$A$,init}}$ & $0.05 \, {\color{mplgreen} \depthcor^{-1}}$ & $0.05 \, {\color{mplgreen} \depthcor^{-1}}$ \\
    $\alpha_{\text{$A$,scale}}$ & ${\color{mplpurple} 0.03}$ & ${\color{mplpurple} 0.03}$\\
    $\alpha_{\text{$M$,init}}$ & $0.05 \, {\color{mplgreen} \depthcor^{-1}}$ & $0.05 \, {\color{mplgreen} \depthcor^{-1}}$ \\
    $\alpha_{\text{$M$,scale}}$ & ${\color{mplpurple} 0.03}$ & ${\color{mplpurple} 0.03}$ \\
    \midrule
    $s_{\text{$qk$,init}}$ & 1 & 1 \\
    $s_{\text{$qk$,scale}}$ & ${\color{mplpurple} 0.03}$ & ${\color{mplpurple} 0.03}$ \\
    \midrule
    $s_{\text{$u$,init}} = s_{\text{$\nu$,init}}$ & 1 & 1 \\
    $s_{\text{$u$,scale}} = s_{\text{$\nu$,scale}}$ & 1 & 1 \\
    \midrule
    $s_{\text{$z$,init}}$ & 1 & ${\color{red} \widthcor^{1 / 2}}$ \\
    $s_{\text{$z$,scale}}$ & ${\color{mplpurple} 0.03}$ & ${\color{mplpurple} 0.03}$ \\
    \midrule
    $\sigma^2_{\text{output}}$ & arbitrary & arbitrary \\
    $\eta_{\text{output}}$ & $\eta_{\text{base}} {\color{mplorange} \widthcor^{- 1 / 2}}$ & $\eta_{\text{base}} {\color{mplorange} \widthcor^{{\color{red}- 1}}}$ \\
    \bottomrule
  \end{NiceTabular}
  \caption{
    The correct value $\omega_{\text{output}} = 1 / 2$ leads to two different ways to scale logits and $\eta_{\text{output}}$.
    \label{tab:mup-not-mup-param}
  }
\end{table}

\begin{figure}[h]
  \centering

  \begin{tabular}{c c}
  \begin{subfigure}[h]{0.48\textwidth}
    \includegraphics[width=\linewidth]{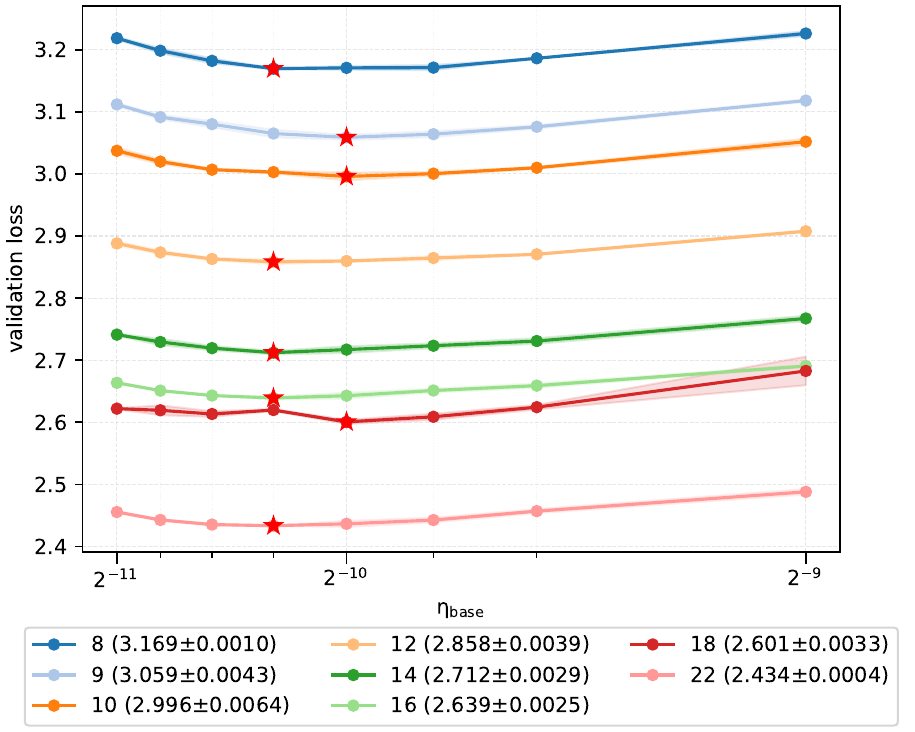}
    \caption{``CompleteP''\label{fig:width-sweep-ngpt-mup-vs-muppr-mup}}
  \end{subfigure}
    &
  \begin{subfigure}[h]{0.48\textwidth}
    \includegraphics[width=\linewidth]{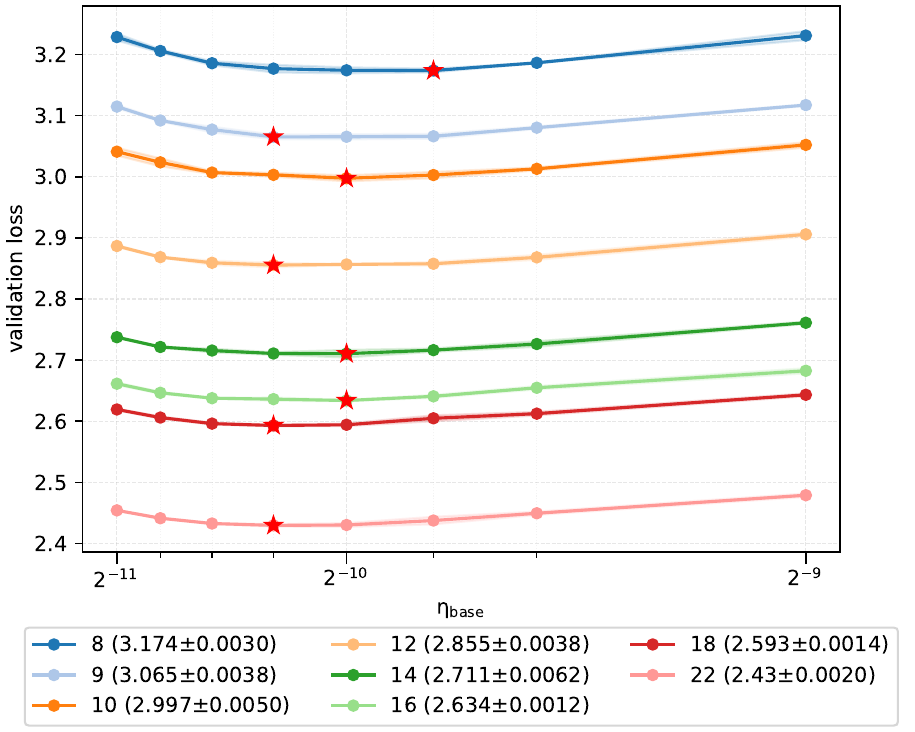}
    \caption{$\nu$GPT (full align)\label{fig:width-sweep-ngpt-mup-vs-muppr-muppr}}
  \end{subfigure}
  \end{tabular}

  \caption{Aspect ratio sweep of ``CompleteP'' and $\nu$GPT (full align) at 20 tokens per parameter
  }
  \label{fig:width-sweep-ngpt-mup-vs-muppr}
\end{figure}

\begin{figure}[h]
  \centering
  \begin{tabular}{c c}
    \begin{subfigure}[h]{0.48\textwidth}
      \includegraphics[width=\linewidth]{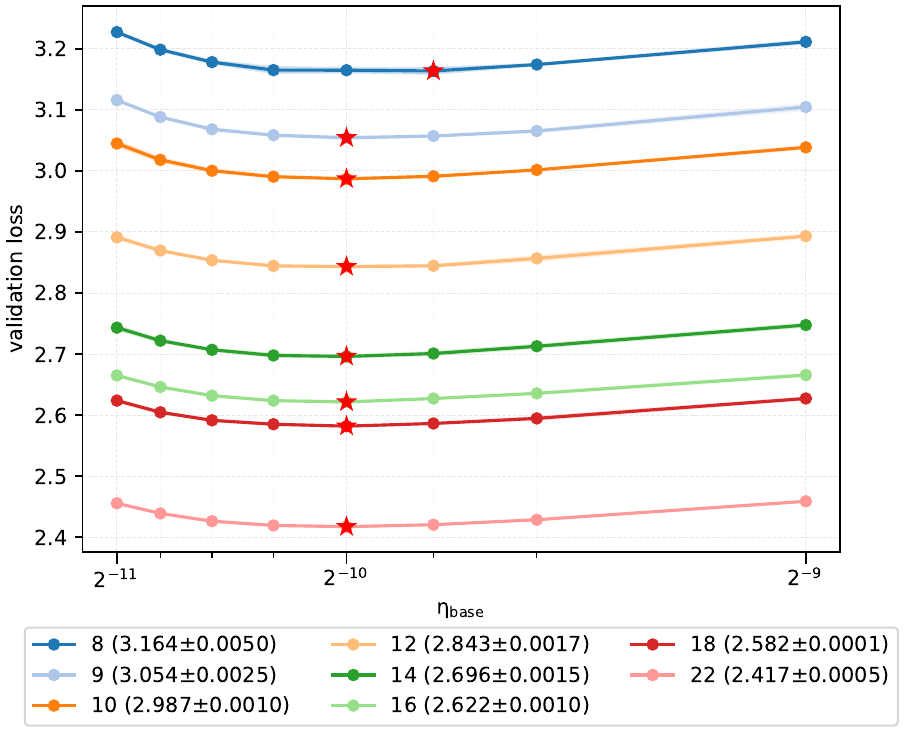}
      \caption{``CompleteP'' with tuned $\eta_{\text{output}}$\label{fig:width-sweep-ngpt-mup-vs-muppr-tuned-mup}}
    \end{subfigure}
    &
      \begin{subfigure}[h]{0.48\textwidth} \includegraphics[width=\linewidth]{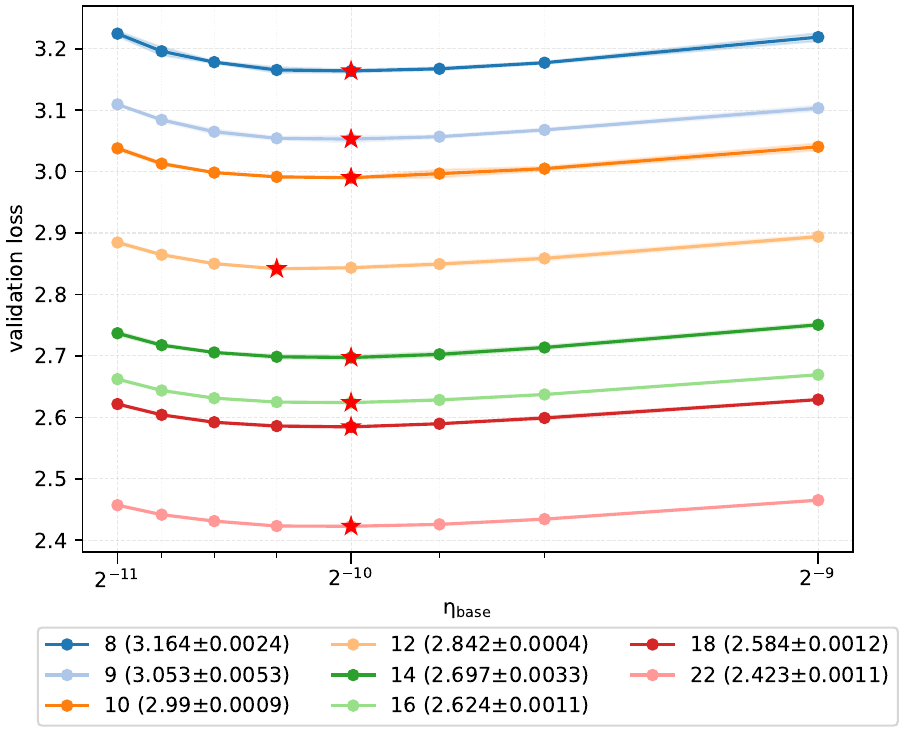}
        \caption{$\nu$GPT (full align) with tuned $\eta_{\text{output}}$\label{fig:width-sweep-ngpt-mup-vs-muppr-tuned-muppr}}
      \end{subfigure}
  \end{tabular}
  \caption{Aspect ratio sweep of ``CompleteP'' and $\nu$GPT (full align) with tuned $\eta_{\text{output}}$ at 20 tokens per parameter
  }
\label{fig:width-sweep-ngpt-mup-vs-muppr-tuned}
\end{figure}

\ifthenelse{\boolean{isInternal}}{
  \clearpage
  \newpage
\input{internal_experiments_repo}
}{}

\bibliographystyle{assets/plainnat}
\bibliography{ngpt_transfer}

\end{document}